\newtheorem{theorem}{Theorem}[section]
\DeclareMathOperator*{\argmin}{arg\,min}
\begin{document}

\title[COMBINEX]{COMBINEX: A Unified Counterfactual Explainer for Graph Neural Networks via Node Feature and Structural Perturbations}

\author{Flavio Giorgi}
\email{giorgi@di.uniroma1.it}
\affiliation{%
  \institution{Sapienza University of Rome\\Department of Computer Science}
  \city{Rome}
  \country{Italy}
}
\author{Fabrizio Silvestri}
\email{fsilvestri@diag.uniroma1.it}
\affiliation{%
  \institution{Sapienza University of Rome\\Department of Computer, Control and Management Engineerin}
  \city{Rome}
  \country{Italy}
}

\author{Gabriele Tolomei}
\email{tolomei@di.uniroma1.it}
\affiliation{%
  \institution{Sapienza University of Rome\\Department of Computer Science}
  \city{Rome}
  \country{Italy}
}

\begin{abstract}
Counterfactual explanations have emerged as a powerful tool to unveil the opaque decision-making processes of graph neural networks (GNNs). However, existing techniques primarily focus on edge modifications, often overlooking the crucial role of node feature perturbations in shaping model predictions. To address this limitation, we propose COMBINEX, a novel GNN explainer that generates counterfactual explanations for both node and graph classification tasks. Unlike prior methods, which treat structural and feature-based changes independently, COMBINEX optimally balances modifications to edges and node features by jointly optimizing these perturbations. This unified approach ensures minimal yet effective changes required to flip a model's prediction, resulting in realistic and interpretable counterfactuals. Additionally, COMBINEX seamlessly handles both continuous and discrete node features, enhancing its versatility across diverse datasets and GNN architectures. Extensive experiments on real-world datasets and various GNN architectures demonstrate the effectiveness and robustness of our approach over existing baselines.
\end{abstract}

\begin{CCSXML}
<ccs2012>
   <concept>
       <concept_id>10010147.10010257</concept_id>
       <concept_desc>Computing methodologies~Machine learning</concept_desc>
       <concept_significance>500</concept_significance>
       </concept>
   <concept>
       <concept_id>10010147.10010257.10010293.10010294</concept_id>
       <concept_desc>Computing methodologies~Neural networks</concept_desc>
       <concept_significance>500</concept_significance>
       </concept>
   <concept>
       <concept_id>10010147.10010178</concept_id>
       <concept_desc>Computing methodologies~Artificial intelligence</concept_desc>
       <concept_significance>500</concept_significance>
       </concept>
 </ccs2012>
\end{CCSXML}

\ccsdesc[500]{Computing methodologies~Machine learning}
\ccsdesc[500]{Computing methodologies~Neural networks}
\ccsdesc[500]{Computing methodologies~Artificial intelligence}

\keywords{Explainable AI, Counterfactual explanations, Graph explanations, GNN explanations}


\maketitle

\section{Introduction}
Recent breakthroughs in deep learning have propelled significant advancements in artificial intelligence (AI) systems across a wide array of scientific and non-scientific fields. From engineering to social sciences, many areas of human knowledge have greatly benefited from these innovations. However, despite the excitement surrounding the potential of deep learning, growing concerns about the explainability and interpretability of these complex models persist among both the public and researchers.
Moreover, explainability is not only crucial for end-users but also for regulators and policymakers. For instance, the European Union has been actively working on regulations such as the Artificial Intelligence Act (AI Act)~\cite{european2023artificial}, which includes provisions for the ``\textit{right to explanation}''. This regulation requires that individuals have the right to obtain an explanation of decisions made by automated AI systems. 

In response to these needs, considerable efforts have been made to establish the foundations of Explainable Artificial Intelligence \cite{dovsilovic2018explainable,angelov2021explainable} (XAI). 
Among the various XAI techniques proposed, \textit{counterfactual explanations} (CFEs) have emerged as one of the most promising methods for explaining model predictions~\cite{stepin2021survey}. The primary goal of a CFE is to elucidate a model's prediction for a given instance by identifying minimal changes to the input features that would change the model's output. 
Therefore, CFEs are designed to answer ``what if'' questions, helping users comprehend the inner logic of a complex model in the form: ``\textit{If A had been different, B would \textbf{not} have occurred}''. This capability is particularly valuable in sensitive domains such as finance, healthcare, and justice, where understanding the reasoning behind a model's prediction is essential for building trust and ensuring accountability.
Moreover, this approach not only aids in understanding the model's decision-making process but also provides \textit{actionable} insights for users.


Existing CFE methods have played a crucial role in interpreting and validating predictions from various machine learning models~\cite{ribeiro2016should,lundberg2017unified,tolomei2017interpretable,chen2022relax}. Recently, however, there has been a growing need to extend these techniques to accommodate diverse data types and model architectures. Among these, Graph Neural Networks (GNNs) have emerged as particularly effective for tasks involving graph-structured data, such as node classification and link prediction. GNNs have delivered significant benefits across multiple sectors; for instance, in the financial industry, they underpin (semi-)automated fraud detection systems~\cite{10.1145/3589334.3645673}. Furthermore, advancements in GNNs have broadened their applicability to fields like chemistry and biology. In Wong et al.~\cite{Wong2023DiscoveryOA}, for example, GNNs were employed to predict the chemical properties of molecules, thereby bypassing the need for costly and time-intensive experimental screenings of large chemical libraries. Due to the unique characteristics of GNNs, developing methods to explain their predictions is, therefore, critical. 

Inspired by CF-GNNExplainer~\cite{lucic2022cf}, we propose a unified framework, COMBINEX, to find the optimal counterfactual explanations for GNN models.
Unlike prior methods, which treat structural and feature-based changes independently, COMBINEX balances modifications to edges and node features by jointly optimizing these perturbations (see Figure~\ref{fig:sparse}).
\begin{figure}
    \centering
    \includegraphics[width=1\linewidth]{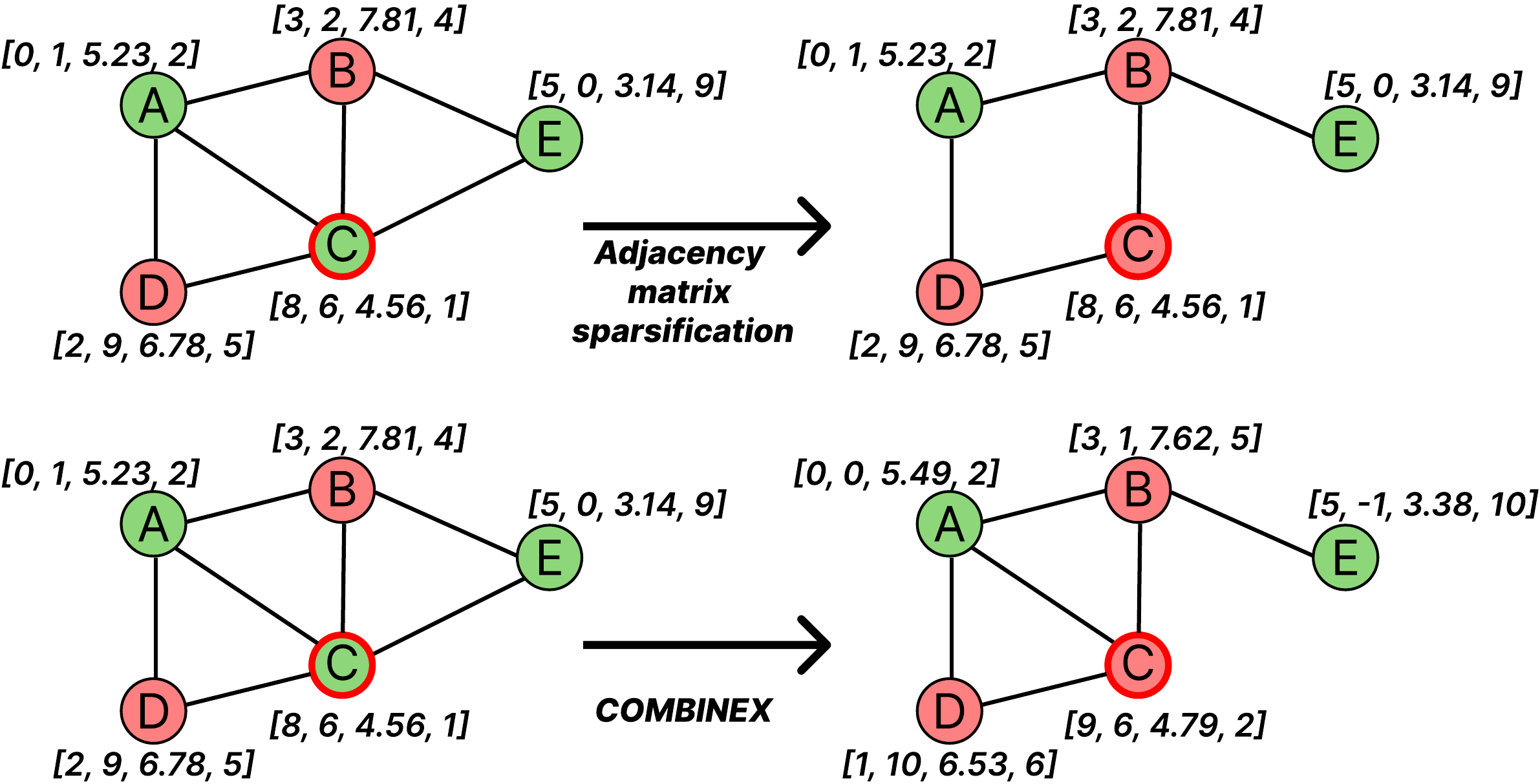}
    \caption{Comparison of two approaches for generating counterfactual explanations in Graph Neural Networks (GNNs): CF-GNNExplainer vs. COMBINEX. CF-GNNExplainer (\textit{top}) modifies the graph structure by perturbing the adjacency matrix through edge removal. COMBINEX (\textit{bottom}) takes a unified approach, balancing both node feature and structural perturbations to find optimal counterfactual explanations.}
    \label{fig:sparse}
\end{figure}
Through extensive experiments, we show that modifying node features improves standard quality metrics for explanations
across many different datasets. Additionally, we find that the model training degree has a direct impact on the performance of some of the explainers we evaluated. Our contribution can be summarized as follows:
\begin{itemize}
  \item We propose \textbf{COMBINEX}, a novel counterfactual explainer for Graph Neural Networks (GNNs) that generates explanations by perturbing both node features and graph structure, introducing a \textbf{stochastic optimization framework} that efficiently finds minimal perturbations required to alter model predictions while preserving interpretability.
    \item We conduct extensive experiments on multiple real-world datasets, demonstrating that COMBINEX outperforms existing counterfactual explainers in terms of \textit{validity}, \textit{fidelity}, and \textit{sparsity}.
    \item We provide the source code of our method at the following GitHub repository: \url{https://github.com/flaat/COMBINEX}.
\end{itemize}
The remainder of this paper is structured as follows. In Section\ref{sec:related}, we discuss related work. Section~\ref{sec:formulation} covers background concepts. We introduce our proposed method (COMBINEX) in Section~\ref{sec:method}, which we extensively validate in Section~\ref{sec:experiments}. Section~\ref{sec:feasibility} discusses the feasibility of our method. Finally, we draw the conclusions in Section~\ref{sec:conclusion}.

\section{Related Work}
\label{sec:related}
Recently, counterfactual explanations have become popular for explaining black-box models like GNNs. In general, many different approaches address the problem of generating a counterfactual example given a factual instance and a predictive model.

A number of techniques have been developed to explain prediction for node classification tasks. CF-GNNExplainer \cite{lucic2022cf} is a perturbation-based method that uses stochastic optimization to find a counterfactual graph that changes the model's classification of a given node. The optimization process trains a perturbation matrix that modifies the graph structure, eliminating edges until the prediction changes. 
In RCE-GNN \cite{bajaj2021robust}, the authors model the decision logic of a GNN using multiple decision regions. A set of linear decision boundaries of the GNN induces each region. The linear decision boundaries of the decision region capture the common decision logic on all the graph instances inside the decision region. Exploring the common decision logic encoded in the linear boundaries, it is possible to produce counterfactual samples. Using the linear boundaries of the decision region, they build a loss function that is used to train a neural network that generates a counterfactual explanation for an oracle, ensuring that the counterfactual sample lies in the decision region.
GNN-MOeXP \cite{liu2021multi} is a multi-objective factual-based explanation method for GNN node classification tasks. GNN-MOExp imposes counterfactual relevance to its factual explanation subgraphs. It looks for a subgraph in the original instance that optimizes factual and counterfactual features. GNN-MOExp comes with several limitations that limit the expressiveness of the produced counterfactual: (1) the factual subgraphs are required to be acyclic, and (2) the explanation size is specified a priori.
The authors in CFF \cite{tan2022learning} build an optimization framework to get GNN explanations. The framework integrates counterfactual and factual reasoning objectives: the counterfactual objective maintains edges relevant to the explanation, while the factual objective ensures that the extracted explanation contains sufficient information. 
The UNR-Explainer \cite{kangunr}  instead generates counterfactual (CF) explanations for unsupervised node representation learning by identifying subgraphs whose perturbation significantly alters a node’s \textit{top-k} nearest neighbors in the embedding space.
Generative AI models like autoencoders and diffusion models have been widely used to generate counterfactual samples to explain oracle. To this day, there are several generative-based counterfactual explainers. CLEAR \cite{ma2022clear} is a generative VAE-based counterfactual explainer that uses variational autoencoders to generate counterfactual explanations on graphs for graph-level prediction models. Another approach, like D4Explainer \cite{chen2023d4explainer}, instead uses discrete diffusion models to generate counterfactual graphs by means of a discrete diffusion process on the adjacency matrix.

The method presented in this work differs from existing approaches as it is the first unified framework that 
balance edge and node feature perturbations.
\vspace{-1mm}
\section{Background}\label{sec:formulation}

\noindent \textbf{\textit{Graph Neural Networks.}}
Graph Neural Networks (GNNs) extend deep learning to graph-structured data. 
Formally, let $G=(V,E)$ be a graph with $n$ nodes $V$, and $m$ edges $E$.
The structure of $G$ is encoded by its adjacency matrix $\mathbf{A} \in \lbrace 0,1 \rbrace^{n \times n}$, where $\mathbf{A}_{i,j}$ = 1 iff $(i, j) \in E$. 
Moreover, we assume there exists a feature matrix $\mathbf{X} \in \mathbb{R}^{n \times k}$, which associates features to nodes of $G$.
Generally speaking, a GNN $g$ learns a hidden representation of nodes in the graph (i.e., a node embedding). Such representation is, in turn, used for downstream tasks of interest like node classification, link prediction, and graph classification \cite{wu2022graph}.
GNNs generate node embeddings through a process known as message passing, where a node's features are iteratively updated based on the features of its neighbors.
Formally, let $\mathbf{h}^l_u$ denote the embedding of node $u\in V$ at the $l$-th layer of $g$ and $\mathcal{N}_u$ the set of $u$'s neighbors. The updated node embedding for $u$ at layer $l+1$ is then computed as follows.
\[
\mathbf{h}_u^{(l+1)} = g(\mathbf{A}^{(l+1)}_u,\mathbf{X}^{(l+1)}_u) =\sigma \left( \text{AGG} \left( \left\{ \mathbf{h}_v^{(l)} : v \in \mathcal{N}(u) \right\} \right), \mathbf{h}_u^{(l)} \right),
\]
where: $\mathbf{A}^{l+1}_v$ and $\mathbf{X}^l_v$ are the adjacency matrix and the feature matrix of the subgraph $G^l_u$ of $G$, induced by $u$ and its $l$-hop neighbors; \(\text{AGG}(\cdot)\) is a function combining neighbor embeddings (e.g., sum, mean, or attention); \(\sigma(\cdot)\) is an activation function like ReLU. 



\noindent \textbf{\textit{Counterfactual Explanations.}}
The general formulation of the counterfactual explanation problem in a classification task can be expressed as the following optimization problem. Given an input sample $\boldsymbol{x}$ and a classifier $f$ -- hereinafter referred to as \textit{oracle} -- the objective is to find a \textit{counterfactual sample} $\boldsymbol{x'}$ such that $f(\boldsymbol{x}) \neq f(\boldsymbol{x'})$ while minimizing the distance  $d(x, x')$. 
Formally:
\begin{equation}
\label{eq:cf-train}
\begin{aligned}
\boldsymbol{x'} = \argmin_{\boldsymbol{\widetilde{x}}}  &~d(\boldsymbol{x}, \boldsymbol{\widetilde{x}})\\
\text{ s.t.: } &f(\boldsymbol{x}) \neq f(\boldsymbol{\widetilde{x}}).
\end{aligned}
\end{equation}
The function $d$ enforces similarity between the counterfactual sample $\boldsymbol{x'}$ and the original factual sample $\boldsymbol{x}$.
Note that the formulation above is general enough and also encompasses a \textit{targeted} version of the counterfactual explanation problem, where the objective is not only to ensure $f(\boldsymbol{x}) \neq f(\boldsymbol{\widetilde{x}})$ but also to enforce a specific target prediction, i.e., $f(\boldsymbol{\widetilde{x}}) = y_t$.

\section{Proposed Method}
\label{sec:method}
\subsection{Problem Formulation}
Tackling the counterfactual explanation problem outlined in (\ref{eq:cf-train}) within the context of GNNs -- where the input instance is a graph -- introduces unique challenges.
To formalize this, we follow the notation used by Romero et al.~\cite{Prado_Romero_2023}, replacing the original input instance $\boldsymbol{x}$ with the graph $G(V, E)$ and the counterfactual sample $\boldsymbol{x'}$ as $G'(V', E')$.
Since $G$ and $G'$ can be represented by their adjacency and node feature matrices, respectively, the (targeted) counterfactual explanation problem for GNNs can be rewritten as follows:
\begin{equation}
\label{eq:cf-gnn-train}
\begin{aligned}
\mathbf{A'},\mathbf{X'} = \argmin_{\mathbf{\widetilde{A}},\mathbf{\widetilde{X}}}  &~\Big\{d_{\text{topology}}(\mathbf{A}, \mathbf{\widetilde{A}}) +  d_{\text{features}}(\mathbf{X}, \mathbf{\widetilde{X}})\Big\}\\
\text{ s.t.: } &f(g(\mathbf{A}, \mathbf{X})) \neq f(g(\mathbf{\widetilde{A}}, \mathbf{\widetilde{X}})) = y_t,
\end{aligned}
\end{equation}
where $d_{\text{topology}}$ captures the structural distance between the original graph $G$ and its counterfactual $G'$, $d_{\text{features}}$ measures the magnitude of node feature perturbation between $G$ and $G'$, $f$ is a classifier representing a downstream graph-related task, and $y_t$ denotes the desired target prediction for the counterfactual example.
Note that the input to the classifier $f$ can be any representation produced by the GNN $g$. For instance, if $f$ operates on a single node embedding, the task corresponds to node classification. If $f$ processes the entire graph representation, it performs a graph classification task. Finally, if $f$ takes two node embeddings as input, it may determing whether the nodes are connected by a link or not, i.e., edge classification.

The counterfactual optimization problem in (\ref{eq:cf-gnn-train}) can be directly translated into a loss function ($\mathcal{L}_{total}$) as follows:
\begin{equation}
\label{eq:loss}
\mathcal{L}_{total} = \mathcal{L}_{CF} + (1 - \alpha) \mathcal{L}_E + \alpha \mathcal{L}_X, 
\end{equation}
where: $\mathcal{L}_{CF}$ penalizes when the counterfactual goal is \textit{not} met, i.e., when the modified input instance does \textit{not} lead to an actual classification change; $\mathcal{L}_{E}$ that enforces minimal structural perturbations of the input graph; $\mathcal{L}_{X}$ that tries to control the magnitude of node feature perturbations. 

Since we focus on classification tasks only, the first term ($\mathcal{L}_{CF}$) can be expressed as $\eta\mathcal{L}_{CE}$, where $\mathcal{L}_{CE}$ is the cross-entropy loss between the original and the counterfactual predictions and  $\eta$ is an indicator variable ensuring this term is considered only when the classification change has not yet occurred.  The graph structural loss ($\mathcal{L}_E$) is defined as the sum of absolute differences between the original adjacency matrix and the perturbed one. The node feature loss ($\mathcal{L}_X$) is composed of two terms: L1 loss for discrete features, ensuring minimal modifications while preserving categorical interpretability and Mean Squared Error (MSE) for continuous features, penalizing large deviations while allowing smooth gradient-based optimization.
Finally, $\alpha$ is an adjustable trade-off hyperparameter that regulates the balance between feature and structural modifications. 
This \textit{joint} formulation ensures that both types of perturbations remain minimal while still enforcing the desired classification change.
The impact of $\alpha$ will be further analyzed in Section~\ref{subsec:alpha}.

To optimize the loss function defined in (\ref{eq:loss}) using gradient-based methods, we introduce two differentiable perturbation matrices. The first, the node feature perturbation matrix $\mathbf{P}$, is responsible for modifying node features, while the second, the edge perturbation matrix 
$\mathbf{EP}$, governs changes in the graph topology by modifying edge values.

Given that both structural and node feature data in graphs often consist of discrete values, we adopt the approach proposed by Lucic et al.~\cite{lucic2022cf} to preserve differentiability. Specifically, we apply a 
$\tanh$-based transformation to discrete node features and a sigmoid activation to edge perturbations. This ensures that modifications remain within valid bounds while allowing gradients to propagate effectively during optimization. The differentiable versions of these matrices are used to update model parameters via backpropagation, whereas their corresponding thresholded (non-differentiable) versions are ultimately employed to generate the final counterfactual examples.

\subsection{The COMBINEX Algorithm}
In this section, we detail our proposed counterfactual explanation algorithm (Algorithm \ref{alg:one}), which solves the objective defined in (\ref{eq:loss}) via gradient-based optimization.

The algorithm takes as input a graph $G(V, E)$, a pre-trained GNN model $g$ with fixed parameters, an integer $k$ representing the maximum number of optimization epochs, a target class $y_t$, a vector $\mathbf{M}_d$ serving as a mask for discrete features, and a learning rate $\gamma$. 

The procedure begins by initializing the perturbation matrices for node features and edges, denoted as $\mathbf{P}^0$ and $\mathbf{EP}^0$, respectively. It then computes the initial prediction $y$ for the input graph $G(V, E)$ and sets the epoch counter to 1 (lines 1--2). Subsequently, the algorithm extracts the edge matrix and node feature matrix from the original graph (lines 3--5) and initializes the variable that will store the counterfactual sample. Additionally, the continuous feature mask is derived from the discrete feature mask $\mathbf{M}_d$, which is used to selectively enable or disable specific features during computation.

The optimization process begins in line 6. In line 7, we get both the edge and node perturbations using the function \FuncSty{get\_pert} (see Algorithm \ref{alg:perturbation}). 
At line 10, the algorithm computes the loss function using the \FuncSty{get\_loss}
function (Algorithm \ref{alg:loss}).
Finally, from lines 12--20, the total loss is computed, and the perturbation matrices are updated to minimize it. If the model's prediction changes with the newly perturbed node features and edges while achieving the lowest loss observed thus far, the algorithm identifies a valid counterfactual sample.
Below, we describe the algorithms used to compute feature and edge perturbations as well as the loss function that drives the optimization process at the core of COMBINEX. 

\SetKwComment{Comment}{/* }{ */}

\begin{algorithm}
\SetAlgoLined
\caption{\textbf{COMBINEX}}\label{alg:one}
\KwIn{$G(V, E)$: Graph to explain, $g$: Oracle, $k$: maximum number of epochs, $y_t$:  target class, $\mathbf{M_d}$ discrete features mask vector, $\gamma$: learning rate, $\mathbf{R}_{min}$ and $\mathbf{R}_{max}$: vectors containing lower and upper bound for each feature}
\KwOut{The counterfactual sample $G'$}

$\mathbf{P}^0 \gets \mathbf{0}^{n\times f}$, $\mathbf{EP}^0 \gets \mathbf{1}^{n \times 1}$\;
$y \gets g(G(V, E))$, $epoch \gets 1$\;
$\mathbf{E} \gets G(E), \mathbf{X} \gets G(V)$\;
$G' \gets \emptyset, \mathcal{L}_{best} \gets +\infty$\;
$ \mathbf{M_c} \gets {\textbf{1}} - \mathbf{M_d}$\;

\While{$epoch \leq k $}{

    $\mathbf{E}^{nd}_p, \mathbf{X}^{nd}_p, \mathbf{E}_p, \mathbf{X}_p, \mathbf{X}_d, \mathbf{X}_c  \gets \FuncSty{get\_pert}(\mathbf{R}_{min}, \mathbf{R}_{max},\mathbf{X}, \mathbf{E}, \mathbf{M}_d, \mathbf{M}_c, \mathbf{P}^t, \mathbf{EP}^t)$\;
    $y_{new} \gets g(G(\mathbf{X}_p, \mathbf{E}_p))$\; 
    $y^{nd}_{new} \gets g(G(\mathbf{X}^{nd}_p, \mathbf{E}^{nd}_p))$\;
    $\mathcal{L}_{total} \gets \FuncSty{get\_loss}(y_{new}, y_t, \mathbf{EP}^t, \mathbf{X}, \mathbf{M}_d, \mathbf{X}_d, \mathbf{M}_c, \mathbf{X}_c, epoch)$\;
    $\alpha \gets \FuncSty{get\_alpha}(epoch)$\; 
    $\mathbf{P}^{(t+1)}_x \gets \mathbf{P}^{(t)}_x - \gamma \nabla_{P, EP}(\mathcal{L}_{total})$\;
    $\mathbf{EP}^{(t+1)}_x \gets \mathbf{EP}^{(t)}_x - \gamma \nabla_{P, EP} (\mathcal{L}_{total})$\;    

    \If{$y_{new} = y_t \wedge \mathcal{L} < \mathcal{L}_{best}$}{
        $G' \gets G(\mathbf{X}_p, \mathbf{E}_p)$\;
        $\mathcal{L}_{best} \gets \mathcal{L}_{total}$
        
    }
    $epoch \gets epoch + 1$
  }
  \textbf{return} $G'$
\end{algorithm}

\begin{algorithm}
\SetAlgoLined
\caption{Loss Computation (\FuncSty{get\_loss})}\label{alg:loss}
\KwIn{$y^{nd}_{new}$: the new prediction, $y_t$: counterfactual target class, $\mathbf{EP}$: The edge perturbation vector, $\mathbf{X}$: The original nodes features, $\mathbf{M}_d$: discrete features mask,  $\mathbf{X}_d $: The discrete perturbed nodes features, $\mathbf{M}_c$: continuous features mask, $\mathbf{X}_c$: perturbed continuous features, $epoch$: the current epoch}
\KwOut{The loss value $\mathcal{L}_{total}$}

    $\eta \gets \mathbbm{1}[argmax(y_{new}) = y_t]$\;
    $\mathcal{L}_{CE} \gets \text{CE}(y_{new}, y_t)$\;
    $\mathcal{L}_{E} \gets \sum |\sigma(\mathbf{EP}_x) - \mathbf{1}|$\; 
    $\mathcal{L}_{d} \gets \text{L1}(\mathbf{X} \odot \mathbf{M}_d, \mathbf{X}_d)$\;
    $\mathcal{L}_{c} \gets \text{MSE}(  \mathbf{X} \odot \mathbf{M}_c, \mathbf{X}_c)$\;
    $\mathcal{L}_{X} \gets \mathcal{L}_{d} + \mathcal{L}_{c}$\;
    $\alpha \gets \FuncSty{get\_alpha}(epoch)$\; 
    $\mathcal{L}_{total} = \eta \mathcal{L}_{CE} + (1 - \alpha)  \mathcal{L}_E + \alpha \mathcal{L}_X $\; 

  \textbf{return} $\mathcal{L}_{total}$
\end{algorithm}

\smallskip
\noindent \textbf{\textit{Features and Edge Perturbation.}} 
Algorithm~\ref{alg:perturbation} outlines the perturbation mechanism applied to both node features and edges. The process begins by computing a scaled perturbation vector $\mathbf{P}^{(t)}_s$, to perturb the discrete node features, using a $\tanh$-based transformation, ensuring that the values remain within the predefined feature bounds (line 1). Next, discrete features are updated by applying the Hadamard product between the scaled perturbation and the discrete feature mask $\mathbf{M}_d$, followed by summation with the original node features $\mathbf{X}$, and subsequently clamped within the feature range (line 2). Similarly, continuous features are updated without any transformation on the perturbation and are clamped accordingly (line 3). The final perturbed node feature matrix is obtained by summing the contributions of discrete and continuous feature updates, yielding $\mathbf{X}_p = \mathbf{X}_d + \mathbf{X}_c$ (line 4). 
\begin{algorithm}[]
\SetAlgoLined
\caption{Feature and Edge Perturbation (\FuncSty{get\_pert})}\label{alg:perturbation}
\KwIn{$\mathbf{R}_{min}$ and $\mathbf{R}_{max}$: Vectors containing lower and upper bound for each feature, $\mathbf{X}$: Original nodes features matrix, $\mathbf{E}$: Original edges matrix, $\mathbf{M}_d$: discrete features mask, $\mathbf{M}_c$: continuous features mask, $\mathbf{P}^{(t)}$: Node feature perturbation vector at the $t$-th iteration, $\mathbf{EP}^{(t)}$: Edge  perturbation vector at the $t$-th iteration}
\KwOut{$\mathbf{E}^{nd}_p$: the non-differentiable edge matrix, $\mathbf{X}^{nd}_p$: the non-differentiable nodes features matrix, $\mathbf{E}_p$: the differentiable edge matrix, $\mathbf{X}_p$:  the differentiable nodes features matrix, $\mathbf{X}_d$: perturbed discrete features, $\mathbf{X}_c$: perturbed continuous features}

    $\mathbf{P}^{(t)}_s = (\mathbf{R}_{min} + (\mathbf{R}_{max} - \mathbf{R}_{min})) *\tanh(\mathbf{P}^{(t)})$\;
    $\mathbf{X}_{d} \gets  \FuncSty{clamp}(\mathbf{M}_d \odot \tanh(\mathbf{P}_s^{(t)}) + \mathbf{X}, \mathbf{R}_{min}, \mathbf{R}_{max})$\;
    $\mathbf{X}_{c} \gets  \FuncSty{clamp}(\mathbf{M}_c \odot \mathbf{P}^{(t)} + \mathbf{X}, \mathbf{R}_{min}, \mathbf{R}_{max})$\;
    $\mathbf{X}_p \gets \mathbf{X}_d + \mathbf{X}_c$\;
    $\mathbf{E}_p \gets  \mathbf{E} * \sigma(\mathbf{EP}^{(t)})$\;
    $\mathbf{E}^{nd}_p \gets \mathbbm{1}[\sigma(\mathbf{EP}^{(t)}) > 0.5]$\;
    $\mathbf{X}^{nd}_p \gets \FuncSty{clamp}(\mathbf{M}_d \odot \FuncSty{to\_int}(\tanh(\mathbf{P}_s^{(t)}) + \mathbf{X}_c), \mathbf{R}_{min}, \mathbf{R}_{max})$\;

  \textbf{return} $\mathbf{E}^{nd}_p, \mathbf{X}^{nd}_p, \mathbf{E}_p, \mathbf{X}_p, \mathbf{X}_d, \mathbf{X}_c $
\end{algorithm}

For edge perturbations, the differentiable edge matrix $\mathbf{E}_p$ is computed by applying a sigmoid activation $\sigma$ to the edge perturbation vector $\mathbf{EP}^{(t)}$, ensuring that the resulting values are in the range $(0,1)$ (line 5). The non-differentiable edge matrix $\mathbf{E}^{nd}_p$ is obtained by thresholding the sigmoid output at $0.5$, enforcing a hard binary decision on edge presence (line 6). Similarly, the non-differentiable node feature matrix $\mathbf{X}^{nd}_p$ is computed by converting the discrete feature updates into integer values and applying clamping to maintain the predefined feature bounds (line 7).
We employ both differentiable and non-differentiable versions of the matrices for edges and features in order to maintain gradient flow during optimization while ultimately obtaining the final discrete prediction. Specifically, the differentiable matrices $\mathbf{X}_p$ and $\mathbf{E}_p$ are used to produce a continuous output from the model, which is essential for computing the loss via backpropagation. In contrast, the non-differentiable matrices $\mathbf{X}^{nd}_p$ and $\mathbf{E}^{nd}_p$ are derived by thresholding $\mathbf{EP}^{(t)}$ and $\mathbf{P}^{(t)}_s$ (see Algorithm \ref{alg:perturbation}, lines 6-7), thereby yielding the definitive prediction. It is important to note that the final counterfactual sample is constructed from these non-differentiable components, ensuring that the discrete nature of the graph is preserved in the final output.
\noindent \textbf{\textit{Loss Function.}} The loss function is computed in Algorithm  \ref{alg:loss}: line 1 computes the indicator variable $\eta$, which determines whether the cross-entropy loss $\mathcal{L}_{CE}$ should be applied (line 2). This ensures that the classification loss is only considered when the counterfactual prediction does not yet match the counterfactual target class $y_t$. The \textit{edge distance loss} $\mathcal{L}_E$ is then computed in line 3 by summing the absolute differences between the sigmoid-transformed edge perturbation values and $1$, encouraging minimal modifications to the graph structure. 

From lines 4--6, the \textit{node feature loss} $\mathcal{L}_X$ is computed separately for discrete and continuous features. The discrete feature loss $\mathcal{L}_d$ is defined using the L1 loss between the masked original feature values and their perturbed versions, while the continuous feature loss $\mathcal{L}_c$ is computed using the Mean Squared Error (MSE). These two losses are then combined to form the total feature loss $\mathcal{L}_X = \mathcal{L}_d + \mathcal{L}_c$.

Next, in line 7, the function $\FuncSty{get\_alpha}$ (see Appendix \ref{alg:get_alpha}) determines the value of the weighting parameter $\alpha$, which regulates the trade-off between edge and feature modifications. To enhance the flexibility of the framework, we introduce the ability to select from various \textit{scheduling policies}, allowing $\alpha$ to dynamically evolve over training epochs. This adaptive approach ensures that the model progressively refines its focus on structural and feature perturbations, leading to a more effective optimization process and improving the quality of the generated counterfactual explanations. Finally, in line 8, the total loss $\mathcal{L}_{total}$ is computed as specified in (\ref{eq:loss}). This formulation ensures that perturbations remain minimal while enforcing the desired classification change. 

\vspace{-2.5mm}
\subsection{Computational Complexity Analysis}
The time complexity of the COMBINEX algorithm is primarily determined by its iterative optimization process over $k$ training epochs. 
Let $n$ and $m$ denote the number of nodes and edges in the graph, respectively. Moreover, let $f$ the number of node features. 
Therefore, each iteration consists of:\\ 
\noindent (1) \textbf{Perturbation Step:} $O(nf + m)$ for updating node features and edge perturbations;\\ 
\noindent (2) \textbf{GNN Forward Pass:} $O(L(n + m) f)$, where $L$ is the number of GNN layers;\\
\noindent (3) \textbf{Loss Computation:} $O(nf + m)$ for feature and edge losses;\\
\noindent (4) \textbf{Gradient Update:} $O(nf + m)$ for updating perturbation vectors.

Overall, the worst-case complexity is, therefore, $\mathcal{O}(k n f + k m f)$. This implies that the algorithm scales \textit{linearly} with the number of nodes, edges, features, and training epochs.

\section{Experiments}\label{sec:experiments}
Below, we outline our experimental setup and evaluation methodology to assess the effectiveness of our proposed COMBINEX approach for generating counterfactual explanations for GNNs. 

\noindent \textbf{\textit{Tasks, Datasets, and Models.}} Our method is tested on two tasks -- \textit{node classification} and \textit{graph classification} -- using a diverse collection of real-world datasets to ensure robustness and generalizability across domains such as citation networks, web page classification, and social network analysis (see Tables~\ref{tab:datasets} and \ref{tab:datasets_graph}). Specifically, the dataset collection includes the Planetoid datasets (CiteSeer, Cora, and PubMed)
, the WebKB datasets, 
the Attributed datasets (Wiki and Facebook), 
the Biological datasets (AIDS, Enzymes, and Proteins), originally designed for graph classification and here adapted also for node classification; the COIL-DEL dataset,
and the Miscellaneous category, which includes the Karate and Actor datasets.

We experiment with three GNN models: Graph Convolutional Network (GCN) \cite{kipf2016semi}, Chebyshev Network (ChebNet) \cite{he2022convolutional}, and GraphConv Networks \cite{morris2019weisfeiler}.

\noindent \textbf{\textit{Baselines.}} We evaluate our approach against a set of baselines that span naive strategies (random-edges, random-features, and ego-graph) and state-of-the-art techniques (CF-GNNExplainer \cite{lucic2022cf}, CounterFactual and Factual Explainer (CFF) \cite{tan2022learning}, and UNR \cite{kangunr}). 
Baselines such as random-edges and random-features perturb the graph's adjacency and feature matrices by applying random modifications, whereas the ego-graph method extracts a subgraph centered on a given node. 

\noindent \textbf{\textit{Evaluation Measures.}} To assess the quality of generated counterfactuals, we consider several key metrics. \textit{Validity} is an indicator function that returns $1$ if the counterfactual successfully alters the model's prediction compared to the original instance, as defined in~\cite{guidotti2019factual, guidotti2022counterfactual}. \textit{Fidelity} measures how well the counterfactual explanation aligns with the oracle’s decisions, following~\cite{Prado_Romero_2023}. \textit{Node/Edge sparsities} are computed as the ratio of modified features/edges between the factual and counterfactual graphs, respectively, ensuring minimal perturbations. Finally, \textit{Distribution Distance} is quantified using the $L_2$ distance between a graph's embedded representation and the dataset's mean embedding, capturing how much the counterfactual deviates from the original average data distribution.

\noindent \textbf{\textit{Settings.}}
The experiments were conducted on two machines, each equipped with an Nvidia GTX 4090 GPU, 64 GB of RAM, and an AMD Ryzen 9 7900 processor. To ensure robustness, we employed 4-fold cross-validation. Additionally, we performed an extensive hyperparameter tuning process to identify the optimal parameter configurations for our method, which were set as follows: learning rate of $0.1$, and $500$ training epochs. For the baseline models, we used the same hyperparameters specified in the original papers.  

\noindent \textbf{\textit{Scheduling Policy $\pmb{\alpha}$.}} We explored several $\alpha$ scheduling policies to control the trade-off between edge and feature modifications in our loss function. These different strategies result in multiple variants of our COMBINEX method. Specifically, we consider the following distinct approaches. The \textit{linear} policy (COMBINEX$_{\textit{lin}}$) lets $\alpha$ decrease linearly from $1.0$ to $0.0$ over the course of training. The \textit{exponential} policy (COMBINEX$_{\textit{exp}}$) sets $\alpha = e^{-\frac{epoch}{\delta}}$, leading to an exponential decay. The \textit{sinusoidal} policy (COMBINEX$_{\textit{cos}}$) modulates $\alpha$ according to $\alpha = 0.5 \times \left(1 + \cos\left(\pi \times \frac{epoch}{epochs_{max}}\right)\right)$, resulting in a cosine-shaped decay. The \textit{dynamic} policy (COMBINEX$_{\textit{dyn}}$) adjusts $\alpha$ based on the relative magnitudes of the edge loss and node feature loss, setting $\alpha=0$ when the edge loss dominates and $\alpha=1$ otherwise. Finally, if no policy is specified (COMBINEX$_{\textit{def}}$), $\alpha$ is fixed at a constant, default value $\alpha_{default}$. We also evaluated a variant where $\alpha$ is fixed at $1$, meaning only node features are perturbed. We refer to this approach as COMBINEX$_{\textit{feat}}$.

For a comprehensive list of parameters and settings, we refer the reader to Appendix \ref{subsec:params} and the GitHub repository.


\begin{table}
\centering
\scalebox{0.9}{
\begin{tabular}{|l|c|c|c|c|}\hline
 \textbf{Dataset Name}& \textbf{\#Nodes}& \textbf{\#Edges}& \textbf{\#Features}&\textbf{\#Classes}\\ \hline 
 Cora \cite{yang2016revisiting}& 2,708& 5,429& 1,433&7\\ \hline 
 CiteSeer \cite{yang2016revisiting} & 3,312& 4,732& 3,703&6\\ \hline 
 PubMed \cite{yang2016revisiting}& 19,717& 44,338& 500&3\\ \hline 
 Karate \cite{zachary1977information}& 34 & 156 & 34 & 4 \\ \hline 
 Actor  \cite{pei2020geom} & 7,600& 30,019& 932&5\\\hline
 WebKB Cornell \cite{pei2020geom}& 183& 298& 1,073&5\\ \hline 
 WebKB Texas  \cite{pei2020geom} & 183& 325& 1,073&5\\ \hline 
 WebKB Wisconsin  \cite{pei2020geom} & 251& 515& 1,073&5\\ \hline
Wiki \cite{yang2023pane}& 2,405 & 17,981& 4,973&17\\ \hline
Facebook \cite{leskovec2016snap} & 4,039 & 88,234& 1,283 &193\\ \hline
AIDS \cite{AIDS2004, Riesen2008} & 31,385 & 64,780& 4 & 37 \\ \hline
Proteins \cite{dobson2003distinguishing, borgwardt2005protein} & 43,471 & 162,088 & 29 & 2\\ \hline
Enzymes \cite{borgwardt2005protein, schomburg2004brenda} & 19,580 & 74,564& 18  & 2\\ \hline
\end{tabular}
}
\caption{Datasets used for \textit{node classification}.
}
\label{tab:datasets}
\end{table}

\begin{table}
\centering
\scalebox{0.85}{
\begin{tabular}{|l|c|c|c|c|c|}\hline
 \textbf{Dataset Name}& \textbf{\#Graphs} & \textbf{\#Nodes}& \textbf{\#Edges}& \textbf{\#Features}&\textbf{\#Classes}\\ \hline 
AIDS \cite{AIDS2004, Riesen2008}& 2000 & 15.69 & 16.20 & 4 & 2 \\ \hline
Proteins \cite{dobson2003distinguishing, borgwardt2005protein} & 1113 & 39.06 & 72.82 & 29 & 2\\ \hline 	
Enzymes \cite{borgwardt2005protein, schomburg2004brenda}& 600 & 32.63 &	62.14 & 18  & 6\\ \hline
COIL-DEL \cite{Riesen2008} & 3900 & 21.54 &	54.24 & 2 & 100\\ \hline
\end{tabular}
}
\caption{Datasets used for \textit{graph classification}.}
\label{tab:datasets_graph}
\end{table}

\newcommand{\btable}[3]{
    \begin{table*}[h!]
        \begin{center}
            \caption{#2\label{#3}}
            \begin{tabular}{#1}
    }

\newcommand{\etable}{
    \end{tabular}
    \end{center}
    \end{table*}
}

\newcommand{\mc}[2]{\multicolumn{#1}{c}{#2}}

\btable{l|ccccc}{Results for the CiteSeer dataset (node classification task) and the AIDS dataset (graph classification task).}{tab:results_general} \hline\hline
\mc{1}{\textbf{Explainer}} 
& \mc{1}{\textbf{Validity} $\uparrow$} 
& \mc{1}{\textbf{Fidelity} $\uparrow$} 
& \mc{1}{\textbf{Distribution Distance} $\downarrow$} 
& \mc{1}{\textbf{Node Sparsity} $\downarrow$} 
& \mc{1}{\textbf{Edge Sparsity} $\downarrow$} 
\\ \hline 


 & \mc{5}{\textbf{Dataset: CiteSeer -  Model: GCNConv -  Task: Node Classification}} \\ \hline

COMBINEX$_{\textit{def}}$ &$\mathbf{1.000 (\pm 0.000)}$ & $\mathbf{0.755 (\pm 0.007)}$ & $5.620 (\pm 0.212)$ & $\uline{0.031 (\pm 0.002)}$ & $\mathbf{0.000 (\pm 0.000)}$ \\
COMBINEX$_{\textit{feat}}$ &$\mathbf{1.000 (\pm 0.000)}$ & $\mathbf{0.755 (\pm 0.007)}$ & $2.464 (\pm 0.002)$ & $\mathbf{0.001 (\pm 0.000)}$ & $n.d.(\pm n.d.)$ \\
COMBINEX$_{\textit{dyn}}$ &$\mathbf{1.000 (\pm 0.000)}$ & $0.750 (\pm 0.000)$ & $10.997 (\pm 0.201)$ & $0.095 (\pm 0.002)$ & $\mathbf{0.000 (\pm 0.000)}$ \\
COMBINEX$_{\textit{exp}}$ &$\mathbf{1.000 (\pm 0.000)}$ & $\mathbf{0.755 (\pm 0.007)}$ & $27.121 (\pm 0.111)$ & $0.313 (\pm 0.002)$ & $\mathbf{0.000 (\pm 0.000)}$ \\
COMBINEX$_{\textit{lin}}$ &$\mathbf{1.000 (\pm 0.000)}$ & $\mathbf{0.755 (\pm 0.007)}$ & $11.231 (\pm 0.300)$ & $0.090 (\pm 0.004)$ & $\mathbf{0.000 (\pm 0.000)}$ \\
COMBINEX$_{\textit{sin}}$ &$\mathbf{1.000 (\pm 0.000)}$ & $\uline{0.752 (\pm 0.003)}$ & $11.557 (\pm 0.304)$ & $0.094 (\pm 0.004)$ & $\mathbf{0.000 (\pm 0.000)}$ \\
EGO &$0.012 (\pm 0.003)$ & $0.250 (\pm 0.500)$ & $1.995 (\pm 0.206)$ & $n.d.(\pm n.d.)$ & $0.937 (\pm 0.003)$ \\
Random Edges &$0.123 (\pm 0.009)$ & $0.476 (\pm 0.099)$ & $\mathbf{1.716 (\pm 0.089)}$ & $n.d.(\pm n.d.)$ & $0.393 (\pm 0.014)$ \\
Random Features &$\uline{0.514 (\pm 0.065)}$ & ${0.721 (\pm 0.024)}$ & $32.926 (\pm 0.291)$ & $\uline{0.489 (\pm 0.000)}$ & $n.d.(\pm n.d.)$ \\
CFF &$0.010 (\pm 0.004)$ & $0.125 (\pm 0.629)$ & $2.325 (\pm 1.323)$ & $n.d.(\pm n.d.)$ & $0.594 (\pm 0.194)$ \\
CF-GNNExplainer &$0.108 (\pm 0.014)$ & $0.534 (\pm 0.103)$ & $\uline{1.783 (\pm 0.134)}$ & $n.d.(\pm n.d.)$ & $\uline{0.070 (\pm 0.022)}$ \\
UNR &$0.047 (\pm 0.012)$ & $0.202 (\pm 0.162)$ & $2.389 (\pm 0.336)$ & $n.d.(\pm n.d.)$ & $0.186 (\pm 0.068)$ \\ \hline

 & \mc{5}{\textbf{Dataset: AIDS -  Model: GCNConv -  Task: Graph Classification}} \\ \hline

COMBINEX$_{\textit{def}}$ &$\mathbf{1.000 (\pm 0.000)}$ & $\uline{0.517 (\pm 0.007)}$ & $3.665 (\pm 0.493)$ & $\mathbf{0.087 (\pm 0.009)}$ & $\mathbf{0.004 (\pm 0.003)}$ \\
EGO &$0.015 (\pm 0.008)$ & $\mathbf{0.562 (\pm 0.315)}$ & $\mathbf{2.301 (\pm 0.101)}$ & $n.d.(\pm n.d.)$ & $0.892 (\pm 0.025)$ \\
Random Edges &$\uline{0.458 (\pm 0.003)}$ & $-0.033 (\pm 0.014)$ & $\uline{2.566 (\pm 0.012)}$ & $n.d.(\pm n.d.)$ & $0.292 (\pm 0.004)$ \\
Random Features &$\mathbf{1.000 (\pm 0.000)}$ & $0.513 (\pm 0.008)$ & $24.123 (\pm 1.771)$ & $\uline{0.521 (\pm 0.022)}$ & $n.d.(\pm n.d.)$ \\
CFF &$n.d.(\pm n.d.)$ & $n.d.(\pm n.d.)$ & $n.d.(\pm n.d.)$ & $n.d.(\pm n.d.)$ & $n.d.(\pm n.d.)$ \\
CF-GNNExplainer &$\uline{0.458 (\pm 0.019)}$ & $-0.034 (\pm 0.034)$ & $2.618 (\pm 0.022)$ & $n.d.(\pm n.d.)$ & $\uline{0.076 (\pm 0.005)}$ \\

\hline

 & \mc{5}{\textbf{Dataset: CiteSeer - Model: ChebConv - Task: Node Classification} } \\ \hline

COMBINEX$_{\textit{def}}$ &$\mathbf{1.000 (\pm 0.000)}$ & $\uline{0.753 (\pm 0.022)}$ & $\uline{4.165 (\pm 0.215)}$ & $\mathbf{0.021 (\pm 0.004)}$ & $\mathbf{0.000 (\pm 0.000)}$ \\

EGO &$0.000 (\pm 0.000)$ & $n.d.(\pm n.d.)$ & $n.d.(\pm n.d.)$ & $n.d.(\pm n.d.)$ & $n.d.(\pm n.d.)$ \\
Random Edges &$0.000 (\pm 0.000)$ & $n.d.(\pm n.d.)$ & $n.d.(\pm n.d.)$ & $n.d.(\pm n.d.)$ & $n.d.(\pm n.d.)$ \\
Random Features &$\uline{0.675 (\pm 0.114)}$ & $0.742 (\pm 0.030)$ & $32.451 (\pm 0.242)$ & $\uline{0.491 (\pm 0.001)}$ & $n.d.(\pm n.d.)$ \\
CFF &$0.210 (\pm 0.019)$ & $\mathbf{0.805 (\pm 0.126)}$ & $\mathbf{2.126 (\pm 0.176)}$ & $n.d.(\pm n.d.)$ & $\uline{0.615 (\pm 0.047)}$ \\
CF-GNNExplainer &$0.000 (\pm 0.000)$ & $n.d.(\pm n.d.)$ & $n.d.(\pm n.d.)$ & $n.d.(\pm n.d.)$ & $n.d.(\pm n.d.)$ \\
UNR &$0.000 (\pm 0.000)$ & $n.d.(\pm n.d.)$ & $n.d.(\pm n.d.)$ & $n.d.(\pm n.d.)$ & $n.d.(\pm n.d.)$ \\
\hline
 & \mc{5}{\textbf{Dataset: AIDS - Model: ChebConv - Task: Graph Classification}} \\ \hline
COMBINEX$_{\textit{def}}$ &$\mathbf{1.000 (\pm 0.000)}$ & $\mathbf{0.513 (\pm 0.007)}$ & $4.772 (\pm 0.475)$ & $\mathbf{0.079 (\pm 0.002)}$ & $\mathbf{0.000 (\pm 0.000)}$ \\
EGO &$0.000 (\pm 0.000)$ & $n.d.(\pm n.d.)$ & $n.d.(\pm n.d.)$ & $n.d.(\pm n.d.)$ & $n.d.(\pm n.d.)$ \\
Random Edges &$0.240 (\pm 0.000)$ & $\uline{-0.944 (\pm 0.000)}$ & $2.759 (\pm 0.015)$ & $n.d.(\pm n.d.)$ & $\uline{0.237 (\pm 0.007)}$ \\
Random Features &$\mathbf{1.000 (\pm 0.000)}$ & $\mathbf{0.513 (\pm 0.007)}$ & $24.364 (\pm 1.838)$ & $\uline{0.520 (\pm 0.015)}$ & $n.d.(\pm n.d.)$ \\
CFF &$0.002 (\pm 0.003)$ & $-1.000 (\pm 0.000)$ & $\mathbf{2.525 (\pm 0.000)}$ & $n.d.(\pm n.d.)$ & $0.565 (\pm 0.000)$ \\
CF-GNNExplainer &$\uline{0.241 (\pm 0.002)}$ & $-0.945 (\pm 0.001)$ & $\uline{2.755 (\pm 0.018)}$ & $n.d.(\pm n.d.)$ & $\mathbf{0.000 (\pm 0.000)}$ \\
\hline

 & \mc{5}{\textbf{Dataset: CiteSeer - Model: GraphConv - Task: Node Classification}} \\ \hline

COMBINEX$_{\textit{def}}$ &$\mathbf{1.000 (\pm 0.000)}$ & $\uline{0.792 (\pm 0.010)}$ & $7.153 (\pm 0.668)$ & $\mathbf{0.055 (\pm 0.006)}$ & $\mathbf{0.000 (\pm 0.001)}$ \\

EGO &$0.005 (\pm 0.007)$ & $0.750 (\pm 0.354)$ & $\mathbf{1.232 (\pm 0.294)}$ & $n.d.(\pm n.d.)$ & $0.956 (\pm 0.055)$ \\
Random Edges &$0.076 (\pm 0.006)$ & $0.475 (\pm 0.204)$ & $1.454 (\pm 0.080)$ & $n.d.(\pm n.d.)$ & $0.440 (\pm 0.020)$ \\
Random Features &$\uline{0.481 (\pm 0.098)}$ & $0.725 (\pm 0.039)$ & $32.458 (\pm 0.231)$ & $\uline{0.491 (\pm 0.001)}$ & $n.d.(\pm n.d.)$ \\
CFF &$0.165 (\pm 0.028)$ & $\mathbf{0.867 (\pm 0.112)}$ & $2.316 (\pm 0.157)$ & $n.d.(\pm n.d.)$ & $0.570 (\pm 0.041)$ \\
CF-GNNExplainer &$0.071 (\pm 0.014)$ & $0.579 (\pm 0.053)$ & $\uline{1.353 (\pm 0.190)}$ & $n.d.(\pm n.d.)$ & $\uline{0.042 (\pm 0.031)}$ \\
UNR &$0.017 (\pm 0.007)$ & $0.375 (\pm 0.479)$ & $1.854 (\pm 0.196)$ & $n.d.(\pm n.d.)$ & $0.144 (\pm 0.068)$ \\
\hline

 & \mc{5}{\textbf{Dataset: AIDS - Model: GraphConv - Task: Graph Classification}} \\ \hline

COMBINEX$_{\textit{def}}$ &$\mathbf{1.000 (\pm 0.000)}$ & $\mathbf{0.503 (\pm 0.013)}$ & $3.767 (\pm 0.252)$ & $\mathbf{0.096 (\pm 0.006)}$ & $\mathbf{0.001 (\pm 0.000)}$ \\
EGO &$0.122 (\pm 0.011)$ & $0.426 (\pm 0.020)$ & $\mathbf{1.805 (\pm 0.072)}$ & $n.d.(\pm n.d.)$ & $0.917 (\pm 0.002)$ \\
Random Edges &$0.290 (\pm 0.018)$ & $-0.576 (\pm 0.047)$ & $\uline{2.615 (\pm 0.049)}$ & $n.d.(\pm n.d.)$ & $0.259 (\pm 0.010)$ \\
Random Features &$\uline{0.815 (\pm 0.282)}$ & $\uline{0.459 (\pm 0.114)}$ & $27.592 (\pm 1.728)$ & $\uline{0.517 (\pm 0.012)}$ & $n.d.(\pm n.d.)$ \\
CFF &$0.008 (\pm 0.006)$ & $-1.000 (\pm 0.000)$ & $10.201 (\pm 2.868)$ & $n.d.(\pm n.d.)$ & $0.278 (\pm 0.293)$ \\
CF-GNNExplainer &$0.298 (\pm 0.010)$ & $-0.621 (\pm 0.048)$ & $2.619 (\pm 0.031)$ & $n.d.(\pm n.d.)$ & $\uline{0.025 (\pm 0.002)}$ \\ \hline

\etable

\subsection{Results}\label{sec:results}

In this section, we analyze the experimental results in Table~\ref{tab:results_general}. All the other results are reported in Appendix~\ref{subsec:graphc} and \ref{subsec:nodec}.
The evaluation considers the five key metrics mentioned above: \textit{Validity}, \textit{Fidelity}, \textit{Distribution Distance}, \textit{Node Sparsity}, and \textit{Edge Sparsity}. 


The first observation that stands out is the consistently high \textit{validity} of COMBINEX, which maintains a perfect score of $1$ across all datasets and architectures. This indicates that COMBINEX can easily generate counterfactual explanations regardless for the $\alpha$ scheduling policy. In contrast, other methods exhibit much lower validity, often falling below $0.5$. Turning our attention to \textit{fidelity}, which measures how closely the generated counterfactuals align with the decision boundary, we see that COMBINEX performs remarkably well. While methods such as CFF occasionally achieve slightly higher fidelity in specific settings, they often suffer from reduced validity or increased sparsity. COMBINEX consistently ranks among the best-performing methods in fidelity, reinforcing its ability to generate explanations that are not only valid but also faithful to the underlying model.

\textit{Distribution Distance} is another important metric, as lower values indicate that the generated counterfactuals remain within the natural data distribution. Although EGO and CF-GNNExplainer achieve competitive results in some cases, their poor validity makes these results less meaningful. COMBINEX, while not always achieving the lowest distribution distance, maintains a strong balance by ensuring both validity and fidelity remain high. This balance is critical in real-world applications, where counterfactuals must not only be feasible but also realistic. When considering \textit{sparsity}, both in terms of nodes and edges, COMBINEX once again demonstrates its superiority generating explanations with minimal perturbations. 
A deeper comparison with baseline methods reveals further insights. Random Features, for instance, occasionally achieves high validity, but its counterfactuals are highly unrealistic, as indicated by their excessively high distribution distance. 
Random Edges, instead, tends to perform poorly in fidelity and sparsity, demonstrating that randomly modifying graph structures does not produce meaningful counterfactual explanations. Among structured methods, CF-GNNExplainer exhibits relatively low distribution distance and reasonable edge sparsity, but its lower validity and fidelity scores limit its overall usefulness. CFF, on the other hand, achieves the highest fidelity in some cases, but at the cost of poor validity and increased sparsity. This indicates that while CFF can produce highly faithful explanations, they are often unrealistic or overly complex.

EGO, while getting a low distribution distance in some instances, suffers from extremely low validity scores.

Overall, we attribute these outstanding results to our method's ability to optimally balance node feature and edge perturbations, leading to superior counterfactual explanations.

\subsection{The Impact of the Scheduling Policy $\alpha$}
\label{subsec:alpha}
The results with different $\alpha$ values reported in Table~\ref{tab:results_general} are shown only for a single dataset, model, and task due to space constraints. For the full results, refer to the Appendix~\ref{subsec:graphc} and \ref{subsec:nodec}. 

Our experiments (see Table~\ref{tab:results_general}) demonstrate that the choice of $\alpha$ in the COMBINEX framework influences performance metrics. The constant policy (\textit{def}), which maintains a fixed value for $\alpha$ throughout the optimization process, achieves the lowest distribution distance (5.620) and one of the lowest node sparsity values (0.031), indicating a more controlled perturbation that preserves the original graph structure. Similarly, the feature-only variant (\textit{feat}) achieves a comparably high fidelity (0.755) while minimizing node sparsity (0.001), suggesting that altering only node features without modifying edges leads to minimal changes while maintaining counterfactual validity.

Conversely, the exponential policy (\textit{exp}) leads to the highest distribution distance ($27.121$) and a notable increase in node sparsity ($0.313$), reflecting the aggressive perturbations caused by the rapidly decaying $\alpha$. In contrast, the dynamic, linear, and sinusoidal policies achieve intermediate distribution distances ($10.997$–$11.557$) and maintain node sparsity around $0.090$–$0.095$, suggesting a more balanced trade-off between modification extent and stability.

These findings, along with the others presented in Appendix~\ref{subsec:graphc} and \ref{subsec:nodec} highlight that the impact of $\alpha$ is highly dependent on both the scheduling policy, the oracle model, and the dataset characteristics. The optimal choice of $\alpha$ should therefore be carefully tuned based on the complexity of the graph data and the interpretability objectives of the counterfactual explanations.

\section{Feasibility of our Method}\label{sec:feasibility}

In this section, we discuss the feasibility of our method in comparison with the baselines. Our edge sparsification process uses the adjacency matrix perturbation approach introduced by Lucic et al.~\cite{lucic2022cf}, while addressing scalability challenges inherent to their method. Specifically, Lucic et al. represent edges using a sparse matrix format, which significantly limits the applicability of their algorithm to graphs containing more than 30--35 nodes. 
In Table \ref{tab:citeseer_time_memory}, for example, we compare the execution time and the memory needed for each explainer on the Citeseer dataset. 
\begin{table}[h]
\centering
\begin{tabular}{lcc}
\hline
\textbf{Explainer} & \textbf{Time (s) ($\pm$ std)} & \textbf{Memory (MB) ($\pm$ std)} \\
\hline
COMBINEX & $7.665 (\pm 0.245)$ & $1583.035 (\pm 97.627)$ \\
CF-GNNExplainer & $45.576 (\pm 1.306)$ & $2045.919 (\pm 88.096)$ \\
Random Features & $1.287 (\pm 0.025)$ & $1387.031 (\pm 151.101)$ \\
Random Edges & $1.350 (\pm 0.017)$ & $1359.386 (\pm 144.630)$ \\
EGO & $ 0.011 (\pm 0.002)$ & $1022.579 (\pm 14.825)$ \\
CFF & $ 8.936 (\pm 0.204)$ & $1469.579 (\pm 64.606)$ \\
UNR & $ 0.266 (\pm 0.327)$ & $ 1221.638 (\pm 99.061)$ \\
\hline
\end{tabular}
\caption{Execution time and memory usage for different explainers on the Citeseer dataset.}
\label{tab:citeseer_time_memory}
\end{table}
To overcome this limitation, we leverage a novel strategy that exploits the edge weight vector, which can be seamlessly integrated into various graph convolutional layers, such as \textit{ChebConv}, \textit{GCNConv}, and \textit{GraphConv}. Given a graph $ G(V, E) $ with $ |V| = n $, our approach introduces a perturbation vector $ \mathbf{EP}^{n\times 1} $ that effectively cancels out edges by feeding it into the GNN. The effectiveness of our edge weight sparsification technique is evident when analyzing the execution time results in Table \ref{tab:citeseer_time_memory}. Notably, CF-GNNExplainer requires $45.576$ seconds on average to generate explanations, whereas COMBINEX completes the same process in just $7.665$ seconds. This substantial improvement in runtime efficiency highlights the scalability advantage introduced by our sparsification technique. By reducing the computational overhead associated with handling edge deletions, COMBINEX maintains high explainability performance while significantly lowering execution time. 

We formally demonstrate that this technique is equivalent to performing edge deletion using the full adjacency matrix. For \textit{GCNConv}, this equivalence is trivial and follows directly from Theorem~\ref{theorem:gcn}. However, for \textit{ChebConv}, we establish that when the filter size is set to 1, edge nullification via the edge weight vector remains consistent with the behavior observed in \textit{GCNConv}. Conversely, when the filter size exceeds 1, such equivalence cannot be guaranteed (see Appendix~\ref{theorem:cheb}). The proof for GraphConv can be found in Appendix \ref{theorem:graph}.

\begin{theorem} \label{theorem:gcn}[Equivalence of Edge Weight Nullification and Adjacency Matrix Edge Removal in GCNs]
Let $ G = (V, E) $ be a graph with $ n $ nodes and $ m $ edges. Let $ \mathbf{X} \in \mathbb{R}^{n \times d} $ be the node feature matrix, where each node has a feature vector of dimension $ d $. Consider a \textit{GCNConv} layer parameterized by a weight matrix $ \mathbf{W} \in \mathbb{R}^{d \times d'} $. Setting an edge weight to zero in the GCNConv’s edge weight vector is equivalent to removing the corresponding edge in the adjacency matrix representation.
\end{theorem}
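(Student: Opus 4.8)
The plan is to write the single-layer \textit{GCNConv} map explicitly in matrix form under both modifications and verify that they produce identical outputs entry by entry. Recall that, once an edge weight vector is incorporated, a GCN layer computes $\mathbf{H}' = \sigma\big(\hat{\mathbf{D}}^{-1/2}(\mathbf{A}^{(e)} + \mathbf{I})\hat{\mathbf{D}}^{-1/2}\,\mathbf{X}\mathbf{W}\big)$, where $\mathbf{A}^{(e)}_{ij} = \mathbf{A}_{ij}\,e_{ij}$ is the adjacency matrix reweighted by the edge weights and $\hat{\mathbf{D}}$ is the diagonal degree matrix with $\hat{\mathbf{D}}_{ii} = 1 + \sum_{j} \mathbf{A}^{(e)}_{ij}$. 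The first step is to make precise that each edge weight enters \emph{multiplicatively} into its adjacency entry, and that the symmetric normalization is recomputed from this reweighted matrix, which is exactly the behavior of the \textit{gcn\_norm} routine used in the implementation.

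Next I would introduce the two operators to be compared. Fix an edge $(k,\ell) \in E$. On one side, set its entry in the edge weight vector to zero, i.e. $e_{k\ell} = 0$, yielding an output $\mathbf{H}'$ built from the reweighted matrix $\mathbf{A}^{(e)}$. On the other side, delete the edge from the structure, obtaining $\mathbf{A}'$ with $\mathbf{A}'_{k\ell} = 0$ and all other entries unchanged, run with unit weights, yielding an output $\mathbf{H}''$. Then I would observe that $\mathbf{A}^{(e)}$ and $\mathbf{A}'$ agree in every entry: both carry a zero in position $(k,\ell)$ and coincide with the original adjacency elsewhere, so $\mathbf{A}^{(e)} = \mathbf{A}'$. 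Because the degree matrices are determined solely by the row sums of this common matrix plus the self-loop term, it follows immediately that $\hat{\mathbf{D}} = \hat{\mathbf{D}}'$, and hence the full normalized propagation operators coincide. Substituting into the layer map gives $\mathbf{H}' = \mathbf{H}''$ regardless of $\mathbf{X}$, $\mathbf{W}$, or $\sigma$, which establishes the claimed equivalence.

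The main obstacle, and the step deserving the most care, is the degree normalization: the equivalence would break if zeroing an edge weight left $\hat{\mathbf{D}}$ computed on the original unperturbed structure while only the numerator were reweighted. The crux is therefore to confirm that the weighted degree entering the symmetric normalization is recomputed from $\mathbf{A}^{(e)}$, so that removing the edge and nullifying its weight decrement $\hat{\mathbf{D}}_{kk}$ and $\hat{\mathbf{D}}_{\ell\ell}$ by precisely the same amount. Once this consistency is verified, the remainder is a routine entrywise comparison. A minor point to address for completeness is symmetry: since the graph is undirected, nullifying $e_{k\ell}$ must also nullify $e_{\ell k}$, matching the symmetric removal of $(k,\ell)$ from $\mathbf{A}$.
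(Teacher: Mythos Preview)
Your proposal is correct and follows essentially the same approach as the paper: both argue that the reweighted adjacency with $e_{k\ell}=0$ coincides entrywise with the adjacency obtained by deleting $(k,\ell)$, so the normalized propagation operators agree and the layer outputs are identical. Your version is in fact more careful than the paper's, since you explicitly flag that the equivalence hinges on the degree matrix being recomputed from the reweighted adjacency (the paper simply asserts $\tilde{\mathbf{A}}' = \tilde{\mathbf{A}}_{\text{ew}}$ without dwelling on this point) and you note the symmetry consideration for undirected graphs.
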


\begin{proof}
We represent the edges of the graph using an edge index matrix $ \mathbf{E} \in \mathbb{R}^{2 \times m} $, where each column corresponds to an edge with source and target node indices. Let $ \mathbf{EP} \in \mathbb{R}^{1\times m} $ be the vector of edge weights corresponding to the edges in $ \mathbf{E} $. To include self-loops, we update the edge index matrix to $ \mathbf{E}' $ and the edge weight vector to $ \mathbf{EP}' $. Define the degree matrix $ \mathbf{D} \in \mathbb{R}^{n \times n} $ where $D_{ii} = \sum_{j} A_{ij},$ with $ A_{ij} $ is the adjacency matrix element corresponding to edge $ (i, j) $. The normalized adjacency matrix incorporating edge weights is computed as: $\mathbf{\tilde{A}} = \mathbf{D}^{-1/2} \mathbf{A} \mathbf{D}^{-1/2}.$ The output feature matrix at the GCNConv layer is given by: $ \mathbf{H} = \sigma\left( \mathbf{\tilde{A}} \mathbf{X} \mathbf{W} \right),$ where $ \sigma $ is an activation function. Alternatively, the node-wise update rule for node $ i $ can be expressed as:
\begin{equation}\label{eq:mp}
\mathbf{h}^{\prime}_i = \sigma \left( \mathbf{W}^{\top} \sum_{j \in
\mathcal{N}(i) \cup \{ i \}} \frac{e_{j,i}}{\sqrt{\hat{d}_j \hat{d}_i}} \mathbf{x}_j \right).
\end{equation}

\noindent{\textbf{\textit{Equivalence Analysis:}}}
Setting the $ k $-th edge weight $ e_k = 0 $ in $ \mathbf{EP}' $ removes the contribution of the corresponding edge $ (i, j) $ in the message-passing process (Equation \ref{eq:mp}). Since edge weights scale the aggregated messages, setting $ e_k = 0 $ nullifies the corresponding contribution.
Explicitly removing edge $ (i, j) $ from the adjacency matrix sets $ A_{ij} = 0 $, ensuring that node $ j $ no longer contributes to the feature update of node $ i $. To formalize the equivalence, let:
\begin{itemize}
    \item $ \mathbf{A}' $ be the adjacency matrix after removing edge $ (i, j) $.
    \item $ \mathbf{EP}'' $ be the edge weight vector where $ e_k = 0 $ for the corresponding edge.
    \item  $ \mathbf{\tilde{A}}' $ be the normalized adjacency matrix computed from $ \mathbf{A}' $.
    \item $ \mathbf{\tilde{A}}_{\text{ew}} $ be the normalized adjacency matrix computed using the edge weight vector $ \mathbf{e}'' $.
\end{itemize}

Since both methods eliminate the contribution of edge $ (i, j) $, we obtain $\mathbf{\tilde{A}}' = \mathbf{\tilde{A}}_{\text{ew}}$. Thus, the resulting feature updates remain identical:
\begin{equation*}
  \mathbf{H}' = \sigma ( \mathbf{\tilde{A}}' \mathbf{X} \mathbf{W} ) = \sigma (\mathbf{\tilde{A}}_{\text{ew}} \mathbf{X} \mathbf{W}).  
\end{equation*}

This confirms that setting an edge weight to zero is mathematically equivalent to removing the corresponding edge in the adjacency matrix.
\end{proof}

\section{Conclusion and Future Work}
\label{sec:conclusion}

In this work, we introduced COMBINEX, a unified counterfactual explainer for Graph Neural Networks (GNNs) that integrates both node feature and structural perturbations. Through extensive experiments across various datasets, tasks, and architectures, we demonstrated that COMBINEX effectively balances key evaluation metrics, ensuring high validity while minimizing modifications to the graph structure and node features to maintain realism.

We also proposed a novel edge weight sparsification technique, which significantly improves computational efficiency without compromising explainability. Our comparative analysis showed that COMBINEX operates more efficiently and with lower computational costs than existing methods following a similar approach, such as CF-GNNExplainer. Additionally, we explored different scheduling policies for balancing node and edge perturbations, further highlighting the flexibility and generalizability of COMBINEX across diverse scenarios.

In summary, COMBINEX represents a state-of-the-art counterfactual explanation framework for GNNs, offering a comprehensive and computationally efficient approach that aligns with real-world interpretability requirements. 

Future work will focus on extending our approach to broader graph-based tasks, including link prediction, as well as adapting the counterfactual framework to dynamic and heterogeneous graph structures.

\bibliographystyle{ACM-Reference-Format}
\bibliography{sample-base.bib}
\

\appendix

\section{Appendix}

\subsection{Edge Nullification Theorem for ChebConv}
\begin{theorem}\label{theorem:cheb}
In a Chebyshev Convolutional Network (ChebConv), setting an edge weight to zero is equivalent to removing the corresponding edge from the graph's adjacency matrix for $K=1$. However, for $K>1$, this equivalence does not necessarily hold.
\end{theorem}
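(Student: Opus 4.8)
The plan is to leverage the spectral definition of ChebConv and contrast it with the first-order behavior established for GCNConv in Theorem~\ref{theorem:gcn}. Recall that a ChebConv layer of order $K$ computes $\mathbf{H} = \sum_{k=0}^{K} \mathbf{T}_k(\mathbf{\hat{L}}) \mathbf{X} \mathbf{W}_k$, where $\mathbf{\hat{L}} = \frac{2}{\lambda_{\max}} \mathbf{L} - \mathbf{I}$ is the scaled normalized Laplacian and the Chebyshev polynomials obey the recurrence $\mathbf{T}_0(\mathbf{\hat{L}}) = \mathbf{I}$, $\mathbf{T}_1(\mathbf{\hat{L}}) = \mathbf{\hat{L}}$, and $\mathbf{T}_k(\mathbf{\hat{L}}) = 2\mathbf{\hat{L}}\,\mathbf{T}_{k-1}(\mathbf{\hat{L}}) - \mathbf{T}_{k-2}(\mathbf{\hat{L}})$. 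First I would write out the case $K=1$ explicitly: here the propagation matrix is an affine function $c_0 \mathbf{I} + c_1 \mathbf{\hat{L}}$ of the Laplacian, so it only couples nodes that are directly adjacent (plus self-loops). Since $\mathbf{L}$ is built from the same weighted adjacency structure used in the edge-weight formulation, setting $e_k = 0$ for edge $(i,j)$ removes exactly the $(i,j)$ and $(j,i)$ entries of $\mathbf{L}$, which is precisely what deleting that edge from $\mathbf{A}$ does. This reduces the $K=1$ claim to the same localized-contribution argument given in the proof of Theorem~\ref{theorem:gcn}, establishing equivalence.

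For the $K>1$ direction, the plan is to exhibit the obstruction arising from polynomial powers of the Laplacian. The key step is to observe that $\mathbf{T}_k(\mathbf{\hat{L}})$ for $k \geq 2$ contains terms proportional to $\mathbf{\hat{L}}^k$, and that the $(i,j)$ entry of $\mathbf{\hat{L}}^k$ aggregates contributions over \emph{all} walks of length $k$ between nodes $i$ and $j$. Consequently, two distinct interventions must be carefully distinguished: (a) zeroing a single edge weight $e_{ij}$ in the edge-weight vector, versus (b) deleting edge $(i,j)$ from $\mathbf{A}$ and \emph{recomputing} the Laplacian, its normalization, and all higher powers from scratch. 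In the full-adjacency deletion (b), the degrees $\hat{d}_i, \hat{d}_j$ change, the symmetric normalization $\mathbf{D}^{-1/2}(\cdot)\mathbf{D}^{-1/2}$ is globally recomputed, and $\lambda_{\max}$ may shift — so every multi-hop walk passing through $i$ or $j$ is rescaled. In the edge-weight intervention (a), only the single direct message along $(i,j)$ is suppressed while the surrounding normalization and higher-order walk structure remain tied to the original graph.

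The cleanest way to make the non-equivalence concrete would be a small counterexample: I would construct a minimal graph (e.g. a path or triangle on three or four nodes) with $K=2$, compute $\mathbf{T}_2(\mathbf{\hat{L}}) = 2\mathbf{\hat{L}}^2 - \mathbf{I}$ explicitly under both interventions, and show that the resulting propagation matrices differ in at least one entry — specifically an entry corresponding to a two-hop path whose intermediate normalization is affected by the degree change under (b) but not under (a). This discrepancy in a single matrix entry, for generic feature matrices $\mathbf{X}$ and weights $\mathbf{W}_k$, yields $\mathbf{H}_{(a)} \neq \mathbf{H}_{(b)}$, proving equivalence fails.

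The hard part will be pinning down exactly which formulation of ``setting an edge weight to zero'' the ChebConv implementation uses, since the theorem's truth hinges on whether the Laplacian normalization and $\lambda_{\max}$ are held fixed or recomputed after the edge-weight change; I would state this convention explicitly at the outset so that the $K=1$ equivalence and the $K>1$ counterexample both rest on the same, unambiguous definition. A secondary subtlety is handling $\lambda_{\max}$, which in practice is often fixed to $2$ (so $\mathbf{\hat{L}} = \mathbf{L} - \mathbf{I}$) rather than recomputed — I would adopt that common simplification to keep the counterexample computation tractable while noting it does not weaken the argument, since the two-hop normalization mismatch persists regardless.
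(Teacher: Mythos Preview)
Your proposal is correct and follows essentially the same route as the paper: write out the ChebConv spectral filter, observe that $K=1$ is an affine function of the Laplacian so the argument collapses to the GCNConv case, and for $K>1$ point to the squared term $\mathbf{T}_2(\tilde{\mathbf{L}}) = 2\tilde{\mathbf{L}}^2 - \mathbf{I}$ as the source of multi-hop coupling that breaks the equivalence. Your version is in fact more careful than the paper's---the paper stops at the heuristic ``second-order neighbors contribute'' without a concrete counterexample and does not disentangle the normalization/$\lambda_{\max}$ convention that you correctly flag as the crux of the discrepancy---so your added small-graph computation and explicit statement of the intervention convention would only strengthen the argument.
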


\begin{proof}

The ChebConv layer applies a Chebyshev polynomial filter to the graph Laplacian. The output feature matrix $\mathbf{H}$ is computed as:
\begin{equation}
    \mathbf{H} = \sum_{k=0}^{K-1} \mathbf{T}_k(\tilde{\mathbf{L}}) \mathbf{X} \mathbf{\Theta}_k,
\end{equation}
where: $K$ is the Chebyshev filter size, $\mathbf{T}_k(\tilde{\mathbf{L}})$ is the Chebyshev polynomial of order $k$ evaluated at the scaled Laplacian $\tilde{\mathbf{L}}$, $\mathbf{X}$ is the input node feature matrix, $\mathbf{\Theta}_k$ is the learnable weight matrix for the $k$-th order. The scaled and normalized Laplacian $\tilde{\mathbf{L}}$ is defined as:
\begin{equation}
    \tilde{\mathbf{L}} = \frac{2\mathbf{L}}{\lambda_{\max}} - \mathbf{I},
\end{equation}
where $\mathbf{L} = \mathbf{D} - \mathbf{A}$ is the unnormalized Laplacian, $\mathbf{D}$ is the degree matrix, $\mathbf{A}$ is the adjacency matrix, and $\lambda_{\max}$ is the largest eigenvalue of $\mathbf{L}$. The Chebyshev polynomials are computed recursively as:
\begin{align}
    \mathbf{T}_0(\tilde{\mathbf{L}}) &= \mathbf{I}, \\
    \mathbf{T}_1(\tilde{\mathbf{L}}) &= \tilde{\mathbf{L}}, \\
    \mathbf{T}_k(\tilde{\mathbf{L}}) &= 2\tilde{\mathbf{L}} \mathbf{T}_{k-1}(\tilde{\mathbf{L}}) - \mathbf{T}_{k-2}(\tilde{\mathbf{L}}), \quad k \geq 2.
\end{align}

\textbf{Case $K=1$:}
For $K=1$, ChebConv simplifies to a first-order approximation similar to GCN. Setting an edge weight $e_{ij}$ to zero directly removes its contribution in the message passing, making it equivalent to removing the edge. It is important to notice that $e_{ij}$ is included within the adjacency matrix $\mathbf{A}$.

\textbf{Case $K>1$:}
For $K>1$, ChebConv introduces dependencies on multiple-hop neighbors due to higher-order polynomial terms. Even if an edge weight $e_{ij}$ is set to zero, information may still propagate through alternative paths in $\mathbf{T}_k(\tilde{\mathbf{L}})$. Specifically, for $K=2$:
\begin{equation}
    \mathbf{T}_2(\tilde{\mathbf{L}}) = 2\tilde{\mathbf{L}}^2 - \mathbf{I}.
\end{equation}
This squared term allows second-order neighbors to contribute, preventing a strict equivalence between weight nullification and edge removal.

Thus, for $K=1$, the equivalence holds, but for $K>1$, setting an edge weight to zero does not necessarily remove all contributions from that edge in ChebConv.
\end{proof}

\subsection{Edge Nullification Theorem for GraphConv}\label{theorem:graph}

\begin{theorem}
Let $ G = (V, E) $ be a graph with $ n $ nodes and $ m $ edges. Let $ \mathbf{X} \in \mathbb{R}^{n \times d} $ denote the node feature matrix, where each node has a feature vector of dimension $ d $. Consider a Graph Convolutional Network (GraphConv) layer parameterized by a weight matrix $ \mathbf{W} \in \mathbb{R}^{d \times d'} $. Setting an edge weight to zero in the GraphConv's edge weight vector is equivalent to removing the corresponding edge in the adjacency matrix representation.
\end{theorem}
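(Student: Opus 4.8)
The plan is to follow the same template as the proof of Theorem~\ref{theorem:gcn}, exploiting the fact that \textit{GraphConv} aggregates neighbor features through a plain weighted sum with no degree-based normalization. First I would write the node-wise update rule of the layer explicitly. For \textit{GraphConv} (as introduced by Morris et al.) the updated embedding of node $i$ takes the form
\begin{equation*}
\mathbf{h}'_i = \mathbf{W}_1^{\top}\mathbf{x}_i + \mathbf{W}_2^{\top}\sum_{j \in \mathcal{N}(i)} e_{j,i}\,\mathbf{x}_j,
\end{equation*}
where $e_{j,i}$ is the weight of the edge $(j,i)$ and the self-contribution is carried by the separate term $\mathbf{W}_1^{\top}\mathbf{x}_i$. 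In matrix form this reads $\mathbf{H} = \mathbf{X}\mathbf{W}_1 + \mathbf{A}_w \mathbf{X}\mathbf{W}_2$, where $\mathbf{A}_w$ is the weighted adjacency matrix whose nonzero entries are the edge weights $e_{j,i}$. (Where the statement uses a single $\mathbf{W}$, one can read $\mathbf{W}_1,\mathbf{W}_2$ as its self- and neighbor-blocks; the argument is insensitive to this.)

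The central observation I would stress is the structural difference from the GCN case: \textit{GraphConv} applies no $\mathbf{D}^{-1/2}(\cdot)\mathbf{D}^{-1/2}$ normalization, and the self-loop is never folded into the neighbor sum. Consequently the contribution of each neighbor appears as a single, decoupled, additive term $e_{j,i}\,\mathbf{x}_j$, and no other quantity in the layer depends on the presence or weight of edge $(i,j)$. This is exactly the property that fails in the ChebConv setting of Theorem~\ref{theorem:cheb}, where the squared Laplacian $\mathbf{T}_2(\tilde{\mathbf{L}})$ reintroduces the edge through multi-hop paths; here the linearity and normalization-free form of the aggregation rule out any such coupling.

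With this in hand, the equivalence follows by comparing the two operations term by term. Setting $e_{k}=0$ for the edge $(i,j)$ in the edge-weight vector deletes the summand $e_{j,i}\,\mathbf{x}_j$ from the aggregation of node $i$ (and symmetrically $e_{i,j}\,\mathbf{x}_i$ from node $j$ in the undirected case), while leaving every self-loop term $\mathbf{W}_1^{\top}\mathbf{x}_\ell$ and every other neighbor term untouched. Removing edge $(i,j)$ from the adjacency matrix sets $A_{ij}=0$, which removes $j$ from $\mathcal{N}(i)$ and deletes precisely the same summand; since no degree $\hat d_\ell$ ever enters the update, no surviving term is rescaled. Writing $\mathbf{A}'_w$ for the weighted adjacency after edge removal and $\mathbf{A}''_w$ for the one after weight nullification, both equal the original $\mathbf{A}_w$ with the $(i,j)$ (and $(j,i)$) entry zeroed, so $\mathbf{A}'_w = \mathbf{A}''_w$ and therefore $\mathbf{H}' = \mathbf{X}\mathbf{W}_1 + \mathbf{A}'_w\mathbf{X}\mathbf{W}_2 = \mathbf{X}\mathbf{W}_1 + \mathbf{A}''_w\mathbf{X}\mathbf{W}_2 = \mathbf{H}''$.

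I expect the only delicate point to be arguing that the equivalence is exact at \emph{every} node, not merely at the two endpoints — that is, that zeroing the weight alters nothing else in the forward pass. This reduces to verifying that the degree vector plays no role in the \textit{GraphConv} propagation rule, which, unlike GCN and ChebConv, it does not; once the update is written in the decoupled form above, the argument closes immediately.
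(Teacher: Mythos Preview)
Your proof is correct and follows the same overall template as the paper's: write the node-wise update rule, observe that each edge contributes a single additive term, and conclude that zeroing the weight and deleting the edge remove exactly the same term. The one substantive difference is in how the \textit{GraphConv} propagation rule is written. You use the standard Morris et al.\ form with two weight matrices and \emph{no} degree normalization, and you make the absence of normalization the linchpin of the argument (nothing else in the forward pass can depend on the edge). The paper instead writes the layer with a row-normalized aggregation $\mathbf{D}^{-1}\mathbf{A}$ and a single $\mathbf{W}$, so its argument implicitly relies on the degrees being recomputed from the weighted adjacency so that both operations perturb $\mathbf{D}$ identically. Your formulation is the more faithful one to the cited reference and yields the cleaner decoupling; the paper's version still goes through but needs that extra (unstated) consistency assumption on how $\mathbf{D}$ is formed.
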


\begin{proof}
The GraphConv layer follows the message-passing framework:
\begin{equation}
    \mathbf{H} = \sigma \left( (\mathbf{D}^{-1} \mathbf{A}) \mathbf{X} \mathbf{W} + \mathbf{X} \mathbf{W} \right),
\end{equation}
where:
- $ \mathbf{A} $ is the adjacency matrix (including self-loops).
- $ \mathbf{D} $ is the degree matrix with $ D_{ii} = \sum_{j} A_{ij} $.
- $ \sigma $ is an activation function.
- $ \mathbf{W} $ is the weight matrix.
- $ \mathbf{X} $ is the input node feature matrix.
- The term $ \mathbf{D}^{-1} \mathbf{A} $ represents row-normalized message aggregation.

Expanding the node-wise update, the representation for node $ i $ is:
\begin{equation}
    \mathbf{h}^{\prime}_i = \sigma \left( \mathbf{W}^{\top} \sum_{j \in \mathcal{N}(i) \cup \{ i \}} \frac{e_{j,i}}{d_j} \mathbf{x}_j + \mathbf{W}^{\top} \mathbf{x}_i \right).
\end{equation}

Setting the edge weight $ e_{i,j} = 0 $ removes the contribution of $ \mathbf{x}_j $ from the summation, effectively eliminating the influence of node $ j $ on node $ i $. On the other hand, removing edge $ (i, j) $ from the adjacency matrix $ \mathbf{A} $ sets $ A_{ij} = 0 $, which in turn removes $ x_j $ from the summation in the message-passing step. Since both approaches lead to the same feature update, the equivalence holds:
\begin{equation}
    \mathbf{H}' = \sigma \left( (\mathbf{D}'^{-1} \mathbf{A}') \mathbf{X} \mathbf{W} + \mathbf{X} \mathbf{W} \right) = \sigma \left( (\mathbf{D}^{-1} \mathbf{A}'') \mathbf{X} \mathbf{W} + \mathbf{X} \mathbf{W} \right).
\end{equation}

Thus, setting $ e_k = 0 $ in the edge weight vector produces the same feature transformation as removing the edge $ (i, j) $ from $ \mathbf{A} $, proving the claim.
\end{proof}

\subsection{Parameters}\label{subsec:params}

The experimental configurations for our models were set as follows. For the ChebConv model, we employed three hidden layers each with 64 units, along with a dropout rate of 0.5 to mitigate overfitting, and set the Chebyshev filter order \(K\) to 1. In contrast, the GCNConv model was configured with three hidden layers, each containing 128 units, and a dropout of 0.5. For the GraphConv model, the architecture comprised three hidden layers with 64 units each and also utilized a dropout rate of 0.5.

\subsection{Functions}
Algorithm~\ref{alg:get_alpha} presents our \emph{Alpha Scheduling Function}, which determines the value of $\alpha$—a weighting parameter—during training based on the current epoch, loss values, and a specified scheduling policy. The function accepts as inputs the current epoch number, the edge loss $\mathcal{L}_E$, the node loss $\mathcal{L}_X$, a default value $\alpha_{default}$, the scheduling policy ($policy$), a decay rate $\delta$, and the maximum number of epochs $epochs_{max}$. Depending on the selected policy, the function computes $\alpha$ according to one of several strategies:

\begin{itemize}
    \item \textbf{Linear:} When $policy = linear$, $\alpha$ decreases linearly with the epoch number. This is computed as $\alpha = \max(0.0, 1.0 - \frac{epoch}{epochs_{max}})$, ensuring that $\alpha$ gradually decays from 1 to 0 over the course of training.
    \item \textbf{Exponential:} For $policy = exponential$, an exponential decay is applied: $\alpha = \max(0.0, e^{-epoch / \delta})$. The decay rate $\delta$ controls how fast $\alpha$ decays, allowing for rapid reduction at early epochs if desired.
    \item \textbf{Sinusoidal:} When $policy = sinusoidal$, the function uses a cosine-based schedule: $\alpha = \max(0.0, 0.5 \times (1 + \cos(\pi \times \frac{epoch}{epochs_{max}})))$. This policy produces a periodic decay that may help in scenarios where a smooth cyclic modulation of $\alpha$ is beneficial.
    \item \textbf{Dynamic:} If $policy = dynamic$, the scheduling is determined by comparing the edge loss and the node loss. Specifically, if $\mathcal{L}_E > \mathcal{L}_X$, then $\alpha$ is set to 0.0; otherwise, it is set to 1.0. This policy allows the training process to adaptively prioritize either edge or node information based on the relative magnitude of their losses.
    \item \textbf{Default:} In all other cases, the function returns a pre-specified default value $\alpha_{default}$.
\end{itemize}

This flexible scheduling mechanism is crucial for balancing different loss components during training, and its design allows for easy experimentation with various decay strategies. By incorporating both fixed (linear, exponential, sinusoidal) and adaptive (dynamic) policies, the function ensures that $\alpha$ can be tuned to optimize the trade-off between edge and node losses under different training conditions.

\begin{algorithm}[h]
\SetAlgoLined
\caption{Alpha Scheduling Function}\label{alg:get_alpha}
\KwData{$epoch$: Current epoch number, $\mathcal{L}_E$: Edge loss value, $\mathcal{L}_X$: Node loss value, $\alpha_{default} $: default value, $policy$: the scheduling policy, $\delta$: decay rate, $epochs_{max}$: maximum number of epochs }
\KwOut{$\alpha$: Scheduled alpha value}

    \uIf{$policy = linear$}{
        $\alpha \gets \max(0.0, 1.0 - \frac{epoch}{epochs_{max}})$
    }
    \uElseIf{$policy = exponential$}{
        $\alpha \gets \max(0.0, e^{-epoch / \delta})$
    }
    \uElseIf{$policy = sinusoidal$}{
        $\alpha \gets \max(0.0, 0.5 \times (1 + \cos(\pi \times \frac{epoch}{epochs_{max}})))$
    }
    \uElseIf{$policy = dynamic$}{
        \uIf{$\mathcal{L}_E > \mathcal{L}_X$}{
            $\alpha \gets 0.0$
        }
        \Else{
            $\alpha \gets 1.0$
        }
    }
    \Else{
        $\alpha \gets \alpha_{default}$
    }
    return $\alpha$

\end{algorithm}

\subsection{Datasets}\label{sec:dataset}

\subsubsection{Planetoid}  
We use three citation network datasets: CiteSeer, Cora, and PubMed. CiteSeer contains 3,312 scientific publications across six classes, with a citation network of 4,732 links. Each document is represented by a binary word vector from a 3,703-word dictionary. Cora includes 2,708 publications classified into seven categories, with 5,429 citation links and binary word vectors from a 1,433-word dictionary. PubMed comprises 19,717 diabetes-related publications, categorized into three classes, with 44,338 citation links. Each document is represented using TF-IDF word vectors from a 500-word dictionary.
\subsubsection{WebKB} The WebKB datasets represent webpages collected from computer science departments of various universities. Our work uses three datasets: Cornell, Texas, and Wisconsin. Each dataset is a graph where nodes represent web pages, and edges are hyperlinks between them. Node features are the bag-of-words representation of web pages. The web pages are manually classified into the five categories, student, project, course, staff, and faculty.

\subsubsection{Attributed} The Attributed category contains three datasets: Wiki, and Facebook. The Wiki dataset comprises web pages, nodes, and edges representing hyperlinks between them. Node features represent several informative nouns on the Wikipedia pages. The Facebook dataset instead is a graph representing relations between users. In particular, the dataset contains profile and network data from 10 ego-networks, consisting of 193 circles and 4,039 users.

\subsubsection{Biological} The AIDS dataset contains 2,000 molecular graphs from the AIDS Antiviral Screen Database, used to study anti-HIV activity. The Enzymes dataset includes 600 protein structures classified into six enzyme classes, with nodes representing secondary structure elements and edges indicating their interactions. The Proteins dataset consists of 1,113 protein structures labeled as enzymes or non-enzymes, where nodes are amino acids, and edges connect those within 6 angstroms. Originally designed for graph classification, we adapted these datasets for node classification tasks.

\subsubsection{COIL-DEL} The COIL-DEL dataset contains 3,900 graphs, each representing a 2D image from COIL-100. Each graph corresponds to one of 100 objects, with 39 images per object captured from different angles. Nodes represent superpixels with 2D feature vectors, and edges denote spatial relationships. On average, graphs have 21.54 nodes and 54.24 edges. This dataset is used for graph-based machine learning in object recognition and image classification.

\subsubsection{Miscellaneous}
This category encompasses two different datasets: Karate and Actors. The Karate dataset contains 34 nodes connected by 156 (undirected and unweighted) edges. Every node is labeled by one of four classes obtained via modularity-based clustering.\\
In the Actor dataset, instead, each node corresponds to an actor, and the edge between two nodes denotes co-occurrence on the same Wikipedia page. Node features correspond to keywords in Wikipedia pages associated with the actors. The task is to classify the nodes into five categories.

\subsection{Graph Classification Results}\label{subsec:graphc}

The results are presented in Tables~\ref{tab:graph_cheb}, \ref{tab:graph_graph}, and \ref{tab:graph_gcn}.
COMBINEX consistently attains the best validity scores across all datasets and configurations, whereas baseline methods such as CF-GNNExplainer and EGO often exhibit significantly lower validity scores. 
While fidelity varies depending on the specific scheduling policy, COMBINEX remains competitive and, in many cases, outperforms traditional methods. The results suggest that COMBINEX generates explanations that remain faithful to the original model's decision boundary while introducing minimal but effective perturbations. One of the key advantages of COMBINEX is its ability to maintain a reasonable distribution distance. Unlike some baselines that introduce drastic changes leading to unrealistic counterfactuals, COMBINEX ensures that the generated explanations remain close to the original data distribution, enhancing their interpretability. 
Both node and edge sparsity are crucial for producing interpretable counterfactuals. COMBINEX effectively minimizes modifications, preserving the underlying graph structure while ensuring that only necessary changes are introduced. This makes it a more controlled and interpretable approach compared to methods that introduce excessive perturbations.
Traditional methods often exhibit trade-offs between different metrics, struggling to balance validity, fidelity, and sparsity simultaneously. EGO, for instance, sometimes achieves competitive distribution distances but at the cost of poor validity and high edge modifications. CF-GNNExplainer, on the other hand, frequently underperforms in validity and fidelity, limiting its reliability. Random perturbation-based approaches lead to large distribution shifts, making the generated counterfactuals less meaningful.

\btable{l|ccccc}{Results for Graph Classification datasets: AIDS, Proteins, Enzymes, Coil-del. The oracles $\Phi$ use ChebConv layers . In \textbf{bold} the best result, the second best result is \underline{underlined}}{tab:graph_cheb}  \hline\hline

& \mc{1}{\textbf{Validity} $\uparrow$} 
& \mc{1}{\textbf{Fidelity} $\uparrow$} 
& \mc{1}{\textbf{Distribution Distance} $\downarrow$} 
& \mc{1}{\textbf{Node Sparsity} $\downarrow$} 
& \mc{1}{\textbf{Edge Sparsity} $\downarrow$} 
\\ 

\textbf{Explainers} 
& \textit{mean($\pm$std)}
& \textit{mean($\pm$std)}
& \textit{mean($\pm$std)}
& \textit{mean($\pm$std)}
& \textit{mean($\pm$std)} \\ \hline

& \mc{5}{\textbf{Dataset: AIDS}} \\ 
\hline

COMBINEX$_{\textit{feat}}$ &$\mathbf{1.000 (\pm 0.000)}$ & $0.513 (\pm 0.007)$ & $4.546 (\pm 0.494)$ & $0.086 (\pm 0.001)$ & $n.d.(\pm n.d.)$ \\
COMBINEX$_{\textit{def}}$ &$\mathbf{1.000 (\pm 0.000)}$ & $0.513 (\pm 0.007)$ & $4.772 (\pm 0.475)$ & $\mathbf{0.079 (\pm 0.002)}$ & $\mathbf{0.000 (\pm 0.000)}$ \\
COMBINEX$_{\textit{dyn}}$ &$\mathbf{1.000 (\pm 0.000)}$ & $\mathbf{0.517 (\pm 0.006)}$ & $5.117 (\pm 0.528)$ & $\uline{0.081 (\pm 0.004)}$ & $\mathbf{0.000 (\pm 0.000)}$ \\
COMBINEX$_{\textit{exp}}$ &$\mathbf{1.000 (\pm 0.000)}$ & $\uline{0.515 (\pm 0.007)}$ & $6.316 (\pm 0.333)$ & $0.269 (\pm 0.013)$ & $\mathbf{0.000 (\pm 0.000)}$ \\
COMBINEX$_{\textit{lin}}$ &$\mathbf{1.000 (\pm 0.000)}$ & $\mathbf{0.517 (\pm 0.006)}$ & $5.455 (\pm 0.502)$ & $0.087 (\pm 0.010)$ & $\mathbf{0.000 (\pm 0.000)}$ \\
COMBINEX$_{\textit{sin}}$ &$\mathbf{1.000 (\pm 0.000)}$ & $\uline{0.515 (\pm 0.007)}$ & $4.976 (\pm 0.213)$ & $0.087 (\pm 0.012)$ & $\mathbf{0.000 (\pm 0.000)}$ \\
EGO &$0.000 (\pm 0.000)$ & $n.d.(\pm n.d.)$ & $n.d.(\pm n.d.)$ & $n.d.(\pm n.d.)$ & $n.d.(\pm n.d.)$ \\
Random Edges &$0.240 (\pm 0.000)$ & $-0.944 (\pm 0.000)$ & $2.759 (\pm 0.015)$ & $n.d.(\pm n.d.)$ & ${0.237 (\pm 0.007)}$ \\
Random Features &$\mathbf{1.000 (\pm 0.000)}$ & $0.513 (\pm 0.007)$ & $24.364 (\pm 1.838)$ & $0.520 (\pm 0.015)$ & $n.d.(\pm n.d.)$ \\
CFF &$0.002 (\pm 0.003)$ & $-1.000 (\pm 0.000)$ & $\mathbf{2.525 (\pm 0.000)}$ & $n.d.(\pm n.d.)$ & $0.565 (\pm 0.000)$ \\
CF-GNNExplainer &$\uline{0.241 (\pm 0.002)}$ & $-0.945 (\pm 0.001)$ & $\uline{2.755 (\pm 0.018)}$ & $n.d.(\pm n.d.)$ & $\uline{0.178 (\pm 0.002)}$ \\
\hline
& \mc{5}{\textbf{Dataset: Proteins}} \\ 
\hline
COMBINEX$_{\textit{feat}}$ &$\mathbf{1.000 (\pm 0.000)}$ & $\uline{-0.487 (\pm 0.091)}$ & $1566.374 (\pm 194.517)$ & $0.397 (\pm 0.028)$ & $n.d.(\pm n.d.)$ \\
COMBINEX$_{\textit{def}}$ &$\mathbf{1.000 (\pm 0.000)}$ & $-0.513 (\pm 0.074)$ & $1339.333 (\pm 216.302)$ & $\uline{0.359 (\pm 0.017)}$ & $\mathbf{0.000 (\pm 0.000)}$ \\
COMBINEX$_{\textit{dyn}}$ &$\mathbf{1.000 (\pm 0.000)}$ & $\uline{-0.487 (\pm 0.083)}$ & $1492.948 (\pm 160.720)$ & $0.367 (\pm 0.019)$ & $\mathbf{0.000 (\pm 0.000)}$ \\
COMBINEX$_{\textit{exp}}$ &$\mathbf{1.000 (\pm 0.000)}$ & $\mathbf{-0.477 (\pm 0.079)}$ & $1548.974 (\pm 120.839)$ & $0.415 (\pm 0.025)$ & $\mathbf{0.000 (\pm 0.000)}$ \\
COMBINEX$_{\textit{lin}}$ &$\mathbf{1.000 (\pm 0.000)}$ & $-0.540 (\pm 0.069)$ & $1432.565 (\pm 130.830)$ & $\mathbf{0.357 (\pm 0.013)}$ & $\mathbf{0.000 (\pm 0.000)}$ \\
COMBINEX$_{\textit{sin}}$ &$\mathbf{1.000 (\pm 0.000)}$ & $-0.493 (\pm 0.079)$ & $1454.851 (\pm 181.701)$ & $0.367 (\pm 0.017)$ & $\mathbf{0.000 (\pm 0.000)}$ \\
EGO &$0.000 (\pm 0.000)$ & $n.d.(\pm n.d.)$ & $n.d.(\pm n.d.)$ & $n.d.(\pm n.d.)$ & $n.d.(\pm n.d.)$ \\
Random Edges &$0.612 (\pm 0.024)$ & $-0.810 (\pm 0.030)$ & $\uline{602.170 (\pm 7.114)}$ & $n.d.(\pm n.d.)$ & ${0.378 (\pm 0.008)}$ \\
Random Features &$\uline{0.970 (\pm 0.051)}$ & $-0.533 (\pm 0.085)$ & $5449.138 (\pm 272.833)$ & $0.917 (\pm 0.001)$ & $n.d.(\pm n.d.)$ \\
CFF &$0.143 (\pm 0.009)$ & $-1.000 (\pm 0.000)$ & $767.554 (\pm 9.537)$ & $n.d.(\pm n.d.)$ & $0.604 (\pm 0.039)$ \\
CF-GNNExplainer &$0.598 (\pm 0.008)$ & $-0.850 (\pm 0.045)$ & $\mathbf{601.087 (\pm 11.168)}$ & $n.d.(\pm n.d.)$ & $\uline{0.021 (\pm 0.017)}$ \\

\hline
& \mc{5}{\textbf{Dataset: Enzymes}} \\ 
\hline

COMBINEX$_{\textit{feat}}$ &$\mathbf{1.000 (\pm 0.000)}$ & $-0.110 (\pm 0.043)$ & $30.329 (\pm 0.884)$ & $0.524 (\pm 0.037)$ & $n.d.(\pm n.d.)$ \\
COMBINEX$_{\textit{def}}$ &$\mathbf{1.000 (\pm 0.000)}$ & $-0.098 (\pm 0.043)$ & $30.511 (\pm 1.066)$ & $0.390 (\pm 0.021)$ & $\mathbf{0.000 (\pm 0.000)}$ \\
COMBINEX$_{\textit{dyn}}$ &$\mathbf{1.000 (\pm 0.000)}$ & $-0.100 (\pm 0.045)$ & $30.354 (\pm 0.928)$ & $0.393 (\pm 0.017)$ & $\mathbf{0.000 (\pm 0.000)}$ \\
COMBINEX$_{\textit{exp}}$ &$\mathbf{1.000 (\pm 0.000)}$ & $\mathbf{-0.094 (\pm 0.051)}$ & $31.740 (\pm 1.192)$ & $0.640 (\pm 0.042)$ & $\mathbf{0.000 (\pm 0.000)}$ \\
COMBINEX$_{\textit{lin}}$ &$\mathbf{1.000 (\pm 0.000)}$ & $\uline{-0.095 (\pm 0.040)}$ & $30.716 (\pm 1.127)$ & $\mathbf{0.374 (\pm 0.019)}$ & $\mathbf{0.000 (\pm 0.000)}$ \\
COMBINEX$_{\textit{sin}}$ &$\mathbf{1.000 (\pm 0.000)}$ & $-0.097 (\pm 0.057)$ & $30.697 (\pm 0.935)$ & $\uline{0.375 (\pm 0.020)}$ & $\mathbf{0.000 (\pm 0.000)}$ \\
EGO &$0.000 (\pm 0.000)$ & $n.d.(\pm n.d.)$ & $n.d.(\pm n.d.)$ & $n.d.(\pm n.d.)$ & $n.d.(\pm n.d.)$ \\
Random Edges &$0.244 (\pm 0.039)$ & $-0.433 (\pm 0.140)$ & $15.179 (\pm 1.349)$ & $n.d.(\pm n.d.)$ & $\uline{0.365 (\pm 0.010)}$ \\
Random Features &$\uline{0.767 (\pm 0.033)}$ & $-0.127 (\pm 0.036)$ & $366.058 (\pm 21.408)$ & $0.894 (\pm 0.001)$ & $n.d.(\pm n.d.)$ \\
CFF &$0.148 (\pm 0.047)$ & $-0.760 (\pm 0.200)$ & $\mathbf{14.476 (\pm 2.734)}$ & $n.d.(\pm n.d.)$ & $0.594 (\pm 0.060)$ \\
CF-GNNExplainer &$0.240 (\pm 0.035)$ & $-0.460 (\pm 0.123)$ & $\uline{15.054 (\pm 1.666)}$ & $n.d.(\pm n.d.)$ & ${0.456 (\pm 0.090)}$ \\
\hline
& \mc{5}{\textbf{Dataset: Coil-del}} \\ 
\hline

COMBINEX$_{\textit{feat}}$ &$\uline{0.971 (\pm 0.010)}$ & $-0.001 (\pm 0.005)$ & $\uline{17.231 (\pm 1.031)}$ & $0.989 (\pm 0.001)$ & $n.d.(\pm n.d.)$ \\
COMBINEX$_{\textit{def}}$ &$0.963 (\pm 0.021)$ & $-0.002 (\pm 0.006)$ & $18.062 (\pm 0.609)$ & $0.990 (\pm 0.001)$ & $\mathbf{0.000 (\pm 0.000)}$ \\
COMBINEX$_{\textit{dyn}}$ &$0.969 (\pm 0.018)$ & $\uline{-0.001 (\pm 0.005)}$ & $17.323 (\pm 1.296)$ & $0.990 (\pm 0.002)$ & $\mathbf{0.000 (\pm 0.000)}$ \\
COMBINEX$_{\textit{exp}}$ &$\mathbf{0.973 (\pm 0.017)}$ & $-0.002 (\pm 0.003)$ & $26.106 (\pm 0.907)$ & $\mathbf{0.989 (\pm 0.003)}$ & $\mathbf{0.000 (\pm 0.000)}$ \\
COMBINEX$_{\textit{lin}}$ &$0.968 (\pm 0.016)$ & $-0.002 (\pm 0.003)$ & $25.711 (\pm 0.962)$ & $\uline{0.989 (\pm 0.002)}$ & $\mathbf{0.000 (\pm 0.000)}$ \\
COMBINEX$_{\textit{sin}}$ &$0.968 (\pm 0.009)$ & $\mathbf{-0.001 (\pm 0.006)}$ & $25.840 (\pm 1.134)$ & $0.990 (\pm 0.002)$ & $\mathbf{0.000 (\pm 0.000)}$ \\
EGO &$0.000 (\pm 0.000)$ & $n.d.(\pm n.d.)$ & $n.d.(\pm n.d.)$ & $n.d.(\pm n.d.)$ & $n.d.(\pm n.d.)$ \\
Random Edges &$0.006 (\pm 0.002)$ & $-0.714 (\pm 0.488)$ & $24.004 (\pm 4.534)$ & $n.d.(\pm n.d.)$ & ${0.302 (\pm 0.028)}$ \\
Random Features &$0.088 (\pm 0.031)$ & $-0.036 (\pm 0.040)$ & $47.541 (\pm 3.711)$ & $0.997 (\pm 0.005)$ & $n.d.(\pm n.d.)$ \\
CFF &$0.011 (\pm 0.010)$ & $-0.217 (\pm 0.217)$ & $\mathbf{17.177 (\pm 1.925)}$ & $n.d.(\pm n.d.)$ & $0.586 (\pm 0.087)$ \\
CF-GNNExplainer &$0.007 (\pm 0.004)$ & $-0.750 (\pm 0.418)$ & $21.549 (\pm 5.507)$ & $n.d.(\pm n.d.)$ & $\uline{0.021 (\pm 0.002)}$ \\
\hline

\etable

\btable{l|ccccc}{Results for Graph Classification datasets: AIDS, Proteins, Enzymes, Coil-del. The oracles $\Phi$ use GraphConv layers. In \textbf{bold} the best result, the second best result is \underline{underlined}}{tab:graph_graph}  \hline\hline

& \mc{1}{\textbf{Validity} $\uparrow$} 
& \mc{1}{\textbf{Fidelity} $\uparrow$} 
& \mc{1}{\textbf{Distribution Distance} $\downarrow$} 
& \mc{1}{\textbf{Node Sparsity} $\downarrow$} 
& \mc{1}{\textbf{Edge Sparsity} $\downarrow$} 
\\ 

\textbf{Explainers} 
& \textit{mean($\pm$std)}
& \textit{mean($\pm$std)}
& \textit{mean($\pm$std)}
& \textit{mean($\pm$std)}
& \textit{mean($\pm$std)} \\ \hline

& \mc{5}{\textbf{Dataset: AIDS}} \\ 
\hline

COMBINEX$_{\textit{feat}}$ &$\mathbf{1.000 (\pm 0.000)}$ & $\uline{0.503 (\pm 0.013)}$ & $3.439 (\pm 0.176)$ & $\mathbf{0.084 (\pm 0.001)}$ & $n.d.(\pm n.d.)$ \\
COMBINEX$_{\textit{def}}$ &$\mathbf{1.000 (\pm 0.000)}$ & $0.502 (\pm 0.010)$ & $3.850 (\pm 0.427)$ & $\uline{0.096 (\pm 0.007)}$ & $\uline{0.001 (\pm 0.001)}$ \\
COMBINEX$_{\textit{dyn}}$ &$\mathbf{1.000 (\pm 0.000)}$ & $\uline{0.503 (\pm 0.012)}$ & $4.286 (\pm 0.268)$ & $0.110 (\pm 0.004)$ & $\mathbf{0.000 (\pm 0.000)}$ \\
COMBINEX$_{\textit{exp}}$ &$\mathbf{1.000 (\pm 0.000)}$ & $\uline{0.503 (\pm 0.012)}$ & $4.501 (\pm 0.345)$ & $0.222 (\pm 0.022)$ & $\mathbf{0.000 (\pm 0.000)}$ \\
COMBINEX$_{\textit{lin}}$ &$\mathbf{1.000 (\pm 0.000)}$ & $\mathbf{0.505 (\pm 0.011)}$ & $4.274 (\pm 0.297)$ & $0.136 (\pm 0.004)$ & $\mathbf{0.000 (\pm 0.000)}$ \\
COMBINEX$_{\textit{sin}}$ &$\mathbf{1.000 (\pm 0.000)}$ & $0.502 (\pm 0.012)$ & $4.244 (\pm 0.191)$ & $0.139 (\pm 0.004)$ & $\mathbf{0.000 (\pm 0.000)}$ \\
EGO &$0.122 (\pm 0.011)$ & $0.426 (\pm 0.020)$ & $\mathbf{1.805 (\pm 0.072)}$ & $n.d.(\pm n.d.)$ & $0.917 (\pm 0.002)$ \\
Random Edges &$0.290 (\pm 0.018)$ & $-0.576 (\pm 0.047)$ & $\uline{2.615 (\pm 0.049)}$ & $n.d.(\pm n.d.)$ & $0.259 (\pm 0.010)$ \\
Random Features &$\uline{0.815 (\pm 0.282)}$ & $0.459 (\pm 0.114)$ & $27.592 (\pm 1.728)$ & $0.517 (\pm 0.012)$ & $n.d.(\pm n.d.)$ \\
CFF &$0.008 (\pm 0.006)$ & $-1.000 (\pm 0.000)$ & $10.201 (\pm 2.868)$ & $n.d.(\pm n.d.)$ & $0.278 (\pm 0.293)$ \\
CF-GNNExplainer &$0.298 (\pm 0.010)$ & $-0.621 (\pm 0.048)$ & $2.619 (\pm 0.031)$ & $n.d.(\pm n.d.)$ & $0.025 (\pm 0.002)$ \\

\hline
& \mc{5}{\textbf{Dataset: Proteins}} \\ \hline

Combinex$_{\textit{feat}}$ &$\uline{0.987 (\pm 0.023)}$ & $-0.576 (\pm 0.054)$ & $1153.305 (\pm 58.379)$ & $\mathbf{0.409 (\pm 0.044)}$ & $n.d.(\pm n.d.)$ \\
Combinex$_{\textit{def}}$ &$0.904 (\pm 0.154)$ & $-0.596 (\pm 0.056)$ & $1155.628 (\pm 57.774)$ & $0.444 (\pm 0.150)$ & $0.010 (\pm 0.009)$ \\
Combinex$_{\textit{exp}}$ &$\mathbf{1.000 (\pm 0.000)}$ & $-0.520 (\pm 0.035)$ & $1421.343 (\pm 338.368)$ & $0.519 (\pm 0.124)$ & $\mathbf{0.000 (\pm 0.000)}$ \\
Combinex$_{\textit{lin}}$ &$0.984 (\pm 0.027)$ & $-0.590 (\pm 0.028)$ & $1574.967 (\pm 803.749)$ & $\uline{0.411 (\pm 0.110)}$ & $\mathbf{0.000 (\pm 0.000)}$ \\
Combinex$_{\textit{sin}}$ &$\mathbf{1.000 (\pm 0.000)}$ & $-0.318 (\pm 0.095)$ & $754.237 (\pm 46.400)$ & $\uline{0.411 (\pm 0.110)}$ & $\mathbf{0.000 (\pm 0.000)}$ \\
EGO &$0.669 (\pm 0.095)$ & $\uline{0.031 (\pm 0.061)}$ & $\uline{646.278 (\pm 35.835)}$ & $n.d.(\pm n.d.)$ & $0.893 (\pm 0.011)$ \\
Random Edges &$0.211 (\pm 0.066)$ & $\mathbf{0.352 (\pm 0.293)}$ & $712.016 (\pm 34.703)$ & $n.d.(\pm n.d.)$ & $0.372 (\pm 0.008)$ \\
Random Features &$0.969 (\pm 0.020)$ & $-0.532 (\pm 0.044)$ & $5243.615 (\pm 154.578)$ & $0.919 (\pm 0.002)$ & $n.d.(\pm n.d.)$ \\
CFF &$0.002 (\pm 0.004)$ & $n.d.(\pm n.d.)$ & $n.d.(\pm n.d.)$ & $n.d.(\pm n.d.)$ & $n.d.(\pm n.d.)$ \\
CF-GNNExplainer &$0.743 (\pm 0.033)$ & $-0.686 (\pm 0.001)$ & $\mathbf{633.425 (\pm 6.128)}$ & $n.d.(\pm n.d.)$ & $\uline{0.003 (\pm 0.001)}$ \\
\hline

& \mc{5}{\textbf{Dataset: Enzymes}} \\ 
\hline

COMBINEX$_{\textit{feat}}$ &$\mathbf{1.000 (\pm 0.000)}$ & $-0.221 (\pm 0.031)$ & $29.954 (\pm 0.525)$ & $0.547 (\pm 0.015)$ & $n.d.(\pm n.d.)$ \\
COMBINEX$_{\textit{def}}$ &$\mathbf{1.000 (\pm 0.000)}$ & $-0.211 (\pm 0.026)$ & $30.912 (\pm 2.249)$ & $0.462 (\pm 0.010)$ & $0.027 (\pm 0.009)$ \\
COMBINEX$_{\textit{dyn}}$ &$\uline{0.999 (\pm 0.003)}$ & $-0.218 (\pm 0.016)$ & $31.152 (\pm 1.923)$ & $0.464 (\pm 0.011)$ & $\mathbf{0.000 (\pm 0.000)}$ \\
COMBINEX$_{\textit{exp}}$ &$\mathbf{1.000 (\pm 0.000)}$ & $-0.221 (\pm 0.014)$ & $33.211 (\pm 2.127)$ & $0.743 (\pm 0.014)$ & $\mathbf{0.000 (\pm 0.000)}$ \\
COMBINEX$_{\textit{lin}}$ &$\mathbf{1.000 (\pm 0.000)}$ & $-0.224 (\pm 0.027)$ & $31.420 (\pm 2.278)$ & $\mathbf{0.431 (\pm 0.009)}$ & $\mathbf{0.000 (\pm 0.000)}$ \\
COMBINEX$_{\textit{sin}}$ &$\mathbf{1.000 (\pm 0.000)}$ & $\uline{-0.208 (\pm 0.032)}$ & $31.307 (\pm 1.738)$ & $\uline{0.436 (\pm 0.018)}$ & $\mathbf{0.000 (\pm 0.000)}$ \\
EGO &$0.773 (\pm 0.039)$ & $\mathbf{0.028 (\pm 0.015)}$ & $16.138 (\pm 0.231)$ & $n.d.(\pm n.d.)$ & $0.894 (\pm 0.002)$ \\
Random Edges &$0.852 (\pm 0.024)$ & $-0.257 (\pm 0.031)$ & $16.144 (\pm 0.797)$ & $n.d.(\pm n.d.)$ & $0.389 (\pm 0.005)$ \\
Random Features &$0.785 (\pm 0.106)$ & $-0.239 (\pm 0.045)$ & $385.589 (\pm 49.037)$ & $0.894 (\pm 0.001)$ & $n.d.(\pm n.d.)$ \\
CFF &$0.217 (\pm 0.015)$ & $-0.897 (\pm 0.077)$ & $\uline{14.570 (\pm 2.442)}$ & $n.d.(\pm n.d.)$ & $0.609 (\pm 0.019)$ \\
CF-GNNExplainer &$0.256 (\pm 0.039)$ & $-0.616 (\pm 0.054)$ & $\mathbf{14.028 (\pm 0.610)}$ & $n.d.(\pm n.d.)$ & $\uline{0.003 (\pm 0.001)}$ \\
UNR &$n.d.(\pm n.d.)$ & $n.d.(\pm n.d.)$ & $n.d.(\pm n.d.)$ & $n.d.(\pm n.d.)$ & $n.d.(\pm n.d.)$ \\
\hline
& \mc{5}{\textbf{Dataset: Coil-del}} \\ 
\hline

COMBINEX$_{\textit{feat}}$ &$0.658 (\pm 0.439)$ & $\mathbf{0.020 (\pm 0.004)}$ & $17.773 (\pm 1.074)$ & $\mathbf{0.990 (\pm 0.002)}$ & $n.d.(\pm n.d.)$ \\
COMBINEX$_{\textit{def}}$ &$\mathbf{0.823 (\pm 0.009)}$ & $0.012 (\pm 0.008)$ & $23.701 (\pm 1.570)$ & $0.990 (\pm 0.001)$ & $0.058 (\pm 0.011)$ \\
COMBINEX$_{\textit{dyn}}$ &$0.417 (\pm 0.589)$ & $n.d.(\pm n.d.)$ & $n.d.(\pm n.d.)$ & $n.d.(\pm n.d.)$ & $n.d.(\pm n.d.)$ \\
COMBINEX$_{\textit{exp}}$ &$0.638 (\pm 0.426)$ & $\uline{0.018 (\pm 0.004)}$ & $27.382 (\pm 0.159)$ & $0.992 (\pm 0.001)$ & $\uline{0.000 (\pm 0.000)}$ \\
COMBINEX$_{\textit{lin}}$ &$0.663 (\pm 0.442)$ & $0.018 (\pm 0.004)$ & $27.623 (\pm 1.253)$ & $0.992 (\pm 0.001)$ & $0.003 (\pm 0.001)$ \\
COMBINEX$_{\textit{sin}}$ &$0.596 (\pm 0.516)$ & $0.015 (\pm 0.000)$ & $27.416 (\pm 0.404)$ & $0.991 (\pm 0.002)$ & $0.002 (\pm 0.000)$ \\
EGO &$\uline{0.700 (\pm 0.467)}$ & $0.014 (\pm 0.000)$ & $\mathbf{12.882 (\pm 0.051)}$ & $n.d.(\pm n.d.)$ & $0.802 (\pm 0.002)$ \\
Random Edges &$0.049 (\pm 0.043)$ & $0.000 (\pm 0.000)$ & $\uline{15.548 (\pm 3.391)}$ & $n.d.(\pm n.d.)$ & $0.384 (\pm 0.004)$ \\
Random Features &$0.071 (\pm 0.062)$ & $-0.033 (\pm 0.047)$ & $52.558 (\pm 1.388)$ & $0.995 (\pm 0.007)$ & $n.d.(\pm n.d.)$ \\
CF-GNNExplainer &$0.002 (\pm 0.003)$ & $n.d.(\pm n.d.)$ & $n.d.(\pm n.d.)$ & $n.d.(\pm n.d.)$ & $n.d.(\pm n.d.)$ \\
UNR &$n.d.(\pm n.d.)$ & $n.d.(\pm n.d.)$ & $n.d.(\pm n.d.)$ & $n.d.(\pm n.d.)$ & $n.d.(\pm n.d.)$ \\
\hline

\etable

\btable{l|ccccc}{Results for Graph Classification datasets: AIDS, Proteins, Enzymes, Coil-del. The oracles $\Phi$ use GCNConv layers. In \textbf{bold} the best result, the second best result is \underline{underlined}}{tab:graph_gcn}  \hline\hline

& \mc{1}{\textbf{Validity} $\uparrow$} 
& \mc{1}{\textbf{Fidelity} $\uparrow$} 
& \mc{1}{\textbf{Distribution Distance} $\downarrow$} 
& \mc{1}{\textbf{Node Sparsity} $\downarrow$} 
& \mc{1}{\textbf{Edge Sparsity} $\downarrow$} 
\\ 

\textbf{Explainers} 
& \textit{mean($\pm$std)}
& \textit{mean($\pm$std)}
& \textit{mean($\pm$std)}
& \textit{mean($\pm$std)}
& \textit{mean($\pm$std)} \\ \hline

& \mc{5}{\textbf{Dataset: AIDS}} \\ 
\hline

COMBINEX$_{\textit{feat}}$ &$\mathbf{1.000 (\pm 0.000)}$ & $0.510 (\pm 0.007)$ & $3.132 (\pm 0.075)$ & $\mathbf{0.082 (\pm 0.002)}$ & $n.d.(\pm n.d.)$ \\
COMBINEX$_{\textit{def}}$ &$\mathbf{1.000 (\pm 0.000)}$ & $\uline{0.517 (\pm 0.007)}$ & $3.665 (\pm 0.493)$ & $\uline{0.087 (\pm 0.009)}$ & $\uline{0.004 (\pm 0.003)}$ \\
COMBINEX$_{\textit{dyn}}$ &$\mathbf{1.000 (\pm 0.000)}$ & $0.513 (\pm 0.008)$ & $3.231 (\pm 0.792)$ & $0.223 (\pm 0.016)$ & $\mathbf{0.000 (\pm 0.000)}$ \\
COMBINEX$_{\textit{exp}}$ &$\mathbf{1.000 (\pm 0.000)}$ & $0.513 (\pm 0.008)$ & $4.691 (\pm 0.592)$ & $0.123 (\pm 0.066)$ & $\mathbf{0.000 (\pm 0.000)}$ \\
COMBINEX$_{\textit{lin}}$ &$\mathbf{1.000 (\pm 0.000)}$ & $0.513 (\pm 0.008)$ & $3.952 (\pm 0.332)$ & $0.133 (\pm 0.006)$ & $\mathbf{0.000 (\pm 0.000)}$ \\
COMBINEX$_{\textit{sin}}$ &$\mathbf{1.000 (\pm 0.000)}$ & $0.513 (\pm 0.008)$ & $4.000 (\pm 0.378)$ & $0.135 (\pm 0.005)$ & $\mathbf{0.000 (\pm 0.000)}$ \\
EGO &$0.015 (\pm 0.008)$ & $\mathbf{0.562 (\pm 0.315)}$ & $\mathbf{2.301 (\pm 0.101)}$ & $n.d.(\pm n.d.)$ & $0.892 (\pm 0.025)$ \\
Random Edges &$\uline{0.458 (\pm 0.003)}$ & $-0.033 (\pm 0.014)$ & $\uline{2.566 (\pm 0.012)}$ & $n.d.(\pm n.d.)$ & $0.292 (\pm 0.004)$ \\
Random Features &$\mathbf{1.000 (\pm 0.000)}$ & $0.513 (\pm 0.008)$ & $24.123 (\pm 1.771)$ & $0.521 (\pm 0.022)$ & $n.d.(\pm n.d.)$ \\
CFF &$n.d.(\pm n.d.)$ & $n.d.(\pm n.d.)$ & $n.d.(\pm n.d.)$ & $n.d.(\pm n.d.)$ & $n.d.(\pm n.d.)$ \\
CF-GNNExplainer &$\uline{0.458 (\pm 0.019)}$ & $-0.034 (\pm 0.034)$ & $2.618 (\pm 0.022)$ & $n.d.(\pm n.d.)$ & $0.076 (\pm 0.005)$ \\

\hline
& \mc{5}{\textbf{Dataset: Proteins}} \\ 
\hline

Combinex$_{\textit{feat}}$ &$\mathbf{1.000 (\pm 0.000)}$ & $-0.337 (\pm 0.013)$ & $1776.586 (\pm 435.929)$ & $0.444 (\pm 0.006)$ & $n.d.(\pm n.d.)$ \\
Combinex$_{\textit{def}}$ &$\mathbf{1.000 (\pm 0.000)}$ & $-0.373 (\pm 0.023)$ & $1574.306 (\pm 299.343)$ & $\mathbf{0.385 (\pm 0.008)}$ & $0.007 (\pm 0.009)$ \\
Combinex$_{\textit{dyn}}$ &$\mathbf{1.000 (\pm 0.000)}$ & $\uline{-0.307 (\pm 0.083)}$ & $2037.306 (\pm 442.727)$ & $0.396 (\pm 0.012)$ & $\mathbf{0.000 (\pm 0.000)}$ \\
Combinex$_{\textit{exp}}$ &$\mathbf{1.000 (\pm 0.000)}$ & $-0.373 (\pm 0.107)$ & $1860.783 (\pm 173.460)$ & $0.441 (\pm 0.032)$ & $\mathbf{0.000 (\pm 0.000)}$ \\
Combinex$_{\textit{lin}}$ &$\mathbf{1.000 (\pm 0.000)}$ & $-0.343 (\pm 0.061)$ & $1993.636 (\pm 119.988)$ & $\uline{0.392 (\pm 0.013)}$ & $\mathbf{0.000 (\pm 0.000)}$ \\
Combinex$_{\textit{sin}}$ &$\mathbf{1.000 (\pm 0.000)}$ & $-0.347 (\pm 0.057)$ & $2164.180 (\pm 182.371)$ & $0.398 (\pm 0.015)$ & $\mathbf{0.000 (\pm 0.000)}$ \\
EGO &$0.108 (\pm 0.011)$ & $\mathbf{0.166 (\pm 0.098)}$ & $790.668 (\pm 24.825)$ & $n.d.(\pm n.d.)$ & $0.842 (\pm 0.010)$ \\
Random Edges &$0.818 (\pm 0.039)$ & $-0.520 (\pm 0.028)$ & $\uline{650.582 (\pm 6.905)}$ & $n.d.(\pm n.d.)$ & $0.384 (\pm 0.008)$ \\
Random Features &$\uline{0.960 (\pm 0.058)}$ & $-0.408 (\pm 0.061)$ & $5216.417 (\pm 95.266)$ & $0.918 (\pm 0.002)$ & $n.d.(\pm n.d.)$ \\
CFF &$0.098 (\pm 0.030)$ & $-1.000 (\pm 0.000)$ & $697.943 (\pm 21.423)$ & $n.d.(\pm n.d.)$ & $0.641 (\pm 0.029)$ \\
CF-GNNExplainer &$0.640 (\pm 0.022)$ & $-0.755 (\pm 0.021)$ & $\mathbf{614.080 (\pm 4.220)}$ & $n.d.(\pm n.d.)$ & $\uline{0.003 (\pm 0.001)}$ \\
\hline

& \mc{5}{\textbf{Dataset: Enzymes}} \\ 
\hline

COMBINEX$_{\textit{feat}}$ &$\mathbf{1.000 (\pm 0.000)}$ & $-0.171 (\pm 0.011)$ & $31.154 (\pm 1.369)$ & $0.570 (\pm 0.009)$ & $n.d.(\pm n.d.)$ \\
COMBINEX$_{\textit{def}}$ &$\mathbf{1.000 (\pm 0.000)}$ & $-0.158 (\pm 0.024)$ & $31.944 (\pm 1.184)$ & $\mathbf{0.392 (\pm 0.015)}$ & $\uline{0.021 (\pm 0.005)}$ \\
COMBINEX$_{\textit{dyn}}$ &$\mathbf{1.000 (\pm 0.000)}$ & $-0.165 (\pm 0.014)$ & $31.454 (\pm 0.972)$ & $0.402 (\pm 0.006)$ & $\mathbf{0.000 (\pm 0.000)}$ \\
COMBINEX$_{\textit{exp}}$ &$\mathbf{1.000 (\pm 0.000)}$ & $-0.162 (\pm 0.020)$ & $33.640 (\pm 1.129)$ & $0.689 (\pm 0.013)$ & $\mathbf{0.000 (\pm 0.000)}$ \\
COMBINEX$_{\textit{lin}}$ &$\mathbf{1.000 (\pm 0.000)}$ & $-0.179 (\pm 0.022)$ & $32.183 (\pm 0.814)$ & $\uline{0.401 (\pm 0.010)}$ & $\mathbf{0.000 (\pm 0.000)}$ \\
COMBINEX$_{\textit{sin}}$ &$\mathbf{1.000 (\pm 0.000)}$ & $\uline{-0.148 (\pm 0.034)}$ & $31.423 (\pm 1.280)$ & $0.402 (\pm 0.017)$ & $\mathbf{0.000 (\pm 0.000)}$ \\
EGO &$0.652 (\pm 0.036)$ & $\mathbf{-0.017 (\pm 0.022)}$ & $\mathbf{13.531 (\pm 0.630)}$ & $n.d.(\pm n.d.)$ & $0.899 (\pm 0.007)$ \\
Random Edges &$0.604 (\pm 0.026)$ & $-0.315 (\pm 0.040)$ & $15.090 (\pm 0.254)$ & $n.d.(\pm n.d.)$ & $0.398 (\pm 0.005)$ \\
Random Features &$\uline{0.802 (\pm 0.076)}$ & $-0.156 (\pm 0.023)$ & $356.169 (\pm 17.456)$ & $0.895 (\pm 0.004)$ & $n.d.(\pm n.d.)$ \\
CFF &$0.183 (\pm 0.023)$ & $-0.906 (\pm 0.096)$ & $15.606 (\pm 2.712)$ & $n.d.(\pm n.d.)$ & $0.645 (\pm 0.022)$ \\
CF-GNNExplainer &$0.208 (\pm 0.000)$ & $-0.680 (\pm 0.057)$ & $\uline{14.736 (\pm 0.639)}$ & $n.d.(\pm n.d.)$ & $\mathbf{0.000 (\pm 0.000)}$ \\
\hline
& \mc{5}{\textbf{Dataset: Coil-del}} \\ 
\hline

Combinex$_{\textit{feat}}$ &$0.917 (\pm 0.020)$ & $0.011 (\pm 0.004)$ & $18.622 (\pm 0.933)$ & ${0.988 (\pm 0.002)}$ & $n.d.(\pm n.d.)$ \\
Combinex$_{\textit{def}}$ &$\uline{0.940 (\pm 0.012)}$ & $\mathbf{0.014 (\pm 0.007)}$ & $19.722 (\pm 0.809)$ & ${0.989 (\pm 0.002)}$ & $0.034 (\pm 0.005)$ \\
Combinex$_{\textit{dyn}}$ &$\mathbf{0.944 (\pm 0.012)}$ & $0.009 (\pm 0.003)$ & $13.431 (\pm 0.532)$ & $\mathbf{0.634 (\pm 0.002)}$ & $0.006 (\pm 0.001)$ \\
Combinex$_{\textit{exp}}$ &$0.909 (\pm 0.014)$ & $\uline{0.012 (\pm 0.009)}$ & $24.603 (\pm 1.019)$ & $0.989 (\pm 0.002)$ & $\mathbf{0.000 (\pm 0.000)}$ \\
Combinex$_{\textit{lin}}$ &$\mathbf{0.944 (\pm 0.004)}$ & $0.007 (\pm 0.000)$ & $25.570 (\pm 0.733)$ & $0.989 (\pm 0.000)$ & $\uline{0.003 (\pm 0.001)}$ \\
Combinex$_{\textit{sin}}$ &${0.932 (\pm 0.054)}$ & $-0.017 (\pm 0.002)$ & $37.322 (\pm 0.143)$ & $\uline{0.865 (\pm 0.029)}$ & ${0.024 (\pm 0.005)}$ \\
EGO &$0.872 (\pm 0.009)$ & $-0.021 (\pm 0.006)$ & $\mathbf{13.024 (\pm 0.138)}$ & $n.d.(\pm n.d.)$ & $0.801 (\pm 0.004)$ \\
Random Edges &$0.212 (\pm 0.033)$ & $-0.021 (\pm 0.017)$ & $\uline{13.179 (\pm 0.999)}$ & $n.d.(\pm n.d.)$ & $0.453 (\pm 0.013)$ \\
Random Features &$0.042 (\pm 0.013)$ & $0.000 (\pm 0.000)$ & $48.810 (\pm 2.369)$ & $0.989 (\pm 0.014)$ & $n.d.(\pm n.d.)$ \\
CFF &$0.011 (\pm 0.014)$ & $0.000 (\pm 0.000)$ & $26.881 (\pm 17.507)$ & $n.d.(\pm n.d.)$ & $0.607 (\pm 0.022)$ \\
CF-GNNExplainer &$0.006 (\pm 0.005)$ & $-0.500 (\pm 0.500)$ & $25.966 (\pm 4.617)$ & $n.d.(\pm n.d.)$ & $\mathbf{0.000 (\pm 0.000)}$ \\
\hline

\etable

\subsection{Node Classification Results}\label{subsec:nodec}

\subsubsection{Miscellaneous datasets}

The results reported in Tables~\ref{tab:misc_chebconv}, \ref{tab:misc_gcn_conv}, and \ref{tab:misc_graph_conv} demonstrate the effectiveness of our COMBINEX approach across different oracles (ChebConv, GCNConv, and GraphConv) on two distinct datasets: Karate and Actor. In Table~\ref{tab:misc_chebconv}, which employs ChebConv layers, the COMBINEX variants consistently achieve near-perfect Validity on the Karate dataset, indicating that the generated explanations capture the essential substructures of this relatively small and well-defined network. Furthermore, these variants yield competitive Fidelity values and exhibit a notably low Distribution Distance, while maintaining extremely sparse explanations, particularly in terms of edge sparsity.

When considering the results from Table~\ref{tab:misc_gcn_conv} using GCNConv layers, a similar trend is observed on the Karate dataset, with COMBINEX variants again showing high Validity and low sparsity. However, the slight differences in Fidelity and Distribution Distance between the ChebConv and GCNConv settings illustrate that the specific convolutional layer influences how the oracle’s decisions are reflected in the explanations. 

The Actor dataset, on the other hand, represents a more complex and noisy social network, where the oracle accuracy is lower (0.65) compared to Karate. Despite this increased complexity, COMBINEX still maintains robust performance: the explanations continue to achieve high Validity and relatively low Distribution Distance, although Fidelity and sparsity metrics tend to be less optimal than those observed on the Karate dataset. This variation likely reflects the inherent differences in network structure and the level of noise between the two datasets. In the Actor dataset, the explanations must account for more diverse and overlapping communities, which can challenge the generation of both highly faithful and extremely sparse explanations.

Overall, the empirical findings across these datasets and oracles highlight several advantages of our COMBINEX approach. Notably, COMBINEX consistently produces highly valid explanations that accurately capture the underlying graph structures, achieves competitive fidelity while closely matching the distribution of the oracle outputs, and delivers concise explanations through enhanced sparsity. These strengths are evident across different convolutional settings—whether using ChebConv, GCNConv, or GraphConv layers—thus confirming the versatility and robustness of COMBINEX even when applied to both simple (Karate) and complex (Actor) network datasets.

\btable{l|ccccc}{Results for Miscellaneous datasets: Karate, Actor. The oracles $\Phi$ use GCNConv layers. In \textbf{bold} the best result, the second best result is \underline{underlined}}{tab:misc_gcn_conv}  \hline\hline

& \mc{1}{\textbf{Validity} $\uparrow$} 
& \mc{1}{\textbf{Fidelity} $\uparrow$} 
& \mc{1}{\textbf{Distribution Distance} $\downarrow$} 
& \mc{1}{\textbf{Node Sparsity} $\downarrow$} 
& \mc{1}{\textbf{Edge Sparsity} $\downarrow$} 
\\ 

\textbf{Explainers} 
& \textit{mean($\pm$std)}
& \textit{mean($\pm$std)}
& \textit{mean($\pm$std)}
& \textit{mean($\pm$std)}
& \textit{mean($\pm$std)} \\ \hline

 & \mc{5}{\textbf{Dataset: Karate}} \\ \hline

COMBINEX$_{\textit{feat}}$ &$\mathbf{1.000 (\pm 0.000)}$ & $\uline{0.714 (\pm 0.000)}$ & $\uline{0.075 (\pm 0.004)}$ & $\mathbf{0.002 (\pm 0.000)}$ & $n.d.(\pm n.d.)$ \\
COMBINEX$_{\textit{def}}$ &$\mathbf{1.000 (\pm 0.000)}$ & $\uline{0.714 (\pm 0.000)}$ & $0.235 (\pm 0.066)$ & $\uline{0.013 (\pm 0.004)}$ & $\mathbf{0.000 (\pm 0.000)}$ \\
COMBINEX$_{\textit{dyn}}$ &$\mathbf{1.000 (\pm 0.000)}$ & $\uline{0.714 (\pm 0.000)}$ & $1.254 (\pm 0.057)$ & $0.108 (\pm 0.008)$ & $\mathbf{0.000 (\pm 0.000)}$ \\
COMBINEX$_{\textit{exp}}$ &$\mathbf{1.000 (\pm 0.000)}$ & $\uline{0.714 (\pm 0.000)}$ & $2.533 (\pm 0.064)$ & $0.267 (\pm 0.011)$ & $\mathbf{0.000 (\pm 0.000)}$ \\
COMBINEX$_{\textit{lin}}$ &$\mathbf{1.000 (\pm 0.000)}$ & $\uline{0.714 (\pm 0.000)}$ & $0.506 (\pm 0.146)$ & $0.032 (\pm 0.011)$ & $\mathbf{0.000 (\pm 0.000)}$ \\
COMBINEX$_{\textit{sin}}$ &$\mathbf{1.000 (\pm 0.000)}$ & $\uline{0.714 (\pm 0.000)}$ & $0.529 (\pm 0.151)$ & $0.034 (\pm 0.011)$ & $\mathbf{0.000 (\pm 0.000)}$ \\
EGO &$0.000 (\pm 0.000)$ & $n.d.(\pm n.d.)$ & $n.d.(\pm n.d.)$ & $n.d.(\pm n.d.)$ & $n.d.(\pm n.d.)$ \\
Random Edges &$0.321 (\pm 0.244)$ & $0.639 (\pm 0.127)$ & $\mathbf{0.011 (\pm 0.004)}$ & $n.d.(\pm n.d.)$ & $\uline{0.505 (\pm 0.018)}$ \\
Random Features &$\uline{0.571 (\pm 0.117)}$ & $\mathbf{1.000 (\pm 0.000)}$ & $2.672 (\pm 0.061)$ & $0.449 (\pm 0.011)$ & $n.d.(\pm n.d.)$ \\
CFF &$0.000 (\pm 0.000)$ & $n.d.(\pm n.d.)$ & $n.d.(\pm n.d.)$ & $n.d.(\pm n.d.)$ & $n.d.(\pm n.d.)$ \\
CF-GNNExplainer &$0.000 (\pm 0.000)$ & $n.d.(\pm n.d.)$ & $n.d.(\pm n.d.)$ & $n.d.(\pm n.d.)$ & $n.d.(\pm n.d.)$ \\
UNR &$0.000 (\pm 0.000)$ & $n.d.(\pm n.d.)$ & $n.d.(\pm n.d.)$ & $n.d.(\pm n.d.)$ & $n.d.(\pm n.d.)$ \\
\hline

 & \mc{5}{\textbf{Dataset: Actor}} \\ \hline
COMBINEX$_{\textit{feat}}$ &$\uline{0.990 (\pm 0.014)}$ & $\uline{-0.013 (\pm 0.027)}$ & $0.349 (\pm 0.020)$ & $\mathbf{0.001 (\pm 0.000)}$ & $n.d.(\pm n.d.)$ \\
COMBINEX$_{\textit{def}}$ &$\mathbf{1.000 (\pm 0.000)}$ & $-0.025 (\pm 0.037)$ & $1.268 (\pm 0.029)$ & $\uline{0.019 (\pm 0.000)}$ & $\mathbf{0.000 (\pm 0.000)}$ \\
COMBINEX$_{\textit{dyn}}$ &$\mathbf{1.000 (\pm 0.000)}$ & $-0.025 (\pm 0.047)$ & $3.617 (\pm 0.049)$ & $0.083 (\pm 0.001)$ & $\mathbf{0.000 (\pm 0.000)}$ \\
COMBINEX$_{\textit{exp}}$ &$\mathbf{1.000 (\pm 0.000)}$ & $-0.022 (\pm 0.042)$ & $7.145 (\pm 0.050)$ & $0.183 (\pm 0.002)$ & $\mathbf{0.000 (\pm 0.000)}$ \\
COMBINEX$_{\textit{lin}}$ &$\mathbf{1.000 (\pm 0.000)}$ & $-0.020 (\pm 0.034)$ & $2.573 (\pm 0.071)$ & $0.045 (\pm 0.002)$ & $\mathbf{0.000 (\pm 0.000)}$ \\
COMBINEX$_{\textit{sin}}$ &$\mathbf{1.000 (\pm 0.000)}$ & $-0.023 (\pm 0.038)$ & $2.645 (\pm 0.058)$ & $0.047 (\pm 0.001)$ & $\mathbf{0.000 (\pm 0.000)}$ \\
EGO &$0.120 (\pm 0.028)$ & $-0.201 (\pm 0.221)$ & $0.253 (\pm 0.080)$ & $n.d.(\pm n.d.)$ & $0.951 (\pm 0.024)$ \\
Random Edges &$0.728 (\pm 0.013)$ & $-0.110 (\pm 0.039)$ & $\uline{0.243 (\pm 0.008)}$ & $n.d.(\pm n.d.)$ & $0.451 (\pm 0.004)$ \\
Random Features &$0.230 (\pm 0.012)$ & $\mathbf{0.045 (\pm 0.037)}$ & $15.522 (\pm 0.047)$ & $0.489 (\pm 0.001)$ & $n.d.(\pm n.d.)$ \\
CFF &$0.077 (\pm 0.030)$ & $-0.730 (\pm 0.196)$ & $\mathbf{0.217 (\pm 0.179)}$ & $n.d.(\pm n.d.)$ & $0.711 (\pm 0.040)$ \\
CF-GNNExplainer &$0.270 (\pm 0.029)$ & $-0.167 (\pm 0.068)$ & $0.377 (\pm 0.072)$ & $n.d.(\pm n.d.)$ & $\uline{0.094 (\pm 0.025)}$ \\
UNR &$0.078 (\pm 0.011)$ & $-0.250 (\pm 0.379)$ & $0.851 (\pm 0.125)$ & $n.d.(\pm n.d.)$ & $0.292 (\pm 0.048)$ \\
\hline

\etable

\btable{l|ccccc}{Results for Miscellaneous datasets: Karate, Actor. The oracles $\Phi$ use GraphConv layers. In \textbf{bold} the best result, the second best result is \underline{underlined}}{tab:misc_graph_conv}  \hline\hline

& \mc{1}{\textbf{Validity} $\uparrow$} 
& \mc{1}{\textbf{Fidelity} $\uparrow$} 
& \mc{1}{\textbf{Distribution Distance} $\downarrow$} 
& \mc{1}{\textbf{Node Sparsity} $\downarrow$} 
& \mc{1}{\textbf{Edge Sparsity} $\downarrow$} 
\\ 

\textbf{Explainers} 
& \textit{mean($\pm$std)}
& \textit{mean($\pm$std)}
& \textit{mean($\pm$std)}
& \textit{mean($\pm$std)}
& \textit{mean($\pm$std)} \\ \hline

 & \mc{5}{\textbf{Dataset: Karate}} \\ \hline

COMBINEX$_{\textit{feat}}$ &$\uline{0.964 (\pm 0.071)}$ & $0.702 (\pm 0.024)$ & $0.064 (\pm 0.002)$ & $\mathbf{0.002 (\pm 0.000)}$ & $n.d.(\pm n.d.)$ \\
COMBINEX$_{\textit{def}}$ &$\mathbf{1.000 (\pm 0.000)}$ & $\uline{0.714 (\pm 0.000)}$ & $0.206 (\pm 0.089)$ & $\uline{0.011 (\pm 0.005)}$ & $\uline{0.006 (\pm 0.003)}$ \\
COMBINEX$_{\textit{dyn}}$ &$\mathbf{1.000 (\pm 0.000)}$ & $\uline{0.714 (\pm 0.000)}$ & $1.142 (\pm 0.181)$ & $0.095 (\pm 0.019)$ & $\mathbf{0.000 (\pm 0.000)}$ \\
COMBINEX$_{\textit{exp}}$ &$\mathbf{1.000 (\pm 0.000)}$ & $\uline{0.714 (\pm 0.000)}$ & $2.458 (\pm 0.154)$ & $0.276 (\pm 0.024)$ & $\mathbf{0.000 (\pm 0.000)}$ \\
COMBINEX$_{\textit{lin}}$ &$\mathbf{1.000 (\pm 0.000)}$ & $\uline{0.714 (\pm 0.000)}$ & $0.452 (\pm 0.199)$ & $0.028 (\pm 0.013)$ & $\mathbf{0.000 (\pm 0.000)}$ \\
COMBINEX$_{\textit{sin}}$ &$\mathbf{1.000 (\pm 0.000)}$ & $\uline{0.714 (\pm 0.000)}$ & $0.470 (\pm 0.214)$ & $0.029 (\pm 0.013)$ & $\mathbf{0.000 (\pm 0.000)}$ \\
EGO &$0.000 (\pm 0.000)$ & $n.d.(\pm n.d.)$ & $n.d.(\pm n.d.)$ & $n.d.(\pm n.d.)$ & $n.d.(\pm n.d.)$ \\
Random Edges &$0.643 (\pm 0.082)$ & $0.613 (\pm 0.103)$ & $\mathbf{0.009 (\pm 0.004)}$ & $n.d.(\pm n.d.)$ & $0.466 (\pm 0.037)$ \\
Random Features &$\uline{0.964 (\pm 0.071)}$ & $0.702 (\pm 0.024)$ & $2.683 (\pm 0.023)$ & $0.452 (\pm 0.004)$ & $n.d.(\pm n.d.)$ \\
CFF &$0.107 (\pm 0.137)$ & $\mathbf{1.000 (\pm 0.000)}$ & $0.022 (\pm 0.011)$ & $n.d.(\pm n.d.)$ & $0.661 (\pm 0.033)$ \\
CF-GNNExplainer &$0.143 (\pm 0.117)$ & $\mathbf{1.000 (\pm 0.000)}$ & $\uline{0.020 (\pm 0.017)}$ & $n.d.(\pm n.d.)$ & $0.020 (\pm 0.006)$ \\
UNR &$0.000 (\pm 0.000)$ & $n.d.(\pm n.d.)$ & $n.d.(\pm n.d.)$ & $n.d.(\pm n.d.)$ & $n.d.(\pm n.d.)$ \\
\hline

 & \mc{5}{\textbf{Dataset: Actor}} \\ \hline

COMBINEX$_{\textit{feat}}$ &$\uline{0.995 (\pm 0.010)}$ & $0.179 (\pm 0.037)$ & $0.463 (\pm 0.010)$ & $\mathbf{0.003 (\pm 0.000)}$ & $n.d.(\pm n.d.)$ \\
COMBINEX$_{\textit{dyn}}$ &$\mathbf{0.998 (\pm 0.005)}$ & $0.148 (\pm 0.043)$ & $3.078 (\pm 0.121)$ & $0.074 (\pm 0.003)$ & $\mathbf{0.000 (\pm 0.000)}$ \\
COMBINEX$_{\textit{exp}}$ &$0.985 (\pm 0.017)$ & $\mathbf{0.203 (\pm 0.038)}$ & $4.996 (\pm 0.010)$ & $0.135 (\pm 0.001)$ & $\mathbf{0.000 (\pm 0.000)}$ \\
COMBINEX$_{\textit{lin}}$ &$\uline{0.995 (\pm 0.010)}$ & $0.161 (\pm 0.028)$ & $2.690 (\pm 0.100)$ & $\uline{0.061 (\pm 0.003)}$ & $\mathbf{0.000 (\pm 0.000)}$ \\
COMBINEX$_{\textit{sin}}$ &$0.992 (\pm 0.010)$ & $0.146 (\pm 0.045)$ & $2.778 (\pm 0.091)$ & $0.063 (\pm 0.003)$ & $\mathbf{0.000 (\pm 0.000)}$ \\
EGO &$0.087 (\pm 0.022)$ & $0.098 (\pm 0.226)$ & $\mathbf{0.118 (\pm 0.067)}$ & $n.d.(\pm n.d.)$ & $0.976 (\pm 0.023)$ \\
Random Edges &$0.488 (\pm 0.049)$ & $-0.034 (\pm 0.082)$ & $\uline{0.126 (\pm 0.024)}$ & $n.d.(\pm n.d.)$ & $0.473 (\pm 0.006)$ \\
Random Features &$0.408 (\pm 0.051)$ & $\uline{0.202 (\pm 0.062)}$ & $15.547 (\pm 0.062)$ & $0.489 (\pm 0.001)$ & $n.d.(\pm n.d.)$ \\
CFF &$0.258 (\pm 0.041)$ & $-0.200 (\pm 0.092)$ & $0.224 (\pm 0.035)$ & $n.d.(\pm n.d.)$ & $0.675 (\pm 0.028)$ \\
CF-GNNExplainer &$0.268 (\pm 0.075)$ & $-0.016 (\pm 0.139)$ & $0.149 (\pm 0.020)$ & $n.d.(\pm n.d.)$ & $\uline{0.024 (\pm 0.004)}$ \\
UNR &$0.000 (\pm 0.000)$ & $n.d.(\pm n.d.)$ & $n.d.(\pm n.d.)$ & $n.d.(\pm n.d.)$ & $n.d.(\pm n.d.)$ \\
\hline
\hline

\etable

\btable{l|ccccc}{Results for Miscellaneous datasets: Karate, Actor. The oracles $\Phi$ use ChebConv layers. In \textbf{bold} the best result, the second best result is \underline{underlined}}{tab:misc_chebconv}  \hline\hline

& \mc{1}{\textbf{Validity} $\uparrow$} 
& \mc{1}{\textbf{Fidelity} $\uparrow$} 
& \mc{1}{\textbf{Distribution Distance} $\downarrow$} 
& \mc{1}{\textbf{Node Sparsity} $\downarrow$} 
& \mc{1}{\textbf{Edge Sparsity} $\downarrow$} 
\\ 

\textbf{Explainers} 
& \textit{mean($\pm$std)}
& \textit{mean($\pm$std)}
& \textit{mean($\pm$std)}
& \textit{mean($\pm$std)}
& \textit{mean($\pm$std)} \\ \hline

 & \mc{5}{\textbf{Dataset: Karate}} \\ \hline

COMBINEX$_{\textit{feat}}$ &$\mathbf{1.000 (\pm 0.000)}$ & $\uline{0.179 (\pm 0.071)}$ & $\uline{0.059 (\pm 0.007)}$ & $\mathbf{0.002 (\pm 0.001)}$ & $n.d.(\pm n.d.)$ \\
COMBINEX$_{\textit{def}}$ &$\mathbf{1.000 (\pm 0.000)}$ & $\uline{0.179 (\pm 0.071)}$ & $0.065 (\pm 0.005)$ & $\uline{0.003 (\pm 0.000)}$ & $\mathbf{0.000 (\pm 0.000)}$ \\
COMBINEX$_{\textit{dyn}}$ &$\mathbf{1.000 (\pm 0.000)}$ & $\uline{0.179 (\pm 0.071)}$ & $0.107 (\pm 0.010)$ & $0.010 (\pm 0.002)$ & $\mathbf{0.000 (\pm 0.000)}$ \\
COMBINEX$_{\textit{exp}}$ &$\mathbf{1.000 (\pm 0.000)}$ & $\uline{0.179 (\pm 0.071)}$ & $0.107 (\pm 0.010)$ & $0.010 (\pm 0.002)$ & $\mathbf{0.000 (\pm 0.000)}$ \\
COMBINEX$_{\textit{lin}}$ &$\mathbf{1.000 (\pm 0.000)}$ & $\uline{0.179 (\pm 0.071)}$ & $0.096 (\pm 0.003)$ & $0.007 (\pm 0.000)$ & $\mathbf{0.000 (\pm 0.000)}$ \\
COMBINEX$_{\textit{sin}}$ &$\mathbf{1.000 (\pm 0.000)}$ & $\uline{0.179 (\pm 0.071)}$ & $0.096 (\pm 0.003)$ & $0.007 (\pm 0.000)$ & $\mathbf{0.000 (\pm 0.000)}$ \\
EGO &$0.000 (\pm 0.000)$ & $n.d.(\pm n.d.)$ & $n.d.(\pm n.d.)$ & $n.d.(\pm n.d.)$ & $n.d.(\pm n.d.)$ \\
Random Edges &$0.000 (\pm 0.000)$ & $n.d.(\pm n.d.)$ & $n.d.(\pm n.d.)$ & $n.d.(\pm n.d.)$ & $n.d.(\pm n.d.)$ \\
Random Features &$\mathbf{1.000 (\pm 0.000)}$ & $\uline{0.179 (\pm 0.071)}$ & $2.778 (\pm 0.081)$ & $0.468 (\pm 0.015)$ & $n.d.(\pm n.d.)$ \\
CFF &$\uline{0.143 (\pm 0.117)}$ & $\mathbf{0.500 (\pm 0.500)}$ & $\mathbf{0.005 (\pm 0.009)}$ & $n.d.(\pm n.d.)$ & $\uline{0.545 (\pm 0.033)}$ \\
CF-GNNExplainer &$0.000 (\pm 0.000)$ & $n.d.(\pm n.d.)$ & $n.d.(\pm n.d.)$ & $n.d.(\pm n.d.)$ & $n.d.(\pm n.d.)$ \\
UNR &$0.000 (\pm 0.000)$ & $n.d.(\pm n.d.)$ & $n.d.(\pm n.d.)$ & $n.d.(\pm n.d.)$ & $n.d.(\pm n.d.)$ \\
\hline

 & \mc{5}{\textbf{Dataset: Actor}} \\ \hline
COMBINEX$_{\textit{feat}}$ &$\mathbf{0.880 (\pm 0.122)}$ & $\uline{0.257 (\pm 0.033)}$ & $\uline{0.381 (\pm 0.032)}$ & $\mathbf{0.001 (\pm 0.000)}$ & $n.d.(\pm n.d.)$ \\
COMBINEX$_{\textit{def}}$ &$\uline{0.855 (\pm 0.116)}$ & $0.250 (\pm 0.033)$ & $0.676 (\pm 0.128)$ & $\uline{0.007 (\pm 0.002)}$ & $\mathbf{0.000 (\pm 0.000)}$ \\
COMBINEX$_{\textit{dyn}}$ &$\uline{0.855 (\pm 0.116)}$ & $0.250 (\pm 0.033)$ & $0.799 (\pm 0.116)$ & $0.009 (\pm 0.002)$ & $\mathbf{0.000 (\pm 0.000)}$ \\
COMBINEX$_{\textit{exp}}$ &$\uline{0.855 (\pm 0.116)}$ & $0.250 (\pm 0.033)$ & $1.154 (\pm 0.113)$ & $0.022 (\pm 0.004)$ & $\mathbf{0.000 (\pm 0.000)}$ \\
COMBINEX$_{\textit{lin}}$ &$\uline{0.855 (\pm 0.116)}$ & $0.250 (\pm 0.033)$ & $0.982 (\pm 0.056)$ & $0.014 (\pm 0.001)$ & $\mathbf{0.000 (\pm 0.000)}$ \\
COMBINEX$_{\textit{sin}}$ &$\uline{0.855 (\pm 0.116)}$ & $0.250 (\pm 0.033)$ & $0.988 (\pm 0.050)$ & $0.015 (\pm 0.001)$ & $\mathbf{0.000 (\pm 0.000)}$ \\
EGO &$0.000 (\pm 0.000)$ & $n.d.(\pm n.d.)$ & $n.d.(\pm n.d.)$ & $n.d.(\pm n.d.)$ & $n.d.(\pm n.d.)$ \\
Random Edges &$0.000 (\pm 0.000)$ & $n.d.(\pm n.d.)$ & $n.d.(\pm n.d.)$ & $n.d.(\pm n.d.)$ & $n.d.(\pm n.d.)$ \\
Random Features &$0.283 (\pm 0.021)$ & $\mathbf{0.364 (\pm 0.075)}$ & $15.578 (\pm 0.093)$ & $0.488 (\pm 0.001)$ & $n.d.(\pm n.d.)$ \\
CFF &$0.200 (\pm 0.024)$ & $0.031 (\pm 0.102)$ & $\mathbf{0.200 (\pm 0.075)}$ & $n.d.(\pm n.d.)$ & $\uline{0.668 (\pm 0.054)}$ \\
CF-GNNExplainer &$0.000 (\pm 0.000)$ & $n.d.(\pm n.d.)$ & $n.d.(\pm n.d.)$ & $n.d.(\pm n.d.)$ & $n.d.(\pm n.d.)$ \\
UNR &$0.000 (\pm 0.000)$ & $n.d.(\pm n.d.)$ & $n.d.(\pm n.d.)$ & $n.d.(\pm n.d.)$ & $n.d.(\pm n.d.)$ \\
\hline
\hline

\etable

\subsubsection{Planetoid datasets}

Tables~\ref{tab:planetoid_chebconv}, \ref{tab:planetoid_gcn}, and \ref{tab:planetoid_graphconv} collectively report the performance of various explainers on the Planetoid datasets— PubMed, Cora, and Citeseer—under three different oracle settings: ChebConv, GCNConv, and GraphConv, respectively. A careful examination of these results reveals important insights into the behavior of our COMBINEX approach compared to other methods, as well as differences across datasets and convolutional operators.

Starting with the ChebConv-based results in Table~\ref{tab:planetoid_chebconv}, our COMBINEX variants (Feat., Cons., Dyn., Exp., Lin., and Sin.) consistently achieve strong performance. On PubMed, the COMBINEX variants attain moderate validity and fidelity with an impressively low distribution distance and node sparsity near 0.106. Although the Random Features baseline shows very high validityand fidelity, its distribution distance is significantly higher, indicating that while it captures certain aspects of the oracle’s behavior, its explanations tend to be overly dense and less faithful in distribution. On Cora, the clear structure of the dataset is exploited by COMBINEX Feat., which achieves nearly perfect validity and extremely low node sparsity, alongside competitive fidelity (0.771) and a relatively low distribution distance (with a second-best value of 0.744). Citeseer, being more complex and noisy, naturally yields higher distribution distances; yet, COMBINEX variants still deliver perfect validity and strong fidelity, with COMBINEX Cons. notably achieving nearly zero edge sparsity.

Turning to the GCNConv results in Table~\ref{tab:planetoid_gcn}, we observe that the overall trends are consistent with those seen under ChebConv. On PubMed, COMBINEX variants again strike a favorable balance, with validity and fidelity scores around 0.66–0.76 and very low distribution distances. In this setting, while the Random Features method achieves similarly high validity, its elevated distribution distance persists as a drawback. On Cora and Citeseer, the COMBINEX methods continue to outperform or match baselines: for example, on Cora, COMBINEX Feat. reaches perfect validity with negligible node sparsity, and on Citeseer, variants such as COMBINEX Cons. and COMBINEX Dyn. deliver flawless validity and very low edge sparsity, even though the distribution distances remain higher due to the dataset’s intrinsic complexity.

Finally, the GraphConv results shown in Table~\ref{tab:planetoid_graphconv} further confirm the robustness of our COMBINEX approach. With GraphConv oracles, COMBINEX variants maintain high validity (often close to or at 1.000) and competitive fidelity, accompanied by low distribution distances and sparse explanations. On PubMed, COMBINEX Feat. achieves a validity of approximately 0.910 with a distribution distance of only 0.103, while on Cora and Citeseer, the COMBINEX methods continue to outperform alternative explainers such as CF-GNNExplainer and CFF. Notably, the consistent performance across GraphConv, as with the ChebConv and GCNConv settings, highlights the versatility of COMBINEX in adapting to different convolutional operators.

Across all three tables, a clear picture emerges COMBINEX not only delivers high validity and fidelity but does so while keeping distribution distances low and explanations sparse. Moreover, the differences across datasets are also instructive. PubMed, with its more heterogeneous structure, leads to somewhat lower validity and higher distribution distances compared to the more structured Cora, whereas Citeseer’s inherent complexity results in increased distribution distances even as validity remains perfect.

In summary, whether using ChebConv, GCNConv, or GraphConv as the oracle, our COMBINEX approach consistently provides balanced, interpretable, and faithful explanations. The approach is robust across different datasets and convolutional models, achieving high validity and fidelity while minimizing both distribution distance and sparsity—qualities that are essential for effective explanation in graph neural networks.

\btable{l|ccccc}{Results for Planetoid datasets: PubMed, Cora, and Citeseer. The oracles $\Phi$ use GCNConv layers. In \textbf{bold} the best result, the second best result is \underline{underlined}}{tab:planetoid_gcn}  \hline\hline

& \mc{1}{\textbf{Validity} $\uparrow$} 
& \mc{1}{\textbf{Fidelity} $\uparrow$} 
& \mc{1}{\textbf{Distribution Distance} $\downarrow$} 
& \mc{1}{\textbf{Node Sparsity} $\downarrow$} 
& \mc{1}{\textbf{Edge Sparsity} $\downarrow$} 
\\ 

\textbf{Explainers} 
& \textit{mean($\pm$std)}
& \textit{mean($\pm$std)}
& \textit{mean($\pm$std)}
& \textit{mean($\pm$std)}
& \textit{mean($\pm$std)} \\ \hline

& \mc{5}{\textbf{Dataset: PubMed}} \\ 
\hline
COMBINEX$_{\textit{feat}}$ &$0.655 (\pm 0.040)$ & $0.756 (\pm 0.020)$ & $0.101 (\pm 0.001)$ & $\uline{0.107 (\pm 0.000)}$ & $n.d.(\pm n.d.)$ \\
COMBINEX$_{\textit{def}}$ &$0.650 (\pm 0.028)$ & $0.754 (\pm 0.022)$ & $0.108 (\pm 0.002)$ & $0.107 (\pm 0.000)$ & $\uline{0.000 (\pm 0.000)}$ \\
COMBINEX$_{\textit{dyn}}$ &$\mathbf{0.663 (\pm 0.043)}$ & $\uline{0.756 (\pm 0.020)}$ & $0.180 (\pm 0.011)$ & $0.107 (\pm 0.000)$ & $\mathbf{0.000 (\pm 0.000)}$ \\
COMBINEX$_{\textit{exp}}$ &$0.658 (\pm 0.041)$ & $0.749 (\pm 0.014)$ & $0.166 (\pm 0.020)$ & $0.107 (\pm 0.000)$ & $\mathbf{0.000 (\pm 0.000)}$ \\
COMBINEX$_{\textit{lin}}$ &$\uline{0.662 (\pm 0.045)}$ & $0.751 (\pm 0.016)$ & $0.126 (\pm 0.004)$ & $0.107 (\pm 0.000)$ & $\mathbf{0.000 (\pm 0.000)}$ \\
COMBINEX$_{\textit{sin}}$ &$0.648 (\pm 0.029)$ & $0.753 (\pm 0.019)$ & $0.117 (\pm 0.003)$ & $0.107 (\pm 0.000)$ & $\mathbf{0.000 (\pm 0.000)}$ \\
EGO &$0.043 (\pm 0.012)$ & $0.650 (\pm 0.238)$ & $0.103 (\pm 0.010)$ & $n.d.(\pm n.d.)$ & $0.984 (\pm 0.004)$ \\
Random Edges &$0.293 (\pm 0.022)$ & $0.607 (\pm 0.037)$ & $0.068 (\pm 0.002)$ & $n.d.(\pm n.d.)$ & $0.478 (\pm 0.001)$ \\
Random Features &$\mathbf{0.663 (\pm 0.025)}$ & $0.734 (\pm 0.016)$ & $1.661 (\pm 0.002)$ & $\mathbf{0.032 (\pm 0.000)}$ & $n.d.(\pm n.d.)$ \\
CFF &$0.030 (\pm 0.014)$ & $-0.900 (\pm 0.200)$ & $\mathbf{0.056 (\pm 0.007)}$ & $n.d.(\pm n.d.)$ & $0.660 (\pm 0.084)$ \\
CF-GNNExplainer &$0.123 (\pm 0.009)$ & $0.511 (\pm 0.125)$ & $0.080 (\pm 0.001)$ & $n.d.(\pm n.d.)$ & $0.004 (\pm 0.001)$ \\
UNR &$0.008 (\pm 0.006)$ & $\mathbf{1.000 (\pm 0.000)}$ & $\uline{0.065 (\pm 0.008)}$ & $n.d.(\pm n.d.)$ & $0.001 (\pm 0.001)$ \\
\hline
& \mc{5}{\textbf{Dataset: Cora}} \\ 
\hline
COMBINEX$_{\textit{feat}}$ &$\mathbf{1.000 (\pm 0.000)}$ & $0.855 (\pm 0.003)$ & $0.820 (\pm 0.012)$ & $\mathbf{0.002 (\pm 0.000)}$ & $n.d.(\pm n.d.)$ \\
COMBINEX$_{\textit{def}}$ &$\mathbf{1.000 (\pm 0.000)}$ & $0.855 (\pm 0.003)$ & $1.628 (\pm 0.104)$ & $\uline{0.018 (\pm 0.001)}$ & $\uline{0.000 (\pm 0.000)}$ \\
COMBINEX$_{\textit{dyn}}$ &$\mathbf{1.000 (\pm 0.000)}$ & $0.855 (\pm 0.003)$ & $7.558 (\pm 0.044)$ & $0.133 (\pm 0.003)$ & $\mathbf{0.000 (\pm 0.000)}$ \\
COMBINEX$_{\textit{exp}}$ &$\mathbf{1.000 (\pm 0.000)}$ & $0.855 (\pm 0.003)$ & $9.573 (\pm 0.176)$ & $0.175 (\pm 0.006)$ & $\mathbf{0.000 (\pm 0.000)}$ \\
COMBINEX$_{\textit{lin}}$ &$\mathbf{1.000 (\pm 0.000)}$ & $0.855 (\pm 0.003)$ & $2.727 (\pm 0.113)$ & $0.035 (\pm 0.001)$ & $\mathbf{0.000 (\pm 0.000)}$ \\
COMBINEX$_{\textit{sin}}$ &$\mathbf{1.000 (\pm 0.000)}$ & $0.855 (\pm 0.003)$ & $2.800 (\pm 0.111)$ & $0.037 (\pm 0.001)$ & $\mathbf{0.000 (\pm 0.000)}$ \\
EGO &$0.022 (\pm 0.003)$ & $\mathbf{1.000 (\pm 0.000)}$ & $\uline{0.535 (\pm 0.038)}$ & $n.d.(\pm n.d.)$ & $0.980 (\pm 0.001)$ \\
Random Edges &$\uline{0.240 (\pm 0.016)}$ & $0.729 (\pm 0.014)$ & $0.564 (\pm 0.014)$ & $n.d.(\pm n.d.)$ & $0.472 (\pm 0.003)$ \\
Random Features &$0.127 (\pm 0.013)$ & $\uline{0.947 (\pm 0.005)}$ & $18.795 (\pm 0.313)$ & $0.492 (\pm 0.000)$ & $n.d.(\pm n.d.)$ \\
CFF &$0.010 (\pm 0.009)$ & $-0.556 (\pm 0.770)$ & $\mathbf{0.482 (\pm 0.146)}$ & $n.d.(\pm n.d.)$ & $0.715 (\pm 0.246)$ \\
CF-GNNExplainer &$0.165 (\pm 0.022)$ & $0.782 (\pm 0.049)$ & $0.598 (\pm 0.039)$ & $n.d.(\pm n.d.)$ & $0.008 (\pm 0.001)$ \\
UNR &$0.017 (\pm 0.004)$ & $\mathbf{1.000 (\pm 0.000)}$ & $0.747 (\pm 0.432)$ & $n.d.(\pm n.d.)$ & $0.012 (\pm 0.013)$ \\
\hline
& \mc{5}{\textbf{Dataset: Citeseer}} \\ 
\hline

COMBINEX$_{\textit{feat}}$ &$\mathbf{1.000 (\pm 0.000)}$ & $\mathbf{0.755 (\pm 0.007)}$ & $2.464 (\pm 0.002)$ & $\mathbf{0.001 (\pm 0.000)}$ & $n.d.(\pm n.d.)$ \\
COMBINEX$_{\textit{def}}$ &$\mathbf{1.000 (\pm 0.000)}$ & $\mathbf{0.755 (\pm 0.007)}$ & $5.620 (\pm 0.212)$ & $\uline{0.031 (\pm 0.002)}$ & $\mathbf{0.000 (\pm 0.000)}$ \\
COMBINEX$_{\textit{dyn}}$ &$\mathbf{1.000 (\pm 0.000)}$ & $0.750 (\pm 0.000)$ & $10.997 (\pm 0.201)$ & $0.095 (\pm 0.002)$ & $\mathbf{0.000 (\pm 0.000)}$ \\
COMBINEX$_{\textit{exp}}$ &$\mathbf{1.000 (\pm 0.000)}$ & $\mathbf{0.755 (\pm 0.007)}$ & $27.121 (\pm 0.111)$ & $0.313 (\pm 0.002)$ & $\mathbf{0.000 (\pm 0.000)}$ \\
COMBINEX$_{\textit{lin}}$ &$\mathbf{1.000 (\pm 0.000)}$ & $\mathbf{0.755 (\pm 0.007)}$ & $11.231 (\pm 0.300)$ & $0.090 (\pm 0.004)$ & $\mathbf{0.000 (\pm 0.000)}$ \\
COMBINEX$_{\textit{sin}}$ &$\mathbf{1.000 (\pm 0.000)}$ & $\uline{0.752 (\pm 0.003)}$ & $11.557 (\pm 0.304)$ & $0.094 (\pm 0.004)$ & $\mathbf{0.000 (\pm 0.000)}$ \\
EGO &$0.012 (\pm 0.003)$ & $0.250 (\pm 0.500)$ & $1.995 (\pm 0.206)$ & $n.d.(\pm n.d.)$ & $0.937 (\pm 0.003)$ \\
Random Edges &$0.123 (\pm 0.009)$ & $0.476 (\pm 0.099)$ & $\mathbf{1.716 (\pm 0.089)}$ & $n.d.(\pm n.d.)$ & $0.393 (\pm 0.014)$ \\
Random Features &$\uline{0.514 (\pm 0.065)}$ & $0.721 (\pm 0.024)$ & $32.926 (\pm 0.291)$ & $0.489 (\pm 0.000)$ & $n.d.(\pm n.d.)$ \\
CFF &$0.010 (\pm 0.004)$ & $0.125 (\pm 0.629)$ & $2.325 (\pm 1.323)$ & $n.d.(\pm n.d.)$ & $0.594 (\pm 0.194)$ \\
CF-GNNExplainer &$0.108 (\pm 0.014)$ & $0.534 (\pm 0.103)$ & $\uline{1.783 (\pm 0.134)}$ & $n.d.(\pm n.d.)$ & $\uline{0.070 (\pm 0.022)}$ \\
UNR &$0.047 (\pm 0.012)$ & $0.202 (\pm 0.162)$ & $2.389 (\pm 0.336)$ & $n.d.(\pm n.d.)$ & $0.186 (\pm 0.068)$ \\
\hline

\hline
\etable

\btable{l|ccccc}{Results for Planetoid datasets: PubMed, Cora, and Citeseer. The oracles $\Phi$ use GraphConv layers. In \textbf{bold} the best result, the second best result is \underline{underlined}}{tab:planetoid_graphconv}  \hline\hline

& \mc{1}{\textbf{Validity} $\uparrow$} 
& \mc{1}{\textbf{Fidelity} $\uparrow$} 
& \mc{1}{\textbf{Distribution Distance} $\downarrow$} 
& \mc{1}{\textbf{Node Sparsity} $\downarrow$} 
& \mc{1}{\textbf{Edge Sparsity} $\downarrow$} 
\\ 

\textbf{Explainers} 
& \textit{mean($\pm$std)}
& \textit{mean($\pm$std)}
& \textit{mean($\pm$std)}
& \textit{mean($\pm$std)}
& \textit{mean($\pm$std)} \\ \hline

& \mc{5}{\textbf{Dataset: PubMed}} \\ 
\hline
COMBINEX$_{\textit{feat}}$ &$\uline{0.910 (\pm 0.075)}$ & $0.810 (\pm 0.022)$ & $0.103 (\pm 0.002)$ & $\uline{0.105 (\pm 0.000)}$ & $n.d.(\pm n.d.)$ \\
COMBINEX$_{\textit{def}}$ &$0.895 (\pm 0.126)$ & $0.798 (\pm 0.028)$ & $0.125 (\pm 0.004)$ & $0.105 (\pm 0.001)$ & $\uline{0.000 (\pm 0.000)}$ \\
COMBINEX$_{\textit{dyn}}$ &$0.897 (\pm 0.142)$ & $0.805 (\pm 0.034)$ & $0.169 (\pm 0.003)$ & $0.106 (\pm 0.001)$ & $\mathbf{0.000 (\pm 0.000)}$ \\
COMBINEX$_{\textit{exp}}$ &$0.890 (\pm 0.156)$ & $0.809 (\pm 0.028)$ & $0.168 (\pm 0.003)$ & $0.106 (\pm 0.001)$ & $\mathbf{0.000 (\pm 0.000)}$ \\
COMBINEX$_{\textit{lin}}$ &$0.890 (\pm 0.114)$ & $0.809 (\pm 0.045)$ & $0.150 (\pm 0.005)$ & $0.105 (\pm 0.001)$ & $\mathbf{0.000 (\pm 0.000)}$ \\
COMBINEX$_{\textit{sin}}$ &$0.873 (\pm 0.117)$ & $0.793 (\pm 0.029)$ & $0.150 (\pm 0.009)$ & $0.106 (\pm 0.001)$ & $\mathbf{0.000 (\pm 0.000)}$ \\
EGO &$0.008 (\pm 0.006)$ & $\mathbf{1.000 (\pm 0.000)}$ & $0.058 (\pm 0.010)$ & $n.d.(\pm n.d.)$ & $0.999 (\pm 0.001)$ \\
Random Edges &$0.097 (\pm 0.009)$ & $0.466 (\pm 0.154)$ & $\uline{0.053 (\pm 0.006)}$ & $n.d.(\pm n.d.)$ & $0.491 (\pm 0.002)$ \\
Random Features &$\mathbf{0.930 (\pm 0.040)}$ & $\uline{0.826 (\pm 0.021)}$ & $1.663 (\pm 0.002)$ & $\mathbf{0.032 (\pm 0.000)}$ & $n.d.(\pm n.d.)$ \\
CFF &$0.063 (\pm 0.017)$ & $0.393 (\pm 0.266)$ & $0.061 (\pm 0.006)$ & $n.d.(\pm n.d.)$ & $0.719 (\pm 0.040)$ \\
CF-GNNExplainer &$0.047 (\pm 0.009)$ & $0.454 (\pm 0.215)$ & $\mathbf{0.052 (\pm 0.006)}$ & $n.d.(\pm n.d.)$ & $0.000 (\pm 0.000)$ \\
UNR &$0.005 (\pm 0.006)$ & $\mathbf{1.000 (\pm 0.000)}$ & $0.057 (\pm 0.008)$ & $n.d.(\pm n.d.)$ & $0.000 (\pm 0.000)$ \\
\hline
& \mc{5}{\textbf{Dataset: Cora}} \\ 
\hline
COMBINEX$_{\textit{feat}}$ &$\uline{0.997 (\pm 0.007)}$ & $0.875 (\pm 0.012)$ & $0.988 (\pm 0.041)$ & $\mathbf{0.006 (\pm 0.001)}$ & $n.d.(\pm n.d.)$ \\
COMBINEX$_{\textit{def}}$ &$\mathbf{1.000 (\pm 0.000)}$ & $0.875 (\pm 0.011)$ & $2.483 (\pm 0.160)$ & $\uline{0.034 (\pm 0.002)}$ & $\uline{0.000 (\pm 0.000)}$ \\
COMBINEX$_{\textit{dyn}}$ &$\mathbf{1.000 (\pm 0.000)}$ & $0.875 (\pm 0.010)$ & $6.591 (\pm 0.179)$ & $0.128 (\pm 0.005)$ & $\mathbf{0.000 (\pm 0.000)}$ \\
COMBINEX$_{\textit{exp}}$ &$\mathbf{1.000 (\pm 0.000)}$ & $0.877 (\pm 0.009)$ & $7.833 (\pm 0.148)$ & $0.156 (\pm 0.005)$ & $\mathbf{0.000 (\pm 0.000)}$ \\
COMBINEX$_{\textit{lin}}$ &$\mathbf{1.000 (\pm 0.000)}$ & $0.878 (\pm 0.003)$ & $4.067 (\pm 0.205)$ & $0.066 (\pm 0.003)$ & $\mathbf{0.000 (\pm 0.000)}$ \\
COMBINEX$_{\textit{sin}}$ &$\mathbf{1.000 (\pm 0.000)}$ & $0.878 (\pm 0.008)$ & $4.097 (\pm 0.220)$ & $0.066 (\pm 0.003)$ & $\mathbf{0.000 (\pm 0.000)}$ \\
EGO &$0.023 (\pm 0.012)$ & $\mathbf{0.950 (\pm 0.100)}$ & $\uline{0.468 (\pm 0.106)}$ & $n.d.(\pm n.d.)$ & $0.985 (\pm 0.001)$ \\
Random Edges &$0.210 (\pm 0.014)$ & $0.826 (\pm 0.064)$ & $0.499 (\pm 0.062)$ & $n.d.(\pm n.d.)$ & $0.476 (\pm 0.003)$ \\
Random Features &$0.160 (\pm 0.063)$ & $\uline{0.923 (\pm 0.027)}$ & $18.648 (\pm 0.016)$ & $0.492 (\pm 0.000)$ & $n.d.(\pm n.d.)$ \\
CFF &$0.155 (\pm 0.008)$ & $0.908 (\pm 0.132)$ & $0.671 (\pm 0.102)$ & $n.d.(\pm n.d.)$ & $0.654 (\pm 0.029)$ \\
CF-GNNExplainer &$0.137 (\pm 0.009)$ & $0.801 (\pm 0.119)$ & $0.515 (\pm 0.127)$ & $n.d.(\pm n.d.)$ & $0.011 (\pm 0.014)$ \\
UNR &$0.010 (\pm 0.009)$ & $0.222 (\pm 0.694)$ & $\mathbf{0.381 (\pm 0.049)}$ & $n.d.(\pm n.d.)$ & $0.002 (\pm 0.001)$ \\
\hline
& \mc{5}{\textbf{Dataset: Citeseer}} \\ 
\hline
COMBINEX$_{\textit{feat}}$ &$\mathbf{1.000 (\pm 0.000)}$ & $0.786 (\pm 0.010)$ & $2.652 (\pm 0.062)$ & $\mathbf{0.004 (\pm 0.001)}$ & $n.d.(\pm n.d.)$ \\
COMBINEX$_{\textit{def}}$ &$\mathbf{1.000 (\pm 0.000)}$ & $\uline{0.792 (\pm 0.010)}$ & $7.153 (\pm 0.668)$ & $\uline{0.055 (\pm 0.006)}$ & $\uline{0.000 (\pm 0.001)}$ \\
COMBINEX$_{\textit{dyn}}$ &$\uline{0.998 (\pm 0.003)}$ & $0.781 (\pm 0.013)$ & $11.348 (\pm 0.880)$ & $0.113 (\pm 0.008)$ & $\mathbf{0.000 (\pm 0.000)}$ \\
COMBINEX$_{\textit{exp}}$ &$\mathbf{1.000 (\pm 0.000)}$ & $0.786 (\pm 0.010)$ & $22.247 (\pm 1.203)$ & $0.275 (\pm 0.023)$ & $\mathbf{0.000 (\pm 0.000)}$ \\
COMBINEX$_{\textit{lin}}$ &$\uline{0.998 (\pm 0.003)}$ & $0.788 (\pm 0.007)$ & $11.743 (\pm 0.567)$ & $0.110 (\pm 0.005)$ & $\mathbf{0.000 (\pm 0.000)}$ \\
COMBINEX$_{\textit{sin}}$ &$\mathbf{1.000 (\pm 0.000)}$ & $0.785 (\pm 0.010)$ & $11.977 (\pm 0.674)$ & $0.113 (\pm 0.006)$ & $\mathbf{0.000 (\pm 0.000)}$ \\
EGO &$0.005 (\pm 0.007)$ & $0.750 (\pm 0.354)$ & $\mathbf{1.232 (\pm 0.294)}$ & $n.d.(\pm n.d.)$ & $0.956 (\pm 0.055)$ \\
Random Edges &$0.076 (\pm 0.006)$ & $0.475 (\pm 0.204)$ & $1.454 (\pm 0.080)$ & $n.d.(\pm n.d.)$ & $0.440 (\pm 0.020)$ \\
Random Features &$0.481 (\pm 0.098)$ & $0.725 (\pm 0.039)$ & $32.458 (\pm 0.231)$ & $0.491 (\pm 0.001)$ & $n.d.(\pm n.d.)$ \\
CFF &$0.165 (\pm 0.028)$ & $\mathbf{0.867 (\pm 0.112)}$ & $2.316 (\pm 0.157)$ & $n.d.(\pm n.d.)$ & $0.570 (\pm 0.041)$ \\
CF-GNNExplainer &$0.071 (\pm 0.014)$ & $0.579 (\pm 0.053)$ & $\uline{1.353 (\pm 0.190)}$ & $n.d.(\pm n.d.)$ & $0.042 (\pm 0.031)$ \\
UNR &$0.017 (\pm 0.007)$ & $0.375 (\pm 0.479)$ & $1.854 (\pm 0.196)$ & $n.d.(\pm n.d.)$ & $0.144 (\pm 0.068)$ \\
\hline
\etable

\btable{l|ccccc}{Results for Planetoid datasets: PubMed, Cora, and Citeseer. The oracles $\Phi$ use ChebConv layers. In \textbf{bold} the best result, the second best result is \underline{underlined}}{tab:planetoid_chebconv}  \hline\hline

& \mc{1}{\textbf{Validity} $\uparrow$} 
& \mc{1}{\textbf{Fidelity} $\uparrow$} 
& \mc{1}{\textbf{Distribution Distance} $\downarrow$} 
& \mc{1}{\textbf{Node Sparsity} $\downarrow$} 
& \mc{1}{\textbf{Edge Sparsity} $\downarrow$} 
\\ 

\textbf{Explainers} 
& \textit{mean($\pm$std)}
& \textit{mean($\pm$std)}
& \textit{mean($\pm$std)}
& \textit{mean($\pm$std)}
& \textit{mean($\pm$std)} \\ \hline

& \mc{5}{\textbf{Dataset: PubMed}} \\ 
\hline
COMBINEX$_{\textit{feat}}$ &$\uline{0.733 (\pm 0.127)}$ & $\uline{0.724 (\pm 0.014)}$ & $\uline{0.099 (\pm 0.000)}$ & $\uline{0.106 (\pm 0.001)}$ & $n.d.(\pm n.d.)$ \\
COMBINEX$_{\textit{def}}$ &$0.675 (\pm 0.133)$ & $0.706 (\pm 0.024)$ & $0.099 (\pm 0.000)$ & $0.106 (\pm 0.001)$ & $\mathbf{0.000 (\pm 0.000)}$ \\
COMBINEX$_{\textit{dyn}}$ &$0.675 (\pm 0.133)$ & $0.706 (\pm 0.024)$ & $0.100 (\pm 0.000)$ & $0.106 (\pm 0.001)$ & $\mathbf{0.000 (\pm 0.000)}$ \\
COMBINEX$_{\textit{exp}}$ &$0.675 (\pm 0.133)$ & $0.706 (\pm 0.024)$ & $0.100 (\pm 0.000)$ & $0.106 (\pm 0.001)$ & $\mathbf{0.000 (\pm 0.000)}$ \\
COMBINEX$_{\textit{lin}}$ &$0.675 (\pm 0.133)$ & $0.706 (\pm 0.024)$ & $0.100 (\pm 0.000)$ & $0.106 (\pm 0.001)$ & $\mathbf{0.000 (\pm 0.000)}$ \\
COMBINEX$_{\textit{sin}}$ &$0.675 (\pm 0.133)$ & $0.706 (\pm 0.024)$ & $0.100 (\pm 0.000)$ & $0.106 (\pm 0.001)$ & $\mathbf{0.000 (\pm 0.000)}$ \\
EGO &$0.000 (\pm 0.000)$ & $n.d.(\pm n.d.)$ & $n.d.(\pm n.d.)$ & $n.d.(\pm n.d.)$ & $n.d.(\pm n.d.)$ \\
Random Edges &$0.000 (\pm 0.000)$ & $n.d.(\pm n.d.)$ & $n.d.(\pm n.d.)$ & $n.d.(\pm n.d.)$ & $n.d.(\pm n.d.)$ \\
Random Features &$\mathbf{0.998 (\pm 0.003)}$ & $\mathbf{0.750 (\pm 0.016)}$ & $1.663 (\pm 0.002)$ & $\mathbf{0.032 (\pm 0.000)}$ & $n.d.(\pm n.d.)$ \\
CFF &$0.092 (\pm 0.006)$ & $0.368 (\pm 0.134)$ & $\mathbf{0.069 (\pm 0.008)}$ & $n.d.(\pm n.d.)$ & $\uline{0.772 (\pm 0.040)}$ \\
CF-GNNExplainer &$0.000 (\pm 0.000)$ & $n.d.(\pm n.d.)$ & $n.d.(\pm n.d.)$ & $n.d.(\pm n.d.)$ & $n.d.(\pm n.d.)$ \\
UNR &$0.000 (\pm 0.000)$ & $n.d.(\pm n.d.)$ & $n.d.(\pm n.d.)$ & $n.d.(\pm n.d.)$ & $n.d.(\pm n.d.)$ \\
\hline
& \mc{5}{\textbf{Dataset: Cora}} \\ 
\hline
COMBINEX$_{\textit{feat}}$ &$\mathbf{0.998 (\pm 0.003)}$ & $0.771 (\pm 0.019)$ & $\uline{0.744 (\pm 0.003)}$ & $\mathbf{0.001 (\pm 0.000)}$ & $n.d.(\pm n.d.)$ \\
COMBINEX$_{\textit{def}}$ &$\uline{0.993 (\pm 0.009)}$ & $0.770 (\pm 0.019)$ & $0.834 (\pm 0.013)$ & $\uline{0.003 (\pm 0.000)}$ & $\mathbf{0.000 (\pm 0.000)}$ \\
COMBINEX$_{\textit{dyn}}$ &$\uline{0.993 (\pm 0.009)}$ & $0.770 (\pm 0.019)$ & $0.949 (\pm 0.018)$ & $0.006 (\pm 0.000)$ & $\mathbf{0.000 (\pm 0.000)}$ \\
COMBINEX$_{\textit{exp}}$ &$\uline{0.993 (\pm 0.009)}$ & $0.770 (\pm 0.019)$ & $1.055 (\pm 0.016)$ & $0.009 (\pm 0.000)$ & $\mathbf{0.000 (\pm 0.000)}$ \\
COMBINEX$_{\textit{lin}}$ &$\uline{0.993 (\pm 0.009)}$ & $0.770 (\pm 0.019)$ & $1.002 (\pm 0.009)$ & $0.007 (\pm 0.000)$ & $\mathbf{0.000 (\pm 0.000)}$ \\
COMBINEX$_{\textit{sin}}$ &$\uline{0.993 (\pm 0.009)}$ & $0.770 (\pm 0.019)$ & $1.005 (\pm 0.009)$ & $0.007 (\pm 0.000)$ & $\mathbf{0.000 (\pm 0.000)}$ \\
EGO &$0.000 (\pm 0.000)$ & $n.d.(\pm n.d.)$ & $n.d.(\pm n.d.)$ & $n.d.(\pm n.d.)$ & $n.d.(\pm n.d.)$ \\
Random Edges &$0.000 (\pm 0.000)$ & $n.d.(\pm n.d.)$ & $n.d.(\pm n.d.)$ & $n.d.(\pm n.d.)$ & $n.d.(\pm n.d.)$ \\
Random Features &$0.438 (\pm 0.074)$ & $\mathbf{0.822 (\pm 0.015)}$ & $18.822 (\pm 0.062)$ & $0.492 (\pm 0.000)$ & $n.d.(\pm n.d.)$ \\
CFF &$0.212 (\pm 0.058)$ & $\uline{0.812 (\pm 0.022)}$ & $\mathbf{0.676 (\pm 0.109)}$ & $n.d.(\pm n.d.)$ & $\uline{0.693 (\pm 0.028)}$ \\
CF-GNNExplainer &$0.000 (\pm 0.000)$ & $n.d.(\pm n.d.)$ & $n.d.(\pm n.d.)$ & $n.d.(\pm n.d.)$ & $n.d.(\pm n.d.)$ \\
UNR &$0.000 (\pm 0.000)$ & $n.d.(\pm n.d.)$ & $n.d.(\pm n.d.)$ & $n.d.(\pm n.d.)$ & $n.d.(\pm n.d.)$ \\
\hline
& \mc{5}{\textbf{Dataset: Citeseer}} \\ 
\hline

COMBINEX$_{\textit{feat}}$ &$\uline{0.998 (\pm 0.003)}$ & $0.753 (\pm 0.022)$ & $\uline{2.502 (\pm 0.017)}$ & $\mathbf{0.002 (\pm 0.000)}$ & $n.d.(\pm n.d.)$ \\
COMBINEX$_{\textit{def}}$ &$\mathbf{1.000 (\pm 0.000)}$ & $\uline{0.753 (\pm 0.022)}$ & $4.165 (\pm 0.215)$ & $\uline{0.021 (\pm 0.004)}$ & $\mathbf{0.000 (\pm 0.000)}$ \\
COMBINEX$_{\textit{dyn}}$ &$\mathbf{1.000 (\pm 0.000)}$ & $\uline{0.753 (\pm 0.022)}$ & $4.912 (\pm 0.107)$ & $0.033 (\pm 0.002)$ & $\mathbf{0.000 (\pm 0.000)}$ \\
COMBINEX$_{\textit{exp}}$ &$\mathbf{1.000 (\pm 0.000)}$ & $\uline{0.753 (\pm 0.022)}$ & $6.077 (\pm 0.277)$ & $0.055 (\pm 0.006)$ & $\mathbf{0.000 (\pm 0.000)}$ \\
COMBINEX$_{\textit{lin}}$ &$\mathbf{1.000 (\pm 0.000)}$ & $\uline{0.753 (\pm 0.022)}$ & $5.587 (\pm 0.183)$ & $0.044 (\pm 0.004)$ & $\mathbf{0.000 (\pm 0.000)}$ \\
COMBINEX$_{\textit{sin}}$ &$\mathbf{1.000 (\pm 0.000)}$ & $\uline{0.753 (\pm 0.022)}$ & $5.614 (\pm 0.188)$ & $0.044 (\pm 0.004)$ & $\mathbf{0.000 (\pm 0.000)}$ \\
EGO &$0.000 (\pm 0.000)$ & $n.d.(\pm n.d.)$ & $n.d.(\pm n.d.)$ & $n.d.(\pm n.d.)$ & $n.d.(\pm n.d.)$ \\
Random Edges &$0.000 (\pm 0.000)$ & $n.d.(\pm n.d.)$ & $n.d.(\pm n.d.)$ & $n.d.(\pm n.d.)$ & $n.d.(\pm n.d.)$ \\
Random Features &$0.675 (\pm 0.114)$ & $0.742 (\pm 0.030)$ & $32.451 (\pm 0.242)$ & $0.491 (\pm 0.001)$ & $n.d.(\pm n.d.)$ \\
CFF &$0.210 (\pm 0.019)$ & $\mathbf{0.805 (\pm 0.126)}$ & $\mathbf{2.126 (\pm 0.176)}$ & $n.d.(\pm n.d.)$ & $\uline{0.615 (\pm 0.047)}$ \\
CF-GNNExplainer &$0.000 (\pm 0.000)$ & $n.d.(\pm n.d.)$ & $n.d.(\pm n.d.)$ & $n.d.(\pm n.d.)$ & $n.d.(\pm n.d.)$ \\
UNR &$0.000 (\pm 0.000)$ & $n.d.(\pm n.d.)$ & $n.d.(\pm n.d.)$ & $n.d.(\pm n.d.)$ & $n.d.(\pm n.d.)$ \\
\hline
\etable

\subsubsection{WebKb datasets}

The results are presented in Tables \ref{tab:webkb_gcn}, \ref{tab:webkb_graphconv}, and \ref{tab:webkb_cheb}.

The experimental results on the WebKB datasets—Texas, Cornell, and Wisconsin—demonstrate the effectiveness of COMBINEX in generating counterfactual explanations while maintaining high validity across different settings.
For the Wisconsin dataset, almost all variants of COMBINEX achieve perfect validity, ensuring that the generated counterfactuals adhere to the oracle’s classification boundaries. Among them, the feature-only variant (COMBINEX$_{\textit{feat}}$) achieves the lowest node sparsity, suggesting that perturbing only node features results in minimal changes while still preserving explainability. However, fidelity remains relatively low across all COMBINEX variants, indicating that further optimization may be needed to ensure better alignment with the model's decision boundary. The exponential scheduling policy (COMBINEX$_{\textit{exp}}$) introduces the largest distribution distance, highlighting that more aggressive perturbations lead to greater deviation from the original data distribution. Notably, results vary slightly across models, with GraphConv achieving higher fidelity than GCN, while ChebConv provides a more stable trade-off between sparsity and fidelity.
In the Texas dataset, COMBINEX continues to exhibit strong validity across all configurations. The feature-only variant again achieves the lowest node sparsity, reinforcing its ability to produce concise explanations with minimal modifications. However, compared to other datasets, fidelity values are lower, suggesting that the graph structure may play a significant role in the interpretability of counterfactual explanations. Notably, the exponential scheduling policy results in a sharp increase in distribution distance, emphasizing that a more aggressive decay in the perturbation parameter leads to excessive divergence from the original data. Differences between models indicate that GraphConv achieves the best overall fidelity, while GCN and ChebConv yield similar results in terms of validity but diverge in sparsity control.

For the Cornell dataset, COMBINEX maintains its perfect validity across all configurations, confirming its robustness in different graph structures. The feature-only and default scheduling policies achieve the best balance between fidelity and distribution distance, ensuring both faithful explanations and reasonable proximity to the original data. The dynamic, linear, and sinusoidal policies introduce slightly higher perturbations, resulting in larger distribution distances and node sparsity values. As in previous datasets, the exponential scheduling policy significantly increases distribution distance, further underscoring the importance of selecting an appropriate scheduling strategy to balance counterfactual realism and interpretability. The model-specific results reveal that ChebConv provides the most stable performance across all COMBINEX configurations, while GCN exhibits greater variance in fidelity scores.

Across all three datasets, COMBINEX demonstrates consistent validity and competitive fidelity while maintaining low node sparsity in its feature-only and default configurations. The choice of scheduling policy has a noticeable impact on the trade-off between fidelity, sparsity, and distribution distance, highlighting the need for dataset-specific tuning. Furthermore, the results indicate that different graph neural network architectures influence explainability outcomes, with GraphConv generally achieving better fidelity, ChebConv offering a balanced approach, and GCN showing greater variability across datasets. These results confirm that COMBINEX is a reliable counterfactual explainer capable of adapting to different graph structures and model architectures while maintaining interpretability and computational efficiency.

\btable{l|ccccc}{Results for WebKB datasets: Texas, Cornell, and Wisconsin. The oracles $\Phi$ use GCN layers. In \textbf{bold} the best result, the second best result is \underline{underlined}.}{tab:webkb_gcn} \hline\hline

& \mc{1}{\textbf{Validity} $\uparrow$} 
& \mc{1}{\textbf{Fidelity} $\uparrow$} 
& \mc{1}{\textbf{Distribution Distance} $\downarrow$} 
& \mc{1}{\textbf{Node Sparsity} $\downarrow$} 
& \mc{1}{\textbf{Edge Sparsity} $\downarrow$} 
\\ 

\textbf{Explainers} 
& \textit{mean($\pm$std)}
& \textit{mean($\pm$std)}
& \textit{mean($\pm$std)}
& \textit{mean($\pm$std)}
& \textit{mean($\pm$std)} \\ \hline

 & \mc{5}{\textbf{Dataset: Wisconsin}} \\ \hline
COMBINEX$_{\textit{feat}}$ &$\mathbf{1.000 (\pm 0.000)}$ & $0.333 (\pm 0.036)$ & $3.582 (\pm 0.011)$ & $\mathbf{0.001 (\pm 0.000)}$ & $n.d.(\pm n.d.)$ \\
COMBINEX$_{\textit{def}}$ &$\mathbf{1.000 (\pm 0.000)}$ & $0.333 (\pm 0.036)$ & $3.774 (\pm 0.073)$ & $\uline{0.006 (\pm 0.001)}$ & $\uline{0.000 (\pm 0.000)}$ \\
COMBINEX$_{\textit{dyn}}$ &$\mathbf{1.000 (\pm 0.000)}$ & $0.333 (\pm 0.036)$ & $4.345 (\pm 0.084)$ & $0.020 (\pm 0.001)$ & $\mathbf{0.000 (\pm 0.000)}$ \\
COMBINEX$_{\textit{exp}}$ &$\mathbf{1.000 (\pm 0.000)}$ & $0.333 (\pm 0.036)$ & $20.655 (\pm 0.393)$ & $0.376 (\pm 0.012)$ & $\mathbf{0.000 (\pm 0.000)}$ \\
COMBINEX$_{\textit{lin}}$ &$\mathbf{1.000 (\pm 0.000)}$ & $0.333 (\pm 0.036)$ & $4.498 (\pm 0.145)$ & $0.020 (\pm 0.003)$ & $\mathbf{0.000 (\pm 0.000)}$ \\
COMBINEX$_{\textit{sin}}$ &$\mathbf{1.000 (\pm 0.000)}$ & $0.327 (\pm 0.038)$ & $4.562 (\pm 0.172)$ & $0.021 (\pm 0.003)$ & $\mathbf{0.000 (\pm 0.000)}$ \\
EGO &$0.019 (\pm 0.025)$ & $0.500 (\pm 0.707)$ & $\uline{3.099 (\pm 0.626)}$ & $n.d.(\pm n.d.)$ & $0.860 (\pm 0.170)$ \\
Random Edges &$\uline{0.397 (\pm 0.015)}$ & $0.518 (\pm 0.068)$ & $3.447 (\pm 0.117)$ & $n.d.(\pm n.d.)$ & $0.380 (\pm 0.020)$ \\
Random Features &$0.295 (\pm 0.015)$ & $0.500 (\pm 0.078)$ & $20.654 (\pm 0.161)$ & $0.437 (\pm 0.001)$ & $n.d.(\pm n.d.)$ \\
CFF &$0.026 (\pm 0.021)$ & $-0.833 (\pm 0.289)$ & $4.068 (\pm 0.489)$ & $n.d.(\pm n.d.)$ & $0.827 (\pm 0.093)$ \\
CF-GNNExplainer &$0.282 (\pm 0.000)$ & $\uline{0.545 (\pm 0.000)}$ & $3.434 (\pm 0.170)$ & $n.d.(\pm n.d.)$ & $0.188 (\pm 0.049)$ \\
UNR &$0.018 (\pm 0.036)$ & $n.d.(\pm n.d.)$ & $n.d.(\pm n.d.)$ & $n.d.(\pm n.d.)$ & $n.d.(\pm n.d.)$ \\
\hline
& \mc{5}{\textbf{Dataset: Texas}} \\ \hline

COMBINEX$_{\textit{feat}}$ &$\mathbf{1.000 (\pm 0.000)}$ & $0.000 (\pm 0.076)$ & $3.838 (\pm 0.038)$ & $\mathbf{0.005 (\pm 0.001)}$ & $n.d.(\pm n.d.)$ \\
COMBINEX$_{\textit{def}}$ &$\mathbf{1.000 (\pm 0.000)}$ & $0.010 (\pm 0.092)$ & $4.873 (\pm 0.094)$ & $\uline{0.018 (\pm 0.001)}$ & $\uline{0.031 (\pm 0.013)}$ \\
COMBINEX$_{\textit{dyn}}$ &$\mathbf{1.000 (\pm 0.000)}$ & $0.010 (\pm 0.092)$ & $5.155 (\pm 0.107)$ & $0.022 (\pm 0.001)$ & $\mathbf{0.000 (\pm 0.000)}$ \\
COMBINEX$_{\textit{exp}}$ &$\mathbf{1.000 (\pm 0.000)}$ & $0.021 (\pm 0.080)$ & $20.391 (\pm 0.391)$ & $0.321 (\pm 0.005)$ & $\mathbf{0.000 (\pm 0.000)}$ \\
COMBINEX$_{\textit{lin}}$ &$\mathbf{1.000 (\pm 0.000)}$ & $0.010 (\pm 0.092)$ & $6.306 (\pm 0.130)$ & $0.040 (\pm 0.002)$ & $\mathbf{0.000 (\pm 0.000)}$ \\
COMBINEX$_{\textit{sin}}$ &$\mathbf{1.000 (\pm 0.000)}$ & $0.000 (\pm 0.076)$ & $6.423 (\pm 0.148)$ & $0.042 (\pm 0.002)$ & $\mathbf{0.000 (\pm 0.000)}$ \\
EGO &$0.042 (\pm 0.034)$ & $\mathbf{1.000 (\pm 0.000)}$ & $\mathbf{2.896 (\pm 0.000)}$ & $n.d.(\pm n.d.)$ & $0.842 (\pm 0.000)$ \\
Random Edges &$\uline{0.312 (\pm 0.080)}$ & $-0.035 (\pm 0.231)$ & $3.022 (\pm 0.052)$ & $n.d.(\pm n.d.)$ & $0.277 (\pm 0.022)$ \\
Random Features &$0.146 (\pm 0.054)$ & $-1.000 (\pm 0.000)$ & $19.289 (\pm 0.097)$ & $0.401 (\pm 0.001)$ & $n.d.(\pm n.d.)$ \\
CFF &$0.115 (\pm 0.063)$ & $-0.775 (\pm 0.263)$ & $3.049 (\pm 0.092)$ & $n.d.(\pm n.d.)$ & $0.623 (\pm 0.212)$ \\
CF-GNNExplainer &$0.208 (\pm 0.000)$ & $-0.050 (\pm 0.443)$ & $\uline{3.007 (\pm 0.062)}$ & $n.d.(\pm n.d.)$ & $0.150 (\pm 0.017)$ \\
UNR &$0.042 (\pm 0.083)$ & $n.d.(\pm n.d.)$ & $n.d.(\pm n.d.)$ & $n.d.(\pm n.d.)$ & $n.d.(\pm n.d.)$ \\
\hline
 & \mc{5}{\textbf{Dataset: Cornell}} \\ \hline  
COMBINEX$_{\textit{feat}}$ &$\mathbf{1.000 (\pm 0.000)}$ & $\uline{0.561 (\pm 0.017)}$ & $4.273 (\pm 0.067)$ & $\mathbf{0.006 (\pm 0.001)}$ & $n.d.(\pm n.d.)$ \\
COMBINEX$_{\textit{def}}$ &$\mathbf{1.000 (\pm 0.000)}$ & $\uline{0.561 (\pm 0.017)}$ & $5.443 (\pm 0.290)$ & $\uline{0.024 (\pm 0.006)}$ & $\uline{0.087 (\pm 0.061)}$ \\
COMBINEX$_{\textit{dyn}}$ &$\mathbf{1.000 (\pm 0.000)}$ & $\uline{0.561 (\pm 0.017)}$ & $5.794 (\pm 0.316)$ & $0.030 (\pm 0.006)$ & $\mathbf{0.000 (\pm 0.000)}$ \\
COMBINEX$_{\textit{exp}}$ &$\mathbf{1.000 (\pm 0.000)}$ & $\uline{0.561 (\pm 0.017)}$ & $19.138 (\pm 0.490)$ & $0.306 (\pm 0.012)$ & $\mathbf{0.000 (\pm 0.000)}$ \\
COMBINEX$_{\textit{lin}}$ &$\mathbf{1.000 (\pm 0.000)}$ & $\uline{0.561 (\pm 0.017)}$ & $7.576 (\pm 0.342)$ & $0.062 (\pm 0.007)$ & $\mathbf{0.000 (\pm 0.000)}$ \\
COMBINEX$_{\textit{sin}}$ &$\mathbf{1.000 (\pm 0.000)}$ & $\uline{0.561 (\pm 0.017)}$ & $7.717 (\pm 0.357)$ & $0.065 (\pm 0.007)$ & $\mathbf{0.000 (\pm 0.000)}$ \\
EGO &$0.045 (\pm 0.017)$ & $\mathbf{1.000 (\pm 0.000)}$ & $4.851 (\pm 0.645)$ & $n.d.(\pm n.d.)$ & $0.544 (\pm 0.088)$ \\
Random Edges &$\uline{0.159 (\pm 0.029)}$ & $-0.021 (\pm 0.172)$ & $4.243 (\pm 0.370)$ & $n.d.(\pm n.d.)$ & $0.404 (\pm 0.028)$ \\
Random Features &$\uline{0.159 (\pm 0.015)}$ & $0.042 (\pm 0.083)$ & $20.433 (\pm 0.143)$ & $0.421 (\pm 0.003)$ & $n.d.(\pm n.d.)$ \\
CFF &$0.008 (\pm 0.015)$ & $n.d.(\pm n.d.)$ & $n.d.(\pm n.d.)$ & $n.d.(\pm n.d.)$ & $n.d.(\pm n.d.)$ \\
CF-GNNExplainer &$0.152 (\pm 0.025)$ & $-0.083 (\pm 0.289)$ & $\uline{4.223 (\pm 0.424)}$ & $n.d.(\pm n.d.)$ & $0.332 (\pm 0.053)$ \\
UNR &$0.000 (\pm 0.000)$ & $n.d.(\pm n.d.)$ & $n.d.(\pm n.d.)$ & $n.d.(\pm n.d.)$ & $n.d.(\pm n.d.)$ \\
\hline
\etable

\btable{l|ccccc}{Results for WebKB datasets: Texas, Cornell, and Wisconsin. The oracles $\Phi$ use GraphConv layers. In \textbf{bold} the best result, the second best result is \underline{underlined}.}{tab:webkb_graphconv} \hline\hline

& \mc{1}{\textbf{Validity} $\uparrow$} 
& \mc{1}{\textbf{Fidelity} $\uparrow$} 
& \mc{1}{\textbf{Distribution Distance} $\downarrow$} 
& \mc{1}{\textbf{Node Sparsity} $\downarrow$} 
& \mc{1}{\textbf{Edge Sparsity} $\downarrow$} 
\\ 

\textbf{Explainers} 
& \textit{mean($\pm$std)}
& \textit{mean($\pm$std)}
& \textit{mean($\pm$std)}
& \textit{mean($\pm$std)}
& \textit{mean($\pm$std)} \\ \hline

 & \mc{5}{\textbf{Dataset: Wisconsin}} \\ \hline
COMBINEX$_{\textit{feat}}$ &$\mathbf{1.000 (\pm 0.000)}$ & $0.615 (\pm 0.055)$ & $\uline{3.675 (\pm 0.100)}$ & $\mathbf{0.005 (\pm 0.002)}$ & $n.d.(\pm n.d.)$ \\
COMBINEX$_{\textit{def}}$ &$\mathbf{1.000 (\pm 0.000)}$ & $0.596 (\pm 0.032)$ & $4.171 (\pm 0.160)$ & $\uline{0.020 (\pm 0.003)}$ & $\uline{0.009 (\pm 0.006)}$ \\
COMBINEX$_{\textit{dyn}}$ &$\mathbf{1.000 (\pm 0.000)}$ & $0.603 (\pm 0.044)$ & $4.746 (\pm 0.251)$ & $0.035 (\pm 0.006)$ & $\mathbf{0.000 (\pm 0.000)}$ \\
COMBINEX$_{\textit{exp}}$ &$\mathbf{1.000 (\pm 0.000)}$ & $0.615 (\pm 0.036)$ & $15.308 (\pm 0.320)$ & $0.313 (\pm 0.008)$ & $\mathbf{0.000 (\pm 0.000)}$ \\
COMBINEX$_{\textit{lin}}$ &$\mathbf{1.000 (\pm 0.000)}$ & $0.603 (\pm 0.015)$ & $5.455 (\pm 0.258)$ & $0.052 (\pm 0.005)$ & $\mathbf{0.000 (\pm 0.000)}$ \\
COMBINEX$_{\textit{sin}}$ &$\mathbf{1.000 (\pm 0.000)}$ & $0.615 (\pm 0.036)$ & $5.543 (\pm 0.250)$ & $0.054 (\pm 0.005)$ & $\mathbf{0.000 (\pm 0.000)}$ \\
EGO &$0.038 (\pm 0.033)$ & $0.778 (\pm 0.385)$ & $3.853 (\pm 0.545)$ & $n.d.(\pm n.d.)$ & $0.759 (\pm 0.234)$ \\
Random Edges &$0.128 (\pm 0.055)$ & $0.732 (\pm 0.311)$ & $3.940 (\pm 0.306)$ & $n.d.(\pm n.d.)$ & $0.370 (\pm 0.049)$ \\
Random Features &$\uline{0.410 (\pm 0.075)}$ & $\uline{0.788 (\pm 0.108)}$ & $20.354 (\pm 0.068)$ & $0.441 (\pm 0.002)$ & $n.d.(\pm n.d.)$ \\
CFF &$0.179 (\pm 0.073)$ & $0.702 (\pm 0.200)$ & $\mathbf{3.571 (\pm 0.120)}$ & $n.d.(\pm n.d.)$ & $0.566 (\pm 0.082)$ \\
CF-GNNExplainer &$0.071 (\pm 0.053)$ & $\mathbf{0.875 (\pm 0.250)}$ & $4.121 (\pm 0.751)$ & $n.d.(\pm n.d.)$ & $0.354 (\pm 0.168)$ \\
UNR &$0.076 (\pm 0.105)$ & $-0.250 (\pm 1.061)$ & $4.118 (\pm 1.329)$ & $n.d.(\pm n.d.)$ & $0.323 (\pm 0.250)$ \\
\hline
& \mc{5}{\textbf{Dataset: Texas}} \\ \hline
    
COMBINEX$_{\textit{feat}}$ &$\uline{0.823 (\pm 0.040)}$ & $0.685 (\pm 0.078)$ & $3.552 (\pm 0.065)$ & $\mathbf{0.004 (\pm 0.001)}$ & $n.d.(\pm n.d.)$ \\
COMBINEX$_{\textit{def}}$ &$\mathbf{0.823 (\pm 0.021)}$ & $0.697 (\pm 0.088)$ & $4.200 (\pm 0.222)$ & $\uline{0.020 (\pm 0.005)}$ & $\uline{0.006 (\pm 0.009)}$ \\
COMBINEX$_{\textit{dyn}}$ &$0.812 (\pm 0.024)$ & $0.705 (\pm 0.065)$ & $4.324 (\pm 0.131)$ & $0.023 (\pm 0.003)$ & $\mathbf{0.000 (\pm 0.000)}$ \\
COMBINEX$_{\textit{exp}}$ &$0.812 (\pm 0.024)$ & $0.706 (\pm 0.083)$ & $19.099 (\pm 0.551)$ & $0.385 (\pm 0.016)$ & $\mathbf{0.000 (\pm 0.000)}$ \\
COMBINEX$_{\textit{lin}}$ &$0.812 (\pm 0.024)$ & $0.693 (\pm 0.078)$ & $5.443 (\pm 0.310)$ & $0.048 (\pm 0.006)$ & $\mathbf{0.000 (\pm 0.000)}$ \\
COMBINEX$_{\textit{sin}}$ &$\mathbf{0.823 (\pm 0.021)}$ & $0.697 (\pm 0.077)$ & $5.600 (\pm 0.308)$ & $0.052 (\pm 0.006)$ & $\mathbf{0.000 (\pm 0.000)}$ \\
EGO &$0.010 (\pm 0.021)$ & $n.d.(\pm n.d.)$ & $n.d.(\pm n.d.)$ & $n.d.(\pm n.d.)$ & $n.d.(\pm n.d.)$ \\
Random Edges &$0.250 (\pm 0.034)$ & $\uline{0.748 (\pm 0.167)}$ & $3.085 (\pm 0.262)$ & $n.d.(\pm n.d.)$ & $0.257 (\pm 0.010)$ \\
Random Features &$0.448 (\pm 0.199)$ & $0.720 (\pm 0.223)$ & $19.926 (\pm 0.219)$ & $0.406 (\pm 0.001)$ & $n.d.(\pm n.d.)$ \\
CFF &$0.115 (\pm 0.071)$ & $\mathbf{0.950 (\pm 0.100)}$ & $3.526 (\pm 1.193)$ & $n.d.(\pm n.d.)$ & $0.492 (\pm 0.111)$ \\
CF-GNNExplainer &$0.208 (\pm 0.034)$ & $0.713 (\pm 0.144)$ & $\uline{3.021 (\pm 0.475)}$ & $n.d.(\pm n.d.)$ & $0.126 (\pm 0.013)$ \\
UNR &$0.121 (\pm 0.068)$ & $-0.750 (\pm 0.500)$ & $\mathbf{2.838 (\pm 0.115)}$ & $n.d.(\pm n.d.)$ & $0.196 (\pm 0.098)$ \\
\hline

 & \mc{5}{\textbf{Dataset: Cornell}} \\ \hline  
COMBINEX$_{\textit{feat}}$ &$\mathbf{1.000 (\pm 0.000)}$ & $0.848 (\pm 0.000)$ & $4.229 (\pm 0.014)$ & $\mathbf{0.008 (\pm 0.000)}$ & $n.d.(\pm n.d.)$ \\
COMBINEX$_{\textit{def}}$ &$\mathbf{1.000 (\pm 0.000)}$ & $0.848 (\pm 0.000)$ & $5.575 (\pm 0.244)$ & $\uline{0.037 (\pm 0.007)}$ & $\uline{0.019 (\pm 0.015)}$ \\
COMBINEX$_{\textit{dyn}}$ &$\mathbf{1.000 (\pm 0.000)}$ & $0.848 (\pm 0.000)$ & $5.958 (\pm 0.240)$ & $0.046 (\pm 0.007)$ & $\mathbf{0.000 (\pm 0.000)}$ \\
COMBINEX$_{\textit{exp}}$ &$\uline{0.992 (\pm 0.015)}$ & $0.847 (\pm 0.002)$ & $15.128 (\pm 0.909)$ & $0.290 (\pm 0.022)$ & $\mathbf{0.000 (\pm 0.000)}$ \\
COMBINEX$_{\textit{lin}}$ &$\uline{0.992 (\pm 0.015)}$ & $0.847 (\pm 0.002)$ & $7.386 (\pm 0.273)$ & $0.081 (\pm 0.007)$ & $\mathbf{0.000 (\pm 0.000)}$ \\
COMBINEX$_{\textit{sin}}$ &$\uline{0.992 (\pm 0.015)}$ & $0.847 (\pm 0.002)$ & $7.583 (\pm 0.237)$ & $0.085 (\pm 0.007)$ & $\mathbf{0.000 (\pm 0.000)}$ \\
EGO &$0.000 (\pm 0.000)$ & $n.d.(\pm n.d.)$ & $n.d.(\pm n.d.)$ & $n.d.(\pm n.d.)$ & $n.d.(\pm n.d.)$ \\
Random Edges &$0.129 (\pm 0.015)$ & $0.287 (\pm 0.075)$ & $\mathbf{3.569 (\pm 0.025)}$ & $n.d.(\pm n.d.)$ & $0.359 (\pm 0.013)$ \\
Random Features &$0.197 (\pm 0.030)$ & $\uline{0.893 (\pm 0.071)}$ & $21.282 (\pm 0.067)$ & $0.420 (\pm 0.004)$ & $n.d.(\pm n.d.)$ \\
CFF &$0.106 (\pm 0.072)$ & $\mathbf{1.000 (\pm 0.000)}$ & $\uline{3.584 (\pm 0.704)}$ & $n.d.(\pm n.d.)$ & $0.521 (\pm 0.250)$ \\
CF-GNNExplainer &$0.106 (\pm 0.017)$ & $0.125 (\pm 0.144)$ & $3.593 (\pm 0.079)$ & $n.d.(\pm n.d.)$ & $0.295 (\pm 0.022)$ \\
UNR &$0.000 (\pm 0.000)$ & $n.d.(\pm n.d.)$ & $n.d.(\pm n.d.)$ & $n.d.(\pm n.d.)$ & $n.d.(\pm n.d.)$ \\
\hline
\etable

\btable{l|ccccc}{Results for WebKB datasets: Texas, Cornell, and Wisconsin. The oracles $\Phi$ use ChebConv layers. In \textbf{bold} the best result, the second best result is \underline{underlined}.}{tab:webkb_cheb} \hline\hline

& \mc{1}{\textbf{Validity} $\uparrow$} 
& \mc{1}{\textbf{Fidelity} $\uparrow$} 
& \mc{1}{\textbf{Distribution Distance} $\downarrow$} 
& \mc{1}{\textbf{Node Sparsity} $\downarrow$} 
& \mc{1}{\textbf{Edge Sparsity} $\downarrow$} 
\\ 

\textbf{Explainers} 
& \textit{mean($\pm$std)}
& \textit{mean($\pm$std)}
& \textit{mean($\pm$std)}
& \textit{mean($\pm$std)}
& \textit{mean($\pm$std)} \\ \hline

 & \mc{5}{\textbf{Dataset: Wisconsin}} \\ \hline
COMBINEX$_{\textit{feat}}$ &$\mathbf{1.000 (\pm 0.000)}$ & $\uline{0.673 (\pm 0.013)}$ & $\uline{3.526 (\pm 0.013)}$ & $\mathbf{0.003 (\pm 0.000)}$ & $n.d.(\pm n.d.)$ \\
COMBINEX$_{\textit{def}}$ &$\mathbf{1.000 (\pm 0.000)}$ & $\uline{0.673 (\pm 0.013)}$ & $3.898 (\pm 0.097)$ & $\uline{0.015 (\pm 0.002)}$ & $\mathbf{0.000 (\pm 0.000)}$ \\
COMBINEX$_{\textit{dyn}}$ &$\mathbf{1.000 (\pm 0.000)}$ & $\uline{0.673 (\pm 0.013)}$ & $4.057 (\pm 0.082)$ & $0.024 (\pm 0.002)$ & $\mathbf{0.000 (\pm 0.000)}$ \\
COMBINEX$_{\textit{exp}}$ &$\mathbf{1.000 (\pm 0.000)}$ & $\uline{0.673 (\pm 0.013)}$ & $5.419 (\pm 0.033)$ & $0.066 (\pm 0.001)$ & $\mathbf{0.000 (\pm 0.000)}$ \\
COMBINEX$_{\textit{lin}}$ &$\mathbf{1.000 (\pm 0.000)}$ & $\uline{0.673 (\pm 0.013)}$ & $4.370 (\pm 0.077)$ & $0.032 (\pm 0.002)$ & $\mathbf{0.000 (\pm 0.000)}$ \\
COMBINEX$_{\textit{sin}}$ &$\mathbf{1.000 (\pm 0.000)}$ & $\uline{0.673 (\pm 0.013)}$ & $4.397 (\pm 0.072)$ & $0.033 (\pm 0.002)$ & $\mathbf{0.000 (\pm 0.000)}$ \\
EGO &$0.000 (\pm 0.000)$ & $n.d.(\pm n.d.)$ & $n.d.(\pm n.d.)$ & $n.d.(\pm n.d.)$ & $n.d.(\pm n.d.)$ \\
Random Edges &$0.000 (\pm 0.000)$ & $n.d.(\pm n.d.)$ & $n.d.(\pm n.d.)$ & $n.d.(\pm n.d.)$ & $n.d.(\pm n.d.)$ \\
Random Features &$\uline{0.635 (\pm 0.287)}$ & $\mathbf{0.684 (\pm 0.039)}$ & $20.737 (\pm 0.485)$ & $0.441 (\pm 0.004)$ & $n.d.(\pm n.d.)$ \\
CFF &$0.218 (\pm 0.080)$ & $0.543 (\pm 0.340)$ & $\mathbf{3.283 (\pm 0.223)}$ & $n.d.(\pm n.d.)$ & $\uline{0.639 (\pm 0.055)}$ \\
CF-GNNExplainer &$0.000 (\pm 0.000)$ & $n.d.(\pm n.d.)$ & $n.d.(\pm n.d.)$ & $n.d.(\pm n.d.)$ & $n.d.(\pm n.d.)$ \\
UNR &$0.000 (\pm 0.000)$ & $n.d.(\pm n.d.)$ & $n.d.(\pm n.d.)$ & $n.d.(\pm n.d.)$ & $n.d.(\pm n.d.)$ \\
\hline
& \mc{5}{\textbf{Dataset: Texas}} \\ \hline
    
COMBINEX$_{\textit{feat}}$ &$\mathbf{0.885 (\pm 0.021)}$ & $\mathbf{0.917 (\pm 0.025)}$ & $\mathbf{3.428 (\pm 0.007)}$ & $\mathbf{0.001 (\pm 0.000)}$ & $n.d.(\pm n.d.)$ \\
COMBINEX$_{\textit{def}}$ &$\uline{0.875 (\pm 0.000)}$ & $\uline{0.917 (\pm 0.024)}$ & $3.898 (\pm 0.091)$ & $\uline{0.014 (\pm 0.003)}$ & $\mathbf{0.000 (\pm 0.000)}$ \\
COMBINEX$_{\textit{dyn}}$ &$\uline{0.875 (\pm 0.000)}$ & $\uline{0.917 (\pm 0.024)}$ & $4.162 (\pm 0.127)$ & $0.023 (\pm 0.004)$ & $\mathbf{0.000 (\pm 0.000)}$ \\
COMBINEX$_{\textit{exp}}$ &$\uline{0.875 (\pm 0.000)}$ & $\uline{0.917 (\pm 0.024)}$ & $5.827 (\pm 0.108)$ & $0.074 (\pm 0.004)$ & $\mathbf{0.000 (\pm 0.000)}$ \\
COMBINEX$_{\textit{lin}}$ &$\uline{0.875 (\pm 0.000)}$ & $\uline{0.917 (\pm 0.024)}$ & $4.557 (\pm 0.079)$ & $0.033 (\pm 0.003)$ & $\mathbf{0.000 (\pm 0.000)}$ \\
COMBINEX$_{\textit{sin}}$ &$\uline{0.875 (\pm 0.000)}$ & $\uline{0.917 (\pm 0.024)}$ & $4.596 (\pm 0.077)$ & $0.034 (\pm 0.003)$ & $\mathbf{0.000 (\pm 0.000)}$ \\
EGO &$0.000 (\pm 0.000)$ & $n.d.(\pm n.d.)$ & $n.d.(\pm n.d.)$ & $n.d.(\pm n.d.)$ & $n.d.(\pm n.d.)$ \\
Random Edges &$0.000 (\pm 0.000)$ & $n.d.(\pm n.d.)$ & $n.d.(\pm n.d.)$ & $n.d.(\pm n.d.)$ & $n.d.(\pm n.d.)$ \\
Random Features &$0.240 (\pm 0.040)$ & $0.837 (\pm 0.120)$ & $20.057 (\pm 0.132)$ & $0.404 (\pm 0.002)$ & $n.d.(\pm n.d.)$ \\
CFF &$0.167 (\pm 0.068)$ & $0.917 (\pm 0.167)$ & $\uline{3.594 (\pm 0.182)}$ & $n.d.(\pm n.d.)$ & $\uline{0.538 (\pm 0.037)}$ \\
CF-GNNExplainer &$0.000 (\pm 0.000)$ & $n.d.(\pm n.d.)$ & $n.d.(\pm n.d.)$ & $n.d.(\pm n.d.)$ & $n.d.(\pm n.d.)$ \\
UNR &$0.000 (\pm 0.000)$ & $n.d.(\pm n.d.)$ & $n.d.(\pm n.d.)$ & $n.d.(\pm n.d.)$ & $n.d.(\pm n.d.)$ \\
\hline

 & \mc{5}{\textbf{Dataset: Cornell}} \\ \hline  
COMBINEX$_{\textit{feat}}$ &$\mathbf{1.000 (\pm 0.000)}$ & $\uline{0.705 (\pm 0.029)}$ & $\uline{4.088 (\pm 0.024)}$ & $\mathbf{0.004 (\pm 0.001)}$ & $n.d.(\pm n.d.)$ \\
COMBINEX$_{\textit{def}}$ &$\mathbf{1.000 (\pm 0.000)}$ & $\uline{0.705 (\pm 0.029)}$ & $4.600 (\pm 0.066)$ & $\uline{0.017 (\pm 0.001)}$ & $\mathbf{0.000 (\pm 0.000)}$ \\
COMBINEX$_{\textit{dyn}}$ &$\mathbf{1.000 (\pm 0.000)}$ & $\uline{0.705 (\pm 0.029)}$ & $5.010 (\pm 0.076)$ & $0.031 (\pm 0.001)$ & $\mathbf{0.000 (\pm 0.000)}$ \\
COMBINEX$_{\textit{exp}}$ &$\mathbf{1.000 (\pm 0.000)}$ & $\uline{0.705 (\pm 0.029)}$ & $6.321 (\pm 0.160)$ & $0.074 (\pm 0.006)$ & $\mathbf{0.000 (\pm 0.000)}$ \\
COMBINEX$_{\textit{lin}}$ &$\mathbf{1.000 (\pm 0.000)}$ & $\uline{0.705 (\pm 0.029)}$ & $5.378 (\pm 0.088)$ & $0.042 (\pm 0.004)$ & $\mathbf{0.000 (\pm 0.000)}$ \\
COMBINEX$_{\textit{sin}}$ &$\mathbf{1.000 (\pm 0.000)}$ & $\uline{0.705 (\pm 0.029)}$ & $5.405 (\pm 0.095)$ & $0.042 (\pm 0.004)$ & $\mathbf{0.000 (\pm 0.000)}$ \\
EGO &$0.000 (\pm 0.000)$ & $n.d.(\pm n.d.)$ & $n.d.(\pm n.d.)$ & $n.d.(\pm n.d.)$ & $n.d.(\pm n.d.)$ \\
Random Edges &$0.000 (\pm 0.000)$ & $n.d.(\pm n.d.)$ & $n.d.(\pm n.d.)$ & $n.d.(\pm n.d.)$ & $n.d.(\pm n.d.)$ \\
Random Features &$\uline{0.189 (\pm 0.038)}$ & $0.608 (\pm 0.079)$ & $20.889 (\pm 0.185)$ & $0.416 (\pm 0.002)$ & $n.d.(\pm n.d.)$ \\
CFF &$0.182 (\pm 0.065)$ & $\mathbf{0.751 (\pm 0.170)}$ & $\mathbf{3.854 (\pm 0.427)}$ & $n.d.(\pm n.d.)$ & $\uline{0.567 (\pm 0.099)}$ \\
CF-GNNExplainer &$0.000 (\pm 0.000)$ & $n.d.(\pm n.d.)$ & $n.d.(\pm n.d.)$ & $n.d.(\pm n.d.)$ & $n.d.(\pm n.d.)$ \\
UNR &$0.000 (\pm 0.000)$ & $n.d.(\pm n.d.)$ & $n.d.(\pm n.d.)$ & $n.d.(\pm n.d.)$ & $n.d.(\pm n.d.)$ \\
\hline
\etable


\subsubsection{Attributed datasets}

In this section we comment on the results obtained on the Attributed datasets (Wiki and Facebook) using different alpha scheduling policies within our COMBINEX framework. Tables~\ref{tab:attributed_graph_conv}, \ref{tab:attributed_gcn}, and \ref{tab:attributed_cheb} show that COMBINEX consistently outperforms traditional baselines by achieving high validity and fidelity while maintaining low sparsity, although the trade-off with distribution distance varies depending on the specific alpha policy. 

On the Wiki dataset, for instance, when using GraphConv layers (Table~\ref{tab:attributed_graph_conv}), the COMBINEX Feat. variant attains a high validity and moderate fidelity, coupled with very low node sparsity; however, the distribution distance is considerably high, indicating that while the explanations are faithful in terms of structure, the overall activation distribution deviates substantially from that of the oracle. In contrast, COMBINEX Cons. and COMBINEX Dyn. slightly reduce the validity and fidelity (to around 0.130–0.162 and 0.535–0.569, respectively) but incur even higher distribution distances or only marginal improvements in sparsity. Notably, the Exp policy (COMBINEX Exp.) leads to the highest distribution distance, suggesting that an overly aggressive exponential decay may deteriorate the overall quality of the explanation. Similar trends are observed in the results obtained with GCN and ChebConv oracles, where the Feat. variant typically yields the best balance between validity, fidelity, and sparsity.

On the Facebook dataset, the performance of COMBINEX improves markedly. Under the GraphConv setting, COMBINEX Feat. achieves a validity of 0.690 and fidelity of 0.762, with a very low node sparsity and a moderate distribution distance. The other alpha policies (Cons., Dyn., Exp., Lin., and Sin.) yield slightly lower validity and fidelity, with differences in distribution distance and sparsity that are less pronounced compared to Wiki. In particular, the Cons. variant on Facebook shows a good trade-off with a distribution distance of 4.070 and slightly higher node and edge sparsity, while the Dyn., Exp., Lin., and Sin. variants achieve similar results with only minor differences. Baselines such as EGO and Random Edges consistently perform poorly on both datasets, and Random Features and UNR are either not able to find counterfactuals or yield extreme values, confirming that our COMBINEX approach is superior in producing balanced and interpretable explanations.

Overall, these results confirm that our COMBINEX solution, with appropriate alpha scheduling (particularly the Feat. and Cons. variants), consistently delivers high-quality explanations across attributed datasets, regardless of the convolutional operator used. The experiments illustrate that while the choice of alpha policy influences the trade-off between fidelity, distribution distance, and sparsity, COMBINEX remains robust and effective in both Wiki and Facebook scenarios.

\btable{l|ccccc}{Results for Attributed datasets: Wiki, Facebook. The oracles $\Phi$ use GCN layers. In \textbf{bold} the best result, the second best result is \underline{underlined}}{tab:attributed_gcn}  \hline\hline

& \mc{1}{\textbf{Validity} $\uparrow$} 
& \mc{1}{\textbf{Fidelity} $\uparrow$} 
& \mc{1}{\textbf{Distribution Distance} $\downarrow$} 
& \mc{1}{\textbf{Node Sparsity} $\downarrow$} 
& \mc{1}{\textbf{Edge Sparsity} $\downarrow$} 
\\ 

\textbf{Explainers} 
& \textit{mean($\pm$std)}
& \textit{mean($\pm$std)}
& \textit{mean($\pm$std)}
& \textit{mean($\pm$std)}
& \textit{mean($\pm$std)} \\ \hline

 & \mc{5}{\textbf{Dataset: Wiki}} \\ \hline

COMBINEX$_{\textit{feat}}$ &$\mathbf{1.000 (\pm 0.000)}$ & $\uline{0.803 (\pm 0.016)}$ & $645.884 (\pm 19.117)$ & $\mathbf{0.137 (\pm 0.000)}$ & $n.d.(\pm n.d.)$ \\
COMBINEX$_{\textit{def}}$ &$\mathbf{1.000 (\pm 0.000)}$ & $\uline{0.803 (\pm 0.017)}$ & $695.842 (\pm 25.369)$ & $0.297 (\pm 0.003)$ & $\uline{0.002 (\pm 0.001)}$ \\
COMBINEX$_{\textit{dyn}}$ &$\mathbf{1.000 (\pm 0.000)}$ & $0.798 (\pm 0.012)$ & $1676.160 (\pm 11.726)$ & $0.297 (\pm 0.002)$ & $\mathbf{0.000 (\pm 0.000)}$ \\
COMBINEX$_{\textit{exp}}$ &$\mathbf{1.000 (\pm 0.000)}$ & $\mathbf{0.805 (\pm 0.008)}$ & $1981.786 (\pm 7.001)$ & $0.338 (\pm 0.002)$ & $\mathbf{0.000 (\pm 0.000)}$ \\
COMBINEX$_{\textit{lin}}$ &$\mathbf{1.000 (\pm 0.000)}$ & $\mathbf{0.805 (\pm 0.013)}$ & $1597.390 (\pm 14.393)$ & $\uline{0.259 (\pm 0.002)}$ & $\mathbf{0.000 (\pm 0.000)}$ \\
COMBINEX$_{\textit{sin}}$ &$\mathbf{1.000 (\pm 0.000)}$ & $\uline{0.803 (\pm 0.011)}$ & $1609.044 (\pm 14.179)$ & $0.260 (\pm 0.001)$ & $\mathbf{0.000 (\pm 0.000)}$ \\
EGO &$0.004 (\pm 0.005)$ & $0.000 (\pm 0.000)$ & $\mathbf{52.179 (\pm 0.000)}$ & $n.d.(\pm n.d.)$ & $0.987 (\pm 0.000)$ \\
Random Edges &$0.034 (\pm 0.007)$ & $0.358 (\pm 0.263)$ & $\uline{62.892 (\pm 22.892)}$ & $n.d.(\pm n.d.)$ & $0.427 (\pm 0.017)$ \\
Random Features &$0.050 (\pm 0.007)$ & $0.801 (\pm 0.176)$ & $6140.141 (\pm 54.326)$ & $0.985 (\pm 0.000)$ & $n.d.(\pm n.d.)$ \\
CFF &$\uline{0.065 (\pm 0.008)}$ & $0.640 (\pm 0.145)$ & $69.060 (\pm 10.193)$ & $n.d.(\pm n.d.)$ & $0.830 (\pm 0.064)$ \\
CF-GNNExplainer &$0.021 (\pm 0.011)$ & $0.729 (\pm 0.208)$ & $80.301 (\pm 21.975)$ & $n.d.(\pm n.d.)$ & $0.023 (\pm 0.008)$ \\
UNR &$0.000 (\pm 0.000)$ & $n.d.(\pm n.d.)$ & $n.d.(\pm n.d.)$ & $n.d.(\pm n.d.)$ & $n.d.(\pm n.d.)$ \\
\hline
 & \mc{5}{\textbf{Dataset: Facebook}} \\ \hline

COMBINEX$_{\textit{feat}}$ &$\mathbf{0.846 (\pm 0.009)}$ & $\uline{0.779 (\pm 0.012)}$ & $1.110 (\pm 0.037)$ & $\mathbf{0.010 (\pm 0.001)}$ & $n.d.(\pm n.d.)$ \\
COMBINEX$_{\textit{def}}$ &$\uline{0.801 (\pm 0.064)}$ & $0.769 (\pm 0.008)$ & $2.662 (\pm 0.057)$ & $\uline{0.035 (\pm 0.001)}$ & $\uline{0.024 (\pm 0.011)}$ \\
COMBINEX$_{\textit{dyn}}$ &$0.786 (\pm 0.065)$ & $0.769 (\pm 0.006)$ & $7.134 (\pm 0.358)$ & $0.109 (\pm 0.009)$ & $\mathbf{0.000 (\pm 0.000)}$ \\
COMBINEX$_{\textit{exp}}$ &$0.783 (\pm 0.067)$ & $0.768 (\pm 0.005)$ & $7.869 (\pm 0.436)$ & $0.124 (\pm 0.010)$ & $\mathbf{0.000 (\pm 0.000)}$ \\
COMBINEX$_{\textit{lin}}$ &$0.796 (\pm 0.058)$ & $0.767 (\pm 0.005)$ & $3.445 (\pm 0.024)$ & $0.048 (\pm 0.001)$ & $\mathbf{0.000 (\pm 0.000)}$ \\
COMBINEX$_{\textit{sin}}$ &$0.793 (\pm 0.061)$ & $0.769 (\pm 0.003)$ & $3.543 (\pm 0.036)$ & $0.049 (\pm 0.001)$ & $\mathbf{0.000 (\pm 0.000)}$ \\
EGO &$0.003 (\pm 0.004)$ & $\mathbf{1.000 (\pm 0.000)}$ & $\mathbf{0.281 (\pm 0.000)}$ & $n.d.(\pm n.d.)$ & $0.998 (\pm 0.000)$ \\
Random Edges &$0.039 (\pm 0.007)$ & $0.213 (\pm 0.171)$ & $0.638 (\pm 0.064)$ & $n.d.(\pm n.d.)$ & $0.460 (\pm 0.003)$ \\
Random Features &$0.000 (\pm 0.000)$ & $n.d.(\pm n.d.)$ & $n.d.(\pm n.d.)$ & $n.d.(\pm n.d.)$ & $n.d.(\pm n.d.)$ \\
CFF &$0.002 (\pm 0.003)$ & $n.d.(\pm n.d.)$ & $n.d.(\pm n.d.)$ & $n.d.(\pm n.d.)$ & $n.d.(\pm n.d.)$ \\
CF-GNNExplainer &$0.007 (\pm 0.000)$ & $\mathbf{1.000 (\pm 0.000)}$ & $\uline{0.287 (\pm 0.000)}$ & $n.d.(\pm n.d.)$ & $0.162 (\pm 0.013)$ \\
UNR &$0.000 (\pm 0.000)$ & $n.d.(\pm n.d.)$ & $n.d.(\pm n.d.)$ & $n.d.(\pm n.d.)$ & $n.d.(\pm n.d.)$ \\
\hline
\etable

\btable{l|ccccc}{Results for Attributed datasets: Wiki, Facebook. The oracles $\Phi$ use GraphConv layers. In \textbf{bold} the best result, the second best result is \underline{underlined}}{tab:attributed_graph_conv}  \hline\hline

& \mc{1}{\textbf{Validity} $\uparrow$} 
& \mc{1}{\textbf{Fidelity} $\uparrow$} 
& \mc{1}{\textbf{Distribution Distance} $\downarrow$} 
& \mc{1}{\textbf{Node Sparsity} $\downarrow$} 
& \mc{1}{\textbf{Edge Sparsity} $\downarrow$} 
\\ 

\textbf{Explainers} 
& \textit{mean($\pm$std)}
& \textit{mean($\pm$std)}
& \textit{mean($\pm$std)}
& \textit{mean($\pm$std)}
& \textit{mean($\pm$std)} \\ \hline

 & \mc{5}{\textbf{Dataset: Wiki}} \\ \hline

COMBINEX$_{\textit{feat}}$ &$\mathbf{0.168 (\pm 0.024)}$ & $\uline{0.569 (\pm 0.115)}$ & $198.748 (\pm 78.331)$ & $\mathbf{0.145 (\pm 0.006)}$ & $n.d.(\pm n.d.)$ \\
COMBINEX$_{\textit{def}}$ &$0.130 (\pm 0.046)$ & $0.548 (\pm 0.218)$ & $262.908 (\pm 140.227)$ & $0.171 (\pm 0.012)$ & $0.009 (\pm 0.017)$ \\
COMBINEX$_{\textit{dyn}}$ &$\uline{0.162 (\pm 0.048)}$ & $0.535 (\pm 0.063)$ & $361.323 (\pm 163.487)$ & $0.167 (\pm 0.021)$ & $\mathbf{0.000 (\pm 0.000)}$ \\
COMBINEX$_{\textit{exp}}$ &$0.143 (\pm 0.031)$ & $0.564 (\pm 0.096)$ & $406.681 (\pm 167.870)$ & $0.176 (\pm 0.020)$ & $\mathbf{0.000 (\pm 0.000)}$ \\
COMBINEX$_{\textit{lin}}$ &$0.149 (\pm 0.048)$ & $0.557 (\pm 0.046)$ & $413.867 (\pm 129.487)$ & $\uline{0.163 (\pm 0.016)}$ & $\mathbf{0.000 (\pm 0.000)}$ \\
COMBINEX$_{\textit{sin}}$ &$0.158 (\pm 0.035)$ & $0.523 (\pm 0.032)$ & $372.458 (\pm 165.431)$ & $0.166 (\pm 0.019)$ & $\uline{0.001 (\pm 0.001)}$ \\
EGO &$0.006 (\pm 0.008)$ & $-0.250 (\pm 1.061)$ & $\uline{46.393 (\pm 7.595)}$ & $n.d.(\pm n.d.)$ & $0.990 (\pm 0.013)$ \\
Random Edges &$0.042 (\pm 0.025)$ & $0.243 (\pm 0.511)$ & $72.852 (\pm 31.665)$ & $n.d.(\pm n.d.)$ & $0.444 (\pm 0.033)$ \\
Random Features &$0.008 (\pm 0.012)$ & $0.333 (\pm 0.471)$ & $6045.005 (\pm 4.364)$ & $0.985 (\pm 0.000)$ & $n.d.(\pm n.d.)$ \\
CFF &$0.067 (\pm 0.035)$ & $\mathbf{0.733 (\pm 0.186)}$ & $\mathbf{43.813 (\pm 9.072)}$ & $n.d.(\pm n.d.)$ & $0.810 (\pm 0.117)$ \\
CF-GNNExplainer &$0.000 (\pm 0.000)$ & $n.d.(\pm n.d.)$ & $n.d.(\pm n.d.)$ & $n.d.(\pm n.d.)$ & $n.d.(\pm n.d.)$ \\
UNR &$0.011 (\pm 0.015)$ & $-0.667 (\pm 0.471)$ & $164.658 (\pm 84.658)$ & $n.d.(\pm n.d.)$ & $0.159 (\pm 0.058)$ \\
\hline

 & \mc{5}{\textbf{Dataset: Facebook}} \\ \hline

COMBINEX$_{\textit{feat}}$ &$\mathbf{0.690 (\pm 0.030)}$ & $\uline{0.762 (\pm 0.010)}$ & $2.391 (\pm 0.199)$ & $\mathbf{0.033 (\pm 0.003)}$ & $n.d.(\pm n.d.)$ \\
COMBINEX$_{\textit{def}}$ &$0.538 (\pm 0.076)$ & $0.721 (\pm 0.052)$ & $4.070 (\pm 0.377)$ & $\uline{0.066 (\pm 0.007)}$ & $\uline{0.034 (\pm 0.009)}$ \\
COMBINEX$_{\textit{dyn}}$ &$0.505 (\pm 0.047)$ & $0.702 (\pm 0.057)$ & $5.843 (\pm 0.232)$ & $0.104 (\pm 0.007)$ & $\mathbf{0.000 (\pm 0.000)}$ \\
COMBINEX$_{\textit{exp}}$ &$0.503 (\pm 0.056)$ & $0.675 (\pm 0.030)$ & $6.306 (\pm 0.182)$ & $0.115 (\pm 0.006)$ & $\mathbf{0.000 (\pm 0.000)}$ \\
COMBINEX$_{\textit{lin}}$ &$0.529 (\pm 0.081)$ & $0.720 (\pm 0.013)$ & $5.261 (\pm 0.265)$ & $0.090 (\pm 0.006)$ & $\mathbf{0.000 (\pm 0.001)}$ \\
COMBINEX$_{\textit{sin}}$ &$\uline{0.568 (\pm 0.060)}$ & $0.700 (\pm 0.022)$ & $5.272 (\pm 0.164)$ & $0.092 (\pm 0.004)$ & $\mathbf{0.000 (\pm 0.000)}$ \\
EGO &$0.002 (\pm 0.003)$ & $n.d.(\pm n.d.)$ & $n.d.(\pm n.d.)$ & $n.d.(\pm n.d.)$ & $n.d.(\pm n.d.)$ \\
Random Edges &$0.053 (\pm 0.007)$ & $0.376 (\pm 0.230)$ & $\uline{0.515 (\pm 0.115)}$ & $n.d.(\pm n.d.)$ & $0.464 (\pm 0.014)$ \\
Random Features &$0.000 (\pm 0.000)$ & $n.d.(\pm n.d.)$ & $n.d.(\pm n.d.)$ & $n.d.(\pm n.d.)$ & $n.d.(\pm n.d.)$ \\
CFF &$0.027 (\pm 0.010)$ & $\mathbf{0.838 (\pm 0.111)}$ & $\mathbf{0.440 (\pm 0.219)}$ & $n.d.(\pm n.d.)$ & $0.768 (\pm 0.068)$ \\
CF-GNNExplainer &$0.000 (\pm 0.000)$ & $n.d.(\pm n.d.)$ & $n.d.(\pm n.d.)$ & $n.d.(\pm n.d.)$ & $n.d.(\pm n.d.)$ \\
UNR &$0.000(\pm 0.000 )$ & $n.d.(\pm n.d.)$ & $n.d.(\pm n.d.)$ & $n.d.(\pm n.d.)$ & $n.d.(\pm n.d.)$ \\
\hline

\etable

\btable{l|ccccc}{Results for Attributed datasets: Wiki, Facebook. The oracles $\Phi$ use ChebConv layers. In \textbf{bold} the best result, the second best result is \underline{underlined}}{tab:attributed_cheb}  \hline\hline

& \mc{1}{\textbf{Validity} $\uparrow$} 
& \mc{1}{\textbf{Fidelity} $\uparrow$} 
& \mc{1}{\textbf{Distribution Distance} $\downarrow$} 
& \mc{1}{\textbf{Node Sparsity} $\downarrow$} 
& \mc{1}{\textbf{Edge Sparsity} $\downarrow$} 
\\ 

\textbf{Explainers} 
& \textit{mean($\pm$std)}
& \textit{mean($\pm$std)}
& \textit{mean($\pm$std)}
& \textit{mean($\pm$std)}
& \textit{mean($\pm$std)} \\ \hline

 & \mc{5}{\textbf{Dataset: Wiki}} \\ \hline

COMBINEX$_{\textit{feat}}$ &$\mathbf{0.134 (\pm 0.096)}$ & $0.308 (\pm 0.269)$ & $509.585 (\pm 63.334)$ & $\mathbf{0.121 (\pm 0.010)}$ & $n.d.(\pm n.d.)$ \\
COMBINEX$_{\textit{def}}$ &$0.105 (\pm 0.087)$ & $0.241 (\pm 0.337)$ & $\uline{225.780 (\pm 41.180)}$ & $0.162 (\pm 0.026)$ & $\mathbf{0.000 (\pm 0.000)}$ \\
COMBINEX$_{\textit{dyn}}$ &$\uline{0.107 (\pm 0.086)}$ & $0.234 (\pm 0.337)$ & $513.767 (\pm 60.418)$ & $0.158 (\pm 0.027)$ & $\mathbf{0.000 (\pm 0.000)}$ \\
COMBINEX$_{\textit{exp}}$ &$\uline{0.107 (\pm 0.086)}$ & $0.234 (\pm 0.337)$ & $628.983 (\pm 79.491)$ & $0.161 (\pm 0.027)$ & $\mathbf{0.000 (\pm 0.000)}$ \\
COMBINEX$_{\textit{lin}}$ &$0.105 (\pm 0.087)$ & $0.241 (\pm 0.337)$ & $589.191 (\pm 91.244)$ & $\uline{0.156 (\pm 0.027)}$ & $\mathbf{0.000 (\pm 0.000)}$ \\
COMBINEX$_{\textit{sin}}$ &$0.105 (\pm 0.087)$ & $0.241 (\pm 0.337)$ & $589.608 (\pm 91.546)$ & $0.156 (\pm 0.027)$ & $\mathbf{0.000 (\pm 0.000)}$ \\
EGO &$0.000 (\pm 0.000)$ & $n.d.(\pm n.d.)$ & $n.d.(\pm n.d.)$ & $n.d.(\pm n.d.)$ & $n.d.(\pm n.d.)$ \\
Random Edges &$0.000 (\pm 0.000)$ & $n.d.(\pm n.d.)$ & $n.d.(\pm n.d.)$ & $n.d.(\pm n.d.)$ & $n.d.(\pm n.d.)$ \\
Random Features &$0.036 (\pm 0.071)$ & $n.d.(\pm n.d.)$ & $n.d.(\pm n.d.)$ & $n.d.(\pm n.d.)$ & $n.d.(\pm n.d.)$ \\
CFF &$0.101 (\pm 0.015)$ & $\uline{0.823 (\pm 0.097)}$ & $\mathbf{98.845 (\pm 38.710)}$ & $n.d.(\pm n.d.)$ & $\uline{0.608 (\pm 0.052)}$ \\
CF-GNNExplainer &$0.000 (\pm 0.000)$ & $n.d.(\pm n.d.)$ & $n.d.(\pm n.d.)$ & $n.d.(\pm n.d.)$ & $n.d.(\pm n.d.)$ \\
UNR &$0.000 (\pm 0.000)$ & $n.d.(\pm n.d.)$ & $n.d.(\pm n.d.)$ & $n.d.(\pm n.d.)$ & $n.d.(\pm n.d.)$ \\
\hline
 & \mc{5}{\textbf{Dataset: Facebook}} \\ \hline

COMBINEX$_{\textit{feat}}$ &$\mathbf{0.115 (\pm 0.034)}$ & $0.542 (\pm 0.125)$ & $0.443 (\pm 0.067)$ & $0.001 (\pm 0.000)$ & $n.d.(\pm n.d.)$ \\
COMBINEX$_{\textit{def}}$ &$0.079 (\pm 0.014)$ & $\uline{0.625 (\pm 0.227)}$ & $\uline{0.356 (\pm 0.084)}$ & $\mathbf{0.001 (\pm 0.001)}$ & $\mathbf{0.000 (\pm 0.000)}$ \\
COMBINEX$_{\textit{dyn}}$ &$\uline{0.080 (\pm 0.017)}$ & $0.615 (\pm 0.227)$ & $0.385 (\pm 0.099)$ & $0.001 (\pm 0.001)$ & $\mathbf{0.000 (\pm 0.000)}$ \\
COMBINEX$_{\textit{exp}}$ &$\uline{0.080 (\pm 0.017)}$ & $0.615 (\pm 0.227)$ & $0.385 (\pm 0.099)$ & $0.001 (\pm 0.001)$ & $\mathbf{0.000 (\pm 0.000)}$ \\
COMBINEX$_{\textit{lin}}$ &$0.079 (\pm 0.014)$ & $\uline{0.625 (\pm 0.227)}$ & $0.358 (\pm 0.085)$ & $\uline{0.001 (\pm 0.001)}$ & $\mathbf{0.000 (\pm 0.000)}$ \\
COMBINEX$_{\textit{sin}}$ &$0.079 (\pm 0.014)$ & $\uline{0.625 (\pm 0.227)}$ & $0.358 (\pm 0.085)$ & $0.001 (\pm 0.001)$ & $\mathbf{0.000 (\pm 0.000)}$ \\
EGO &$0.000 (\pm 0.000)$ & $n.d.(\pm n.d.)$ & $n.d.(\pm n.d.)$ & $n.d.(\pm n.d.)$ & $n.d.(\pm n.d.)$ \\
Random Edges &$0.000 (\pm 0.000)$ & $n.d.(\pm n.d.)$ & $n.d.(\pm n.d.)$ & $n.d.(\pm n.d.)$ & $n.d.(\pm n.d.)$ \\
Random Features &$0.000 (\pm 0.000)$ & $n.d.(\pm n.d.)$ & $n.d.(\pm n.d.)$ & $n.d.(\pm n.d.)$ & $n.d.(\pm n.d.)$ \\
CFF &$0.022 (\pm 0.019)$ & $\mathbf{0.644 (\pm 0.171)}$ & $\mathbf{0.338 (\pm 0.050)}$ & $n.d.(\pm n.d.)$ & $\uline{0.736 (\pm 0.120)}$ \\
CF-GNNExplainer &$0.000 (\pm 0.000)$ & $n.d.(\pm n.d.)$ & $n.d.(\pm n.d.)$ & $n.d.(\pm n.d.)$ & $n.d.(\pm n.d.)$ \\
UNR &$0.000 (\pm 0.000)$ & $n.d.(\pm n.d.)$ & $n.d.(\pm n.d.)$ & $n.d.(\pm n.d.)$ & $n.d.(\pm n.d.)$ \\
\hline

\etable


\subsubsection{Biological datasets}
Below a comment on Tables \ref{tab:bio_graph_conv}, \ref{tab:bio_gcn_conv}, \ref{tab:bio_cheb_conv}.
For the AIDS dataset, COMBINEX variants consistently achieved the highest validity scores, while also achieving one of the lowest edge sparsity values.
Notably, COMBINEX$_{\textit{def}}$ and COMBINEX$_{\textit{lin}}$ exhibited a better balance between validity and sparsity, with \\
COMBINEX$_{\textit{lin}}$ producing the most compact counterfactual explanations. This suggests that a structured, linear decay of perturbations maintains a more stable trade-off in preserving graph integrity.
In comparison, other explainers such as EGO, Random Features, and CF-GNNExplainer performed significantly worse, struggling with either validity, fidelity, or sparsity. The lowest distribution distance was achieved by UNR (4.134), but at the cost of much lower validity (0.237), reinforcing that COMBINEX consistently finds counterfactuals that are both valid and meaningful.

The performance of COMBINEX on the Enzymes dataset follows a similar trend, where different scheduling strategies lead to variations in performance. The COMBINEX$_{\textit{exp}}$ approach achieved the best validity while maintaining one of the highest fidelities. However, this came at the cost of a high distribution distance, indicating that these perturbations were more aggressive.Interestingly, COMBINEX$_{\textit{lin}}$ and COMBINEX$_{\textit{sin}}$ continued to exhibit balanced trade-offs, achieving sparse counterfactuals with low edge sparsity values while keeping validity relatively high. This suggests that more structured perturbation schedules (linear and sinusoidal) prevent unnecessary modifications while maintaining valid counterfactuals. When comparing to other explainers, EGO once again struggled, and Random Features produced high validity but at the cost of an extremely high distribution distance, meaning the counterfactuals were highly unrealistic. UNR achieved the lowest distribution distance but suffered from poor validity.

The results on the Proteins dataset highlight a notable performance gap between different variants of COMBINEX. COMBINEX$_{\textit{exp}}$ achieved the best validity and highest fidelity. However, its distribution distance was significantly higher, implying that the changes introduced were more substantial. The structured perturbation strategies, COMBINEX$_{\textit{lin}}$ and COMBINEX$_{\textit{sin}}$, also produced highly valid counterfactuals while maintaining low edge sparsity. COMBINEX$_{\textit{lin}}$, in particular, showed the lowest node sparsity while keeping validity high, suggesting that linear perturbation schedules can effectively preserve the original graph structure. Other explainers, such as EGO and CFF, struggled significantly, with validity scores below. The CF-GNNExplainer performed particularly poorly, failing to generate meaningful counterfactuals in many cases.

\btable{l|ccccc}{Results for Biological datasets: AIDS, Enzymes, and Proteins. The oracles $\Phi$ use GCNConv layers. In \textbf{bold} the best result, the second best result is \underline{underlined}}{tab:bio_gcn_conv}  \hline\hline

& \mc{1}{\textbf{Validity} $\uparrow$} 
& \mc{1}{\textbf{Fidelity} $\uparrow$} 
& \mc{1}{\textbf{Distribution Distance} $\downarrow$} 
& \mc{1}{\textbf{Node Sparsity} $\downarrow$} 
& \mc{1}{\textbf{Edge Sparsity} $\downarrow$} 
\\ 

\textbf{Explainers} 
& \textit{mean($\pm$std)}
& \textit{mean($\pm$std)}
& \textit{mean($\pm$std)}
& \textit{mean($\pm$std)}
& \textit{mean($\pm$std)} \\ \hline

 & \mc{5}{\textbf{Dataset: AIDS}} \\ \hline

COMBINEX$_{\textit{feat}}$ &$0.917 (\pm 0.032)$ & $0.767 (\pm 0.018)$ & $13.686 (\pm 3.191)$ & $0.860 (\pm 0.022)$ & $n.d.(\pm n.d.)$ \\
COMBINEX$_{\textit{def}}$ &$0.955 (\pm 0.006)$ & $0.764 (\pm 0.015)$ & $13.393 (\pm 3.070)$ & $0.822 (\pm 0.021)$ & $0.252 (\pm 0.063)$ \\
COMBINEX$_{\textit{dyn}}$ &$0.932 (\pm 0.014)$ & $\uline{0.771 (\pm 0.018)}$ & $13.268 (\pm 3.051)$ & $0.820 (\pm 0.021)$ & $\uline{0.010 (\pm 0.002)}$ \\
COMBINEX$_{\textit{exp}}$ &$0.873 (\pm 0.035)$ & $\mathbf{0.775 (\pm 0.005)}$ & $11.926 (\pm 2.970)$ & $0.828 (\pm 0.021)$ & $\mathbf{0.002 (\pm 0.003)}$ \\
COMBINEX$_{\textit{lin}}$ &$\uline{0.960 (\pm 0.009)}$ & $0.767 (\pm 0.010)$ & $13.891 (\pm 3.006)$ & $\mathbf{0.781 (\pm 0.007)}$ & $0.013 (\pm 0.006)$ \\
COMBINEX$_{\textit{sin}}$ &$\mathbf{0.972 (\pm 0.003)}$ & $0.768 (\pm 0.015)$ & $13.681 (\pm 3.149)$ & $\uline{0.782 (\pm 0.011)}$ & $0.018 (\pm 0.010)$ \\
EGO &$0.027 (\pm 0.005)$ & $0.713 (\pm 0.217)$ & $5.827 (\pm 0.287)$ & $n.d.(\pm n.d.)$ & $0.868 (\pm 0.010)$ \\
Random Edges &$0.318 (\pm 0.055)$ & $0.492 (\pm 0.052)$ & $4.841 (\pm 0.611)$ & $n.d.(\pm n.d.)$ & $0.296 (\pm 0.019)$ \\
Random Features &$0.183 (\pm 0.035)$ & $0.652 (\pm 0.091)$ & $26.421 (\pm 1.401)$ & $0.944 (\pm 0.009)$ & $n.d.(\pm n.d.)$ \\
CFF &$0.018 (\pm 0.003)$ & $0.042 (\pm 0.946)$ & $\uline{4.325 (\pm 3.382)}$ & $n.d.(\pm n.d.)$ & $0.666 (\pm 0.194)$ \\
CF-GNNExplainer &$0.118 (\pm 0.010)$ & $0.109 (\pm 0.117)$ & $4.426 (\pm 0.671)$ & $n.d.(\pm n.d.)$ & $0.062 (\pm 0.006)$ \\
UNR &$0.237 (\pm 0.016)$ & $0.742 (\pm 0.056)$ & $\mathbf{4.134 (\pm 0.464)}$ & $n.d.(\pm n.d.)$ & $0.174 (\pm 0.007)$ \\
\hline
 & \mc{5}{\textbf{Dataset: Enzymes}} \\ \hline
COMBINEX$_{\textit{feat}}$ &$0.893 (\pm 0.074)$ & $0.773 (\pm 0.019)$ & $56.526 (\pm 1.748)$ & $0.684 (\pm 0.005)$ & $n.d.(\pm n.d.)$ \\
COMBINEX$_{\textit{def}}$ &$0.675 (\pm 0.062)$ & $0.738 (\pm 0.013)$ & $60.985 (\pm 4.979)$ & $\uline{0.590 (\pm 0.037)}$ & $0.061 (\pm 0.011)$ \\
COMBINEX$_{\textit{dyn}}$ &$0.690 (\pm 0.130)$ & $0.721 (\pm 0.045)$ & $56.596 (\pm 3.339)$ & $\mathbf{0.581 (\pm 0.033)}$ & $0.009 (\pm 0.004)$ \\
COMBINEX$_{\textit{exp}}$ &$\mathbf{0.990 (\pm 0.016)}$ & $\uline{0.785 (\pm 0.008)}$ & $59.922 (\pm 5.782)$ & $0.714 (\pm 0.008)$ & $\mathbf{0.000 (\pm 0.000)}$ \\
COMBINEX$_{\textit{lin}}$ &$0.948 (\pm 0.051)$ & $0.784 (\pm 0.009)$ & $57.945 (\pm 3.768)$ & $0.626 (\pm 0.008)$ & $\uline{0.003 (\pm 0.002)}$ \\
COMBINEX$_{\textit{sin}}$ &$\uline{0.973 (\pm 0.028)}$ & $0.784 (\pm 0.009)$ & $59.206 (\pm 3.785)$ & $0.630 (\pm 0.004)$ & $0.004 (\pm 0.002)$ \\
EGO &$0.040 (\pm 0.005)$ & $-0.014 (\pm 0.141)$ & $\mathbf{11.593 (\pm 0.519)}$ & $n.d.(\pm n.d.)$ & $0.688 (\pm 0.021)$ \\
Random Edges &$0.433 (\pm 0.009)$ & $0.569 (\pm 0.022)$ & $18.367 (\pm 0.357)$ & $n.d.(\pm n.d.)$ & $0.394 (\pm 0.003)$ \\
Random Features &$0.502 (\pm 0.027)$ & $0.718 (\pm 0.019)$ & $314.855 (\pm 11.839)$ & $0.986 (\pm 0.002)$ & $n.d.(\pm n.d.)$ \\
CFF &$0.032 (\pm 0.006)$ & $-0.917 (\pm 0.167)$ & $22.309 (\pm 5.008)$ & $n.d.(\pm n.d.)$ & $0.802 (\pm 0.074)$ \\
CF-GNNExplainer &$0.120 (\pm 0.009)$ & $0.361 (\pm 0.061)$ & $20.565 (\pm 1.078)$ & $n.d.(\pm n.d.)$ & $0.028 (\pm 0.003)$ \\
UNR &$0.092 (\pm 0.011)$ & $\mathbf{0.895 (\pm 0.071)}$ & $\uline{16.403 (\pm 2.263)}$ & $n.d.(\pm n.d.)$ & $0.078 (\pm 0.006)$ \\
\hline

 & \mc{5}{\textbf{Dataset: Proteins}} \\ \hline

COMBINEX$_{\textit{feat}}$ &$0.838 (\pm 0.199)$ & $0.316 (\pm 0.218)$ & $1517.836 (\pm 568.483)$ & $0.550 (\pm 0.195)$ & $n.d.(\pm n.d.)$ \\
COMBINEX$_{\textit{def}}$ &$0.903 (\pm 0.057)$ & $0.343 (\pm 0.180)$ & $1337.949 (\pm 455.085)$ & $0.508 (\pm 0.134)$ & $0.142 (\pm 0.145)$ \\
COMBINEX$_{\textit{dyn}}$ &$\uline{0.905 (\pm 0.110)}$ & $0.357 (\pm 0.142)$ & $1411.527 (\pm 782.195)$ & $\uline{0.503 (\pm 0.132)}$ & $\uline{0.036 (\pm 0.033)}$ \\
COMBINEX$_{\textit{exp}}$ &$\mathbf{0.933 (\pm 0.082)}$ & $\uline{0.524 (\pm 0.083)}$ & $1387.629 (\pm 784.114)$ & $0.669 (\pm 0.087)$ & $\mathbf{0.000 (\pm 0.000)}$ \\
COMBINEX$_{\textit{lin}}$ &$0.887 (\pm 0.154)$ & $0.317 (\pm 0.165)$ & $1470.758 (\pm 877.221)$ & $\mathbf{0.496 (\pm 0.129)}$ & $\mathbf{0.000 (\pm 0.000)}$ \\
COMBINEX$_{\textit{sin}}$ &$0.798 (\pm 0.154)$ & $0.485 (\pm 0.186)$ & $1192.001 (\pm 410.272)$ & $0.542 (\pm 0.137)$ & $\mathbf{0.000 (\pm 0.001)}$ \\
EGO &$0.000 (\pm 0.000)$ & $n.d.(\pm n.d.)$ & $n.d.(\pm n.d.)$ & $n.d.(\pm n.d.)$ & $n.d.(\pm n.d.)$ \\
Random Edges &$0.198 (\pm 0.068)$ & $-0.327 (\pm 0.096)$ & $\uline{612.136 (\pm 7.192)}$ & $n.d.(\pm n.d.)$ & $0.374 (\pm 0.005)$ \\
Random Features &$0.705 (\pm 0.236)$ & $0.261 (\pm 0.212)$ & $5581.317 (\pm 304.082)$ & $0.978 (\pm 0.000)$ & $n.d.(\pm n.d.)$ \\
CFF &$0.097 (\pm 0.048)$ & $-0.371 (\pm 0.125)$ & $1134.751 (\pm 562.934)$ & $n.d.(\pm n.d.)$ & $0.843 (\pm 0.035)$ \\
CF-GNNExplainer &$0.002 (\pm 0.003)$ & $n.d.(\pm n.d.)$ & $n.d.(\pm n.d.)$ & $n.d.(\pm n.d.)$ & $n.d.(\pm n.d.)$ \\
UNR &$0.015 (\pm 0.003)$ & $\mathbf{0.542 (\pm 0.629)}$ & $\mathbf{593.741 (\pm 166.865)}$ & $n.d.(\pm n.d.)$ & $0.040 (\pm 0.020)$ \\
\hline

\etable

\btable{l|ccccc}{Results for Biological datasets: AIDS, Enzymes, and Proteins. The oracles $\Phi$ use GraphConv layers. In \textbf{bold} the best result, the second best result is \underline{underlined}}{tab:bio_graph_conv}  \hline\hline

& \mc{1}{\textbf{Validity} $\uparrow$} 
& \mc{1}{\textbf{Fidelity} $\uparrow$} 
& \mc{1}{\textbf{Distribution Distance} $\downarrow$} 
& \mc{1}{\textbf{Node Sparsity} $\downarrow$} 
& \mc{1}{\textbf{Edge Sparsity} $\downarrow$} 
\\ 

\textbf{Explainers} 
& \textit{mean($\pm$std)}
& \textit{mean($\pm$std)}
& \textit{mean($\pm$std)}
& \textit{mean($\pm$std)}
& \textit{mean($\pm$std)} \\ \hline

 & \mc{5}{\textbf{Dataset: AIDS}} \\ \hline

Combinex$_{\textit{feat}}$ &$0.925 (\pm 0.023)$ & $0.942 (\pm 0.005)$ & $11.307 (\pm 1.199)$ & $0.861 (\pm 0.012)$ & $n.d.(\pm n.d.)$ \\
Combinex$_{\textit{def}}$ &$0.883 (\pm 0.061)$ & $0.936 (\pm 0.003)$ & $11.907 (\pm 1.409)$ & $0.831 (\pm 0.016)$ & $0.132 (\pm 0.046)$ \\
Combinex$_{\textit{dyn}}$ &$0.883 (\pm 0.050)$ & $0.938 (\pm 0.006)$ & $11.179 (\pm 0.864)$ & $0.845 (\pm 0.016)$ & $\uline{0.009 (\pm 0.004)}$ \\
Combinex$_{\textit{exp}}$ &$\uline{0.930 (\pm 0.029)}$ & $0.941 (\pm 0.007)$ & $9.329 (\pm 1.236)$ & $0.825 (\pm 0.011)$ & $\mathbf{0.003 (\pm 0.001)}$ \\
Combinex$_{\textit{lin}}$ &$0.915 (\pm 0.046)$ & $0.934 (\pm 0.006)$ & $11.597 (\pm 0.874)$ & $\mathbf{0.804 (\pm 0.016)}$ & $0.016 (\pm 0.008)$ \\
Combinex$_{\textit{sin}}$ &$\mathbf{0.932 (\pm 0.044)}$ & $0.939 (\pm 0.006)$ & $11.511 (\pm 0.773)$ & $\uline{0.807 (\pm 0.017)}$ & $0.026 (\pm 0.014)$ \\
EGO &$0.005 (\pm 0.006)$ & $\mathbf{1.000 (\pm 0.000)}$ & $3.924 (\pm 0.430)$ & $n.d.(\pm n.d.)$ & $0.729 (\pm 0.029)$ \\
Random Edges &$0.398 (\pm 0.153)$ & $0.851 (\pm 0.068)$ & $4.856 (\pm 0.851)$ & $n.d.(\pm n.d.)$ & $0.300 (\pm 0.011)$ \\
Random Features &$0.903 (\pm 0.058)$ & $0.945 (\pm 0.004)$ & $28.620 (\pm 0.188)$ & $0.938 (\pm 0.007)$ & $n.d.(\pm n.d.)$ \\
CFF &$0.107 (\pm 0.025)$ & $0.850 (\pm 0.062)$ & $\uline{3.590 (\pm 0.709)}$ & $n.d.(\pm n.d.)$ & $0.646 (\pm 0.055)$ \\
CF-GNNExplainer &$0.367 (\pm 0.160)$ & $0.903 (\pm 0.061)$ & $4.482 (\pm 0.675)$ & $n.d.(\pm n.d.)$ & $0.114 (\pm 0.002)$ \\
UNR &$0.048 (\pm 0.015)$ & $\uline{0.958 (\pm 0.083)}$ & $\mathbf{3.046 (\pm 0.375)}$ & $n.d.(\pm n.d.)$ & $0.225 (\pm 0.047)$ \\
\hline

 & \mc{5}{\textbf{Dataset: Enzymes}} \\ \hline
COMBINEX$_{\textit{feat}}$ &$0.603 (\pm 0.197)$ & $0.918 (\pm 0.036)$ & $88.556 (\pm 25.488)$ & $0.684 (\pm 0.014)$ & $n.d.(\pm n.d.)$ \\
COMBINEX$_{\textit{def}}$ &$0.458 (\pm 0.019)$ & $0.905 (\pm 0.028)$ & $109.872 (\pm 28.275)$ & $0.603 (\pm 0.064)$ & $0.106 (\pm 0.074)$ \\
COMBINEX$_{\textit{dyn}}$ &$0.653 (\pm 0.215)$ & $\uline{0.930 (\pm 0.020)}$ & $89.372 (\pm 23.019)$ & $0.591 (\pm 0.037)$ & $0.000 (\pm 0.000)$ \\
COMBINEX$_{\textit{exp}}$ &$\mathbf{0.710 (\pm 0.278)}$ & $0.924 (\pm 0.033)$ & $114.841 (\pm 28.302)$ & $0.717 (\pm 0.037)$ & $0.000 (\pm 0.001)$ \\
COMBINEX$_{\textit{lin}}$ &$0.672 (\pm 0.227)$ & $0.923 (\pm 0.022)$ & $97.356 (\pm 24.248)$ & $\uline{0.531 (\pm 0.009)}$ & $\uline{0.000 (\pm 0.000)}$ \\
COMBINEX$_{\textit{sin}}$ &$\uline{0.682 (\pm 0.277)}$ & $0.920 (\pm 0.035)$ & $123.189 (\pm 23.647)$ & $\mathbf{0.522 (\pm 0.036)}$ & $\mathbf{0.000 (\pm 0.000)}$ \\
EGO &$0.002 (\pm 0.003)$ & $n.d.(\pm n.d.)$ & $n.d.(\pm n.d.)$ & $n.d.(\pm n.d.)$ & $n.d.(\pm n.d.)$ \\
Random Edges &$0.045 (\pm 0.006)$ & $0.476 (\pm 0.101)$ & $28.512 (\pm 3.645)$ & $n.d.(\pm n.d.)$ & $0.406 (\pm 0.019)$ \\
Random Features &$0.477 (\pm 0.071)$ & $0.917 (\pm 0.010)$ & $328.022 (\pm 14.579)$ & $0.987 (\pm 0.002)$ & $n.d.(\pm n.d.)$ \\
CFF &$0.055 (\pm 0.021)$ & $0.748 (\pm 0.176)$ & $\uline{17.583 (\pm 3.239)}$ & $n.d.(\pm n.d.)$ & $0.781 (\pm 0.031)$ \\
CF-GNNExplainer &$0.013 (\pm 0.005)$ & $-0.042 (\pm 0.672)$ & $33.897 (\pm 32.368)$ & $n.d.(\pm n.d.)$ & $0.023 (\pm 0.004)$ \\
UNR &$0.010 (\pm 0.004)$ & $-0.125 (\pm 0.854)$ & $30.570 (\pm 23.719)$ & $n.d.(\pm n.d.)$ & $0.149 (\pm 0.129)$ \\
\hline

 & \mc{5}{\textbf{Dataset: Proteins}} \\ \hline

Combinex$_{\textit{feat}}$ &$0.433 (\pm 0.369)$ & $0.116 (\pm 0.183)$ & $1176.168 (\pm 325.353)$ & $\uline{0.407 (\pm 0.104)}$ & $n.d.(\pm n.d.)$ \\
Combinex$_{\textit{def}}$ &$0.203 (\pm 0.172)$ & $0.148 (\pm 0.383)$ & $1377.694 (\pm 376.297)$ & $\mathbf{0.406 (\pm 0.091)}$ & $\mathbf{0.000 (\pm 0.000)}$ \\
Combinex$_{\textit{dyn}}$ &$0.312 (\pm 0.307)$ & $-0.252 (\pm 0.581)$ & $2928.744 (\pm 2909.224)$ & $0.505 (\pm 0.153)$ & $0.007 (\pm 0.014)$ \\
Combinex$_{\textit{exp}}$ &$\mathbf{0.615 (\pm 0.418)}$ & $0.137 (\pm 0.089)$ & $1370.427 (\pm 767.820)$ & $0.476 (\pm 0.229)$ & $\mathbf{0.000 (\pm 0.000)}$ \\
Combinex$_{\textit{lin}}$ &$0.472 (\pm 0.331)$ & $0.161 (\pm 0.125)$ & $1201.399 (\pm 330.125)$ & $0.444 (\pm 0.128)$ & $\mathbf{0.000 (\pm 0.000)}$ \\
Combinex$_{\textit{sin}}$ &$\uline{0.543 (\pm 0.042)}$ & $0.128 (\pm 0.162)$ & $1495.117 (\pm 725.399)$ & $0.434 (\pm 0.110)$ & $\uline{0.000 (\pm 0.000)}$ \\
EGO &$0.038 (\pm 0.064)$ & $-0.800 (\pm 0.346)$ & $\uline{666.072 (\pm 144.942)}$ & $n.d.(\pm n.d.)$ & $0.916 (\pm 0.038)$ \\
Random Edges &$0.438 (\pm 0.513)$ & $0.128 (\pm 0.124)$ & $832.470 (\pm 30.495)$ & $n.d.(\pm n.d.)$ & $0.363 (\pm 0.018)$ \\
Random Features &$0.147 (\pm 0.195)$ & $\mathbf{0.401 (\pm 0.532)}$ & $5631.871 (\pm 801.806)$ & $0.979 (\pm 0.004)$ & $n.d.(\pm n.d.)$ \\
CFF &$0.203 (\pm 0.048)$ & $\uline{0.265 (\pm 0.199)}$ & $1071.473 (\pm 324.853)$ & $n.d.(\pm n.d.)$ & $0.851 (\pm 0.022)$ \\
CF-GNNExplainer &$0.015 (\pm 0.030)$ & $n.d.(\pm n.d.)$ & $n.d.(\pm n.d.)$ & $n.d.(\pm n.d.)$ & $n.d.(\pm n.d.)$ \\
UNR &$0.060 (\pm 0.049)$ & $0.241 (\pm 0.370)$ & $\mathbf{617.709 (\pm 60.279)}$ & $n.d.(\pm n.d.)$ & $0.086 (\pm 0.014)$ \\
\hline

\etable

\btable{l|ccccc}{Results for Biological datasets: AIDS, Enzymes, and Proteins. The oracles $\Phi$ use ChebConv layers. In \textbf{bold} the best result, the second best result is \underline{underlined}}{tab:bio_cheb_conv}  \hline\hline

& \mc{1}{\textbf{Validity} $\uparrow$} 
& \mc{1}{\textbf{Fidelity} $\uparrow$} 
& \mc{1}{\textbf{Distribution Distance} $\downarrow$} 
& \mc{1}{\textbf{Node Sparsity} $\downarrow$} 
& \mc{1}{\textbf{Edge Sparsity} $\downarrow$} 
\\ 

\textbf{Explainers} 
& \textit{mean($\pm$std)}
& \textit{mean($\pm$std)}
& \textit{mean($\pm$std)}
& \textit{mean($\pm$std)}
& \textit{mean($\pm$std)} \\ \hline

 & \mc{5}{\textbf{Dataset: AIDS}} \\ \hline

Combinex$_{\textit{feat}}$ &$\uline{0.938 (\pm 0.014)}$ & $0.996 (\pm 0.004)$ & $4.553 (\pm 0.035)$ & $0.753 (\pm 0.000)$ & $n.d.(\pm n.d.)$ \\
Combinex$_{\textit{def}}$ &$\mathbf{0.944 (\pm 0.015)}$ & $0.995 (\pm 0.004)$ & $4.520 (\pm 0.035)$ & $0.754 (\pm 0.001)$ & $\mathbf{0.000 (\pm 0.000)}$ \\
Combinex$_{\textit{dyn}}$ &$\mathbf{0.944 (\pm 0.015)}$ & $0.995 (\pm 0.004)$ & $4.585 (\pm 0.065)$ & $\mathbf{0.752 (\pm 0.003)}$ & $\mathbf{0.000 (\pm 0.000)}$ \\
Combinex$_{\textit{exp}}$ &$\mathbf{0.944 (\pm 0.015)}$ & $0.995 (\pm 0.004)$ & $4.589 (\pm 0.106)$ & $\uline{0.752 (\pm 0.003)}$ & $\mathbf{0.000 (\pm 0.000)}$ \\
Combinex$_{\textit{lin}}$ &$\mathbf{0.944 (\pm 0.015)}$ & $0.995 (\pm 0.004)$ & $4.504 (\pm 0.048)$ & $0.753 (\pm 0.000)$ & $\mathbf{0.000 (\pm 0.000)}$ \\
Combinex$_{\textit{sin}}$ &$\mathbf{0.944 (\pm 0.015)}$ & $0.995 (\pm 0.004)$ & $\uline{4.503 (\pm 0.048)}$ & $0.754 (\pm 0.000)$ & $\mathbf{0.000 (\pm 0.000)}$ \\
EGO &$0.000 (\pm 0.000)$ & $n.d.(\pm n.d.)$ & $n.d.(\pm n.d.)$ & $n.d.(\pm n.d.)$ & $n.d.(\pm n.d.)$ \\
Random Edges &$0.000 (\pm 0.000)$ & $n.d.(\pm n.d.)$ & $n.d.(\pm n.d.)$ & $n.d.(\pm n.d.)$ & $n.d.(\pm n.d.)$ \\
Random Features &$0.935 (\pm 0.010)$ & $\uline{0.998 (\pm 0.004)}$ & $28.516 (\pm 0.198)$ & $0.943 (\pm 0.009)$ & $n.d.(\pm n.d.)$ \\
CFF &$0.028 (\pm 0.013)$ & $\mathbf{1.000 (\pm 0.000)}$ & $\mathbf{3.782 (\pm 1.022)}$ & $n.d.(\pm n.d.)$ & $\uline{0.638 (\pm 0.079)}$ \\
CF-GNNExplainer &$0.000 (\pm 0.000)$ & $n.d.(\pm n.d.)$ & $n.d.(\pm n.d.)$ & $n.d.(\pm n.d.)$ & $n.d.(\pm n.d.)$ \\
UNR &$0.000 (\pm 0.000)$ & $n.d.(\pm n.d.)$ & $n.d.(\pm n.d.)$ & $n.d.(\pm n.d.)$ & $n.d.(\pm n.d.)$ \\
\hline

 & \mc{5}{\textbf{Dataset: Enzymes}} \\ \hline
COMBINEX$_{\textit{feat}}$ &$\mathbf{0.412 (\pm 0.006)}$ & $\mathbf{0.951 (\pm 0.001)}$ & $202.277 (\pm 0.703)$ & $0.105 (\pm 0.000)$ & $n.d.(\pm n.d.)$ \\
COMBINEX$_{\textit{def}}$ &$\mathbf{0.412 (\pm 0.006)}$ & $\mathbf{0.951 (\pm 0.001)}$ & $201.834 (\pm 0.711)$ & $0.104 (\pm 0.002)$ & $\mathbf{0.000 (\pm 0.000)}$ \\
COMBINEX$_{\textit{dyn}}$ &$\mathbf{0.412 (\pm 0.006)}$ & $\mathbf{0.951 (\pm 0.001)}$ & $\uline{201.733 (\pm 0.686)}$ & $0.104 (\pm 0.002)$ & $\mathbf{0.000 (\pm 0.000)}$ \\
COMBINEX$_{\textit{exp}}$ &$\mathbf{0.412 (\pm 0.006)}$ & $\mathbf{0.951 (\pm 0.001)}$ & $201.739 (\pm 0.724)$ & $0.109 (\pm 0.001)$ & $\mathbf{0.000 (\pm 0.000)}$ \\
COMBINEX$_{\textit{lin}}$ &$\mathbf{0.412 (\pm 0.006)}$ & $\mathbf{0.951 (\pm 0.001)}$ & $201.741 (\pm 0.712)$ & $\mathbf{0.094 (\pm 0.002)}$ & $\mathbf{0.000 (\pm 0.000)}$ \\
COMBINEX$_{\textit{sin}}$ &$\mathbf{0.412 (\pm 0.006)}$ & $\mathbf{0.951 (\pm 0.001)}$ & $201.742 (\pm 0.709)$ & $\uline{0.095 (\pm 0.002)}$ & $\mathbf{0.000 (\pm 0.000)}$ \\
EGO &$0.000 (\pm 0.000)$ & $n.d.(\pm n.d.)$ & $n.d.(\pm n.d.)$ & $n.d.(\pm n.d.)$ & $n.d.(\pm n.d.)$ \\
Random Edges &$0.000 (\pm 0.000)$ & $n.d.(\pm n.d.)$ & $n.d.(\pm n.d.)$ & $n.d.(\pm n.d.)$ & $n.d.(\pm n.d.)$ \\
Random Features &$\mathbf{0.412 (\pm 0.006)}$ & $\mathbf{0.951 (\pm 0.001)}$ & $308.878 (\pm 5.147)$ & $0.987 (\pm 0.003)$ & $n.d.(\pm n.d.)$ \\
CFF &$\uline{0.098 (\pm 0.018)}$ & $\uline{0.934 (\pm 0.089)}$ & $\mathbf{18.847 (\pm 6.647)}$ & $n.d.(\pm n.d.)$ & $\uline{0.762 (\pm 0.023)}$ \\
CF-GNNExplainer &$0.000 (\pm 0.000)$ & $n.d.(\pm n.d.)$ & $n.d.(\pm n.d.)$ & $n.d.(\pm n.d.)$ & $n.d.(\pm n.d.)$ \\
UNR &$0.000 (\pm 0.000)$ & $n.d.(\pm n.d.)$ & $n.d.(\pm n.d.)$ & $n.d.(\pm n.d.)$ & $n.d.(\pm n.d.)$ \\
\hline

 & \mc{5}{\textbf{Dataset: Proteins}} \\ \hline

COMBINEX$_{\textit{feat}}$ &$\mathbf{0.390 (\pm 0.019)}$ & $\mathbf{0.885 (\pm 0.046)}$ & $\uline{859.827 (\pm 3.031)}$ & $\mathbf{0.339 (\pm 0.000)}$ & $n.d.(\pm n.d.)$ \\
COMBINEX$_{\textit{def}}$ &$\mathbf{0.390 (\pm 0.019)}$ & $\mathbf{0.885 (\pm 0.046)}$ & $\uline{859.827 (\pm 3.031)}$ & $\mathbf{0.339 (\pm 0.000)}$ & $\mathbf{0.000 (\pm 0.000)}$ \\
COMBINEX$_{\textit{dyn}}$ &$\mathbf{0.390 (\pm 0.019)}$ & $\mathbf{0.885 (\pm 0.046)}$ & $\uline{859.827 (\pm 3.031)}$ & $\mathbf{0.339 (\pm 0.000)}$ & $\mathbf{0.000 (\pm 0.000)}$ \\
COMBINEX$_{\textit{exp}}$ &$\mathbf{0.390 (\pm 0.019)}$ & $\mathbf{0.885 (\pm 0.046)}$ & $\uline{859.827 (\pm 3.031)}$ & $\mathbf{0.339 (\pm 0.000)}$ & $\mathbf{0.000 (\pm 0.000)}$ \\
COMBINEX$_{\textit{lin}}$ &$\mathbf{0.390 (\pm 0.019)}$ & $\mathbf{0.885 (\pm 0.046)}$ & $\uline{859.827 (\pm 3.031)}$ & $\mathbf{0.339 (\pm 0.000)}$ & $\mathbf{0.000 (\pm 0.000)}$ \\
COMBINEX$_{\textit{sin}}$ &$\mathbf{0.390 (\pm 0.019)}$ & $\mathbf{0.885 (\pm 0.046)}$ & $\uline{859.827 (\pm 3.031)}$ & $\mathbf{0.339 (\pm 0.000)}$ & $\mathbf{0.000 (\pm 0.000)}$ \\
EGO &$0.000 (\pm 0.000)$ & $n.d.(\pm n.d.)$ & $n.d.(\pm n.d.)$ & $n.d.(\pm n.d.)$ & $n.d.(\pm n.d.)$ \\
Random Edges &$0.000 (\pm 0.000)$ & $n.d.(\pm n.d.)$ & $n.d.(\pm n.d.)$ & $n.d.(\pm n.d.)$ & $n.d.(\pm n.d.)$ \\
Random Features &$\mathbf{0.390 (\pm 0.019)}$ & $\mathbf{0.885 (\pm 0.046)}$ & $5361.353 (\pm 205.403)$ & $\uline{0.980 (\pm 0.000)}$ & $n.d.(\pm n.d.)$ \\
CFF &$\uline{0.038 (\pm 0.017)}$ & $\uline{-0.019 (\pm 0.432)}$ & $\mathbf{524.514 (\pm 173.529)}$ & $n.d.(\pm n.d.)$ & $\uline{0.768 (\pm 0.050)}$ \\
CF-GNNExplainer &$0.000 (\pm 0.000)$ & $n.d.(\pm n.d.)$ & $n.d.(\pm n.d.)$ & $n.d.(\pm n.d.)$ & $n.d.(\pm n.d.)$ \\
UNR &$0.000 (\pm 0.000)$ & $n.d.(\pm n.d.)$ & $n.d.(\pm n.d.)$ & $n.d.(\pm n.d.)$ & $n.d.(\pm n.d.)$ \\

\hline

\etable

\end{document}